\def\eqref#1{equation~\ref{#1}}
\def\1{\bm{1}}
\newcommand{\diff}{\mathrm{d}}
\def\vzero{{\bm{0}}}
\def\mI{{\bm{I}}}
\DeclareMathAlphabet{\mathsfit}{\encodingdefault}{\sfdefault}{m}{sl}
\SetMathAlphabet{\mathsfit}{bold}{\encodingdefault}{\sfdefault}{bx}{n}
\def\sR{{\mathbb{R}}}
\newcommand{\E}{\mathbb{E}}
\newcommand{\R}{\mathbb{R}}
\newtheorem{theorem}{Theorem}[section]
\newtheorem{lemma}[theorem]{Lemma}
\newtheorem{definition}[theorem]{Definition}
\newtheorem{remark}[theorem]{Remark}
\title{Is $L^2$ Physics-Informed Loss Always Suitable for Training Physics-Informed Neural Network?}
\author{% 
Chuwei Wang$^{1*}$,\ Shanda Li$^{2,5}$\thanks{Equal contribution.}\ , \textbf{Di He$^{3\dagger}$}, 
\ \textbf{Liwei Wang$^{3,4}$}\thanks{Correspondence to: Liwei Wang <\texttt{wanglw@pku.edu.cn}> and Di He <\texttt{dihe@pku.edu.cn}>.}\ \\
$^1$School of Mathematical Sciences, Peking University \\
$^2$Machine Learning Department, School of Computer Science, Carnegie Mellon University\\
\textls[-39]{\spaceskip=0.145em\relax $^3$National Key Laboratory of General Artificial Intelligence,}\\
{ 
School of Intelligence Science and Technology, Peking University}\\
$^4$ Center for Data Science, Peking University $\ \ $ 
$^5$ Zhejiang Lab\\
\texttt{chuwei.wang@pku.edu.cn, shandal@cs.cmu.edu}\\
\texttt{dihe@pku.edu.cn, wanglw@pku.edu.cn} \\
}
\begin{document}

\maketitle

\begin{abstract}
The Physics-Informed Neural Network (PINN) approach is a new and promising way to solve partial differential equations using deep learning. The $L^2$ Physics-Informed Loss is the de-facto standard in training Physics-Informed Neural Networks. In this paper, we challenge this common practice by investigating the relationship between the loss function and the approximation quality of the learned solution. In particular, we leverage the concept of stability in the literature of partial differential equation to study the asymptotic behavior of the learned solution as the loss approaches zero. With this concept, we study an important class of high-dimensional non-linear PDEs in optimal control, the Hamilton-Jacobi-Bellman (HJB) Equation, and prove that for general $L^p$ Physics-Informed Loss, a wide class of HJB equation is stable only if $p$ is sufficiently large. Therefore, the commonly used $L^2$ loss is not suitable for training PINN on those equations, while $L^{\infty}$ loss is a better choice. Based on the theoretical insight, we develop a novel PINN training algorithm to minimize the $L^{\infty}$ loss for HJB equations which is in a similar spirit to adversarial training. The effectiveness of the proposed algorithm is empirically demonstrated through experiments. Our code is released at \texttt{https://github.com/LithiumDA/L\_inf-PINN}.
\end{abstract}

\section{Introduction}
Recently, with the explosive growth of available data and computational resources, there have been growing interests in developing machine learning approaches to solve partial differential equations (PDEs)~\cite{khoo2017solving,han2018solving,sirignano2018dgm,raissi2019physics}. One seminal work in this direction is the Physics-Informed Neural Network (PINN) approach~\cite{raissi2019physics} which parameterizes the PDE's solution as a neural network. By defining differentiable loss functionals that measure how well the model fits the PDE and boundary conditions, the network parameters can be efficiently optimized using gradient-based approaches. $L^2$ distance is one of the most popularly used measures, which calculates the $L^2$ norm of the PDE and boundary residual on the domain and boundary, respectively. Previous works demonstrated that PINN could solve a wide range of PDE problems using the $L^2$ Physics-Informed Loss, such as Poisson equation, Burgers' equation, and Navier-Stokes equation \cite{raissi2019physics,de2022error}.

Although previous works empirically demonstrated promising results using $L^2$ Physics-Informed Loss, we argue the plausibility of using this loss for (high-dimensional) non-linear PDE problems. We know the trivial fact that the learned solution will equal the exact solution when its $L^2$ loss equals zero. However, the quality of a learned solution with a small but non-zero loss, which is a more realistic scenario in practice, remains unknown to have any approximation guarantees. In this work, we aim at answering a fundamental question:
\begin{center}
    \emph{Can we guarantee that a learned solution with a small Physics-Informed Loss always corresponds to a good approximator of the exact solution?}
\end{center}
To thoroughly investigate the problem,
we advocate analyzing the stability of PDE \cite{evans1998partial} in the PINN framework. Stability characterizes the asymptotic behavior of the distance between the learned solution and the exact solution when the Physics-Informed Loss approaches zero. If the PDE is not stable with respect to certain loss functions, we may not obtain good approximate solutions by minimizing the loss. To show the strength of the theory, we perform a comprehensive study on the stability of an important class of high-dimensional non-linear PDEs in optimal control, the Hamilton-Jacobi-Bellman (HJB) equation, which establishes a necessary and sufficient condition for a control's optimality with regard to the cost function. Interestingly, we prove that for general $L^p$ Physics-Informed Loss, the HJB equation is stable only if $p$ is sufficiently large. This finding suggests that the most widely used $L^2$ loss may not be suitable for training PINN on HJB equations as the learned solution can be arbitrarily distant from the exact solution. Empirical observation verifies the theoretical results.

We further show our theory can serve as a principled way to design loss functions for training PINN. For the high-dimensional HJB equation we target in the paper, the theoretical result suggests that $L^{\infty}$ loss may be a better choice to learn approximate solutions. Motivated by this insight, we propose a new algorithm for training PINN, which adopts a min-max optimization procedure to minimize the $L^{\infty}$ loss. Our approach resembles the well-known adversarial training framework. In each iteration, we first fix the network parameters and learn adversarial data points to approximate $L^{\infty}$ loss, and then optimize the network parameters to minimize the loss. When the training finishes, the learned network will converge to a solution with small $L^{\infty}$ losses and is close to the exact solution. We conduct experiments to demonstrate the effectiveness of the proposed algorithm. All empirical results show that our method can indeed learn accurate solutions for HJB equations and is much better than several baseline methods.

The contribution of the paper is summarized as follows.
\begin{itemize}
    \item We make the first step towards theoretically studying the loss design in PINN, and formally introduce the concept of stability in the literature of PDE to characterize the quality of a learned solution with small but non-zero Physics-Informed Loss. 
    
    \item We provide rigorous investigations on an important class of high-dimensional non-linear PDEs in optimal control, the HJB equation. Our results suggest that the widely used $L^2$ loss is not a suitable choice for training PINN on HJB equations.
    
    \item Based on the theoretical insight, we develop a novel PINN training algorithm to minimize the $L^{\infty}$ loss for HJB equations. We empirically demonstrate that the proposed algorithm can significant improve the accuracy of PINN in solving the optimal control problems.
\end{itemize}

% The rest part of the paper is organized as follows. In Section \ref{sec:pre}, we provide basic background. In Section \ref{sec:theory}, we present our theoretical result 
\section{Related Works}
\label{sec:related}
Physics-Informed Neural Network approaches \cite{sirignano2018dgm, raissi2019physics} learn to find parametric solutions to satisfy equations and boundary conditions with gradient descent. There has been a notable scarcity of papers that rigorously justify why PINNs work. Important works include \cite{shin2020error}, which prove the convergence of PINN for second-order elliptic and parabolic equations. In \cite{lu2021machine}, the authors study the statistical limit of learning a PDE solution from sampled observations for elliptic equations. In \cite{DBLP:journals/corr/abs-2203-09346}, the convergence of PINN with $L^2$ loss is established for Navier-Stokes equations. At the same time, several works observed different failure modes for training PINN in other PDE problems. In \cite{krishnapriyan2021characterizing}, researchers discover that PINN sometimes fails to learn accurate solutions to a class of convection and reaction equations. Their analysis shows that this may be attributed to the complicated loss landscape. \cite{wang2021understanding} observed PINN failed to learn the Helmholtz equation due to the incommensurability between PDE and boundary losses.

In this work, we mainly experiment with the Hamilton-Jacobi-Bellman (HJB) equation in optimal control. Previously, there were several works aiming at solving the HJB equation using deep learning methods \cite{halperin2021distributional,han2018solving,pereira2019learning,yu2020backward,pereira2020feynman,beck2021deep,pham2021neural,davey2022deep}. \cite{han2018solving} is among the first to leverage neural networks to solve HJB equations. In particular, \cite{han2018solving} targets constructing an approximation to a solution value $u$ at $T=0$, which is further transformed into a backward stochastic differential equation and learned by neural networks. The main difference between \cite{han2018solving} and ours is that \cite{han2018solving} only learns the solution on a pre-defined time frame, while with our method, the obtained solution can be evaluated for any time frame. Recently, \cite{halperin2021distributional} tackled the offline reinforcement learning problem and developed a soft relaxation of the classical HJB equation, which can be learned using offline behavior data. The main difference between \cite{halperin2021distributional} and ours is that no additional data is required in our setting.  

Stability is one of the most fundamental concepts in studying the well-posedness of PDE problems. Formally speaking, it characterizes the behavior of the solution to a PDE problem when a small perturbation modifies the operator, initial condition, boundary condition, or force term. We say the equation is stable if the solution of the perturbed PDE converges to the exact solution as the perturbations approach zero \cite{evans1998partial}.
The problem regarding whether a PDE is stable has been intensively studied \cite{evans1998partial,lieberman1996second,gilbarg1977elliptic} in literature. 
%In general, more significance is attatched to the qualitative properties like convergence, %than the regularity conditions for the perturbations under which the properties hold, whereas we stress more on the latter.
%while 
There are also some works studying how (regularity) conditions affect stability. \cite{lin2011inviscid} and \cite{deng2018long} investigate in which topology Couette Flow is asymptotic stable. \cite{christ2003asymptotics} answers in which Sobolev space defocusing nonlinear Schrodinger equation, real Korteweg-de Vries (KdV), and modified KdV are locally well-posed. Our main focus is akin to the latter works but is settled in the machine learning framework.
\section{Preliminary}
\label{sec:pre}
In this section, we introduce basic background on Physics-Informed Neural Networks and stochastic optimal control problems. Without loss of generality, we formulate any partial differential equation as: 
\begin{equation}
\label{eq:general-pde}
    \begin{cases}
    \mathcal{L}u(x)=\varphi(x)& \quad x\in\Omega\subset\R^n\\
    \mathcal{B}u(x)=\psi(x)& \quad x\in\partial\Omega,
    \end{cases}
\end{equation}

where $\mathcal{L}$ is the partial differential operator and $\mathcal{B}$ is the boundary condition. We use $x$ to denote the spatiotemporal-dependent variable, and use $\Omega$ and $\partial\Omega$ to denote the domain and boundary. 

\paragraph{Physics-Informed Neural Networks (PINN)} PINN~\cite{raissi2019physics} is a popular choice to learn the function $u(x)$ automatically by minimizing the loss function induced by the PDE (\ref{eq:general-pde}). To be concrete, given $p\in(1,+\infty)$, we define the $L^p$ Physics-Informed Loss as
\begin{align}\label{eq:pinn-loss}
    &\ell_{\Omega,p}(u)=\| \mathcal{L}u(x)-\varphi(x)\|_{L^p(\Omega)}^p, \\
    &\ell_{\partial\Omega,p}(u)=\| \mathcal{B}u(x)-\psi(x)\|_{L^p(\partial\Omega)}^p\label{eq:pinn-loss2}.
\end{align}
The loss term $\ell_{\Omega,p}(u)$ in Eq. (\ref{eq:pinn-loss}) corresponds to the PDE residual, which evaluates how $u(x)$ fits the partial differential equation on $\Omega$; and $\ell_{\partial\Omega,p}(u)$ in Eq. (\ref{eq:pinn-loss2}) corresponds to the boundary residual, which measures how well $u(x)$ satisfies the boundary condition on $\partial\Omega$. $L^p$ denotes $p$-norm, where $p$ is usually set to 2, leading to a ``mean squared error'' interpretation of the loss function \cite{sirignano2018dgm, raissi2019physics}. The goal is to find $u^*$ that minimizes a linear combination of the two losses defined above. The function $u(x)$ is usually parameterized by neural network $u_{\theta}(x)$ with parameter $\theta\in\Theta$. To find $\theta^*$ efficiently, PINN approaches use gradient-based optimization methods. Note that computing the loss involves integrals over $\Omega$ and $\partial\Omega$. Thus, Monte Carlo methods are commonly used to approximate $\ell_{\Omega,p}(u)$ and $\ell_{\partial\Omega,p}(u)$ in practice.

%\subsection{Stochastic Optimal Control and HJB equation}
%\label{subsec:hjb}
\paragraph{Stochastic Optimal Control} Stochastic control~\cite{fleming2012deterministic, bertsekas1996stochastic} is an important sub-field in optimal control theory. In stochastic control, the state function $\{X_t\}_{0\leq t \leq T}$ is a stochastic process, where $T$ is the time horizon of the control problem. The evolution of the state function is governed by the following stochastic differential equation:
\begin{equation}
\label{eq:general-control-main}
\left\{
\begin{array}{ll}
    \diff X_s = m(s,X_s)\diff s+\sigma \diff W_s & s\in[t,T] \\
    X_t=x
\end{array}
\right.,
\end{equation}
where $m:[t, T]\times\mathbb{R}^n\to \mathbb{R}^n$ is the control function and $\{W_s\}$ is a standard $n$-dimensional Brownian motion.

Given a control function $m$, its total cost is defined as $J_{x,t}(m)=\E\int_t^T r(X_s,m,s)\diff s+ g(X_T)$,
where $r:\mathbb{R}^n\times\mathbb{R}^n\times[0,T]\to\mathbb{R}$ measures the cost rate during the process and $g:\mathbb{R}^n\to\mathbb{R}$ measures the final cost at the terminal state. The expectation is taken over the randomness of the trajectories. 

We are interested in finding a control function that minimizes the total cost for a given initial state. Formally speaking, we define the \textit{value function} of the control problem (\ref{eq:general-control-main}) as $u(x,t)=\min\limits_{m\in \mathcal{M}}J_{x,t}(m)$, where $\mathcal{M}$ denotes the set of possible control functions that we take into consideration. It can be obtained that the value function will follow a particular partial differential equation as stated below.

\begin{definition}[\cite{yong1999stochastic}]
The value function $u(x,t)$ is the unique solution to the following partial differential equation, which is called \textbf{Hamilton-Jacobi-Bellman Equation}:
\begin{equation}
\label{eq:general-hjb}
\begin{cases}
\partial_t u(x,t)+\frac 1 2 \sigma^2 \Delta u(x,t)+\min\limits_{m\in \mathcal{M}}\left[ r(x,m(t,x),t)+\nabla u\cdot m_t\right]=0\\
u(x,T)=g(x).
\end{cases}
\end{equation}
\end{definition}

Hamilton-Jacobi-Bellman (HJB) equation establishes a necessary and sufficient condition for a control's optimality with regard to the cost functions. It is one of the most important high-dimensional PDEs \cite{kirk2004optimal} in optimal control with tremendous applications in physics \cite{sieniutycz2000hamilton}, biology \cite{li2011inverse}, and finance \cite{pham2009continuous}. Many well-known equations, including Riccati equation, Linear–Quadratic–Gaussian control problem \cite{willems1971least}, Merton's portfolio problem \cite{merton1975optimum} are special cases of HJB equation \cite{wonham1968matrix}.

Conventionally, the solution to the HJB equation, i.e., the value function $u(x,t)$, can be computed using dynamic programming \cite{bellman1966dynamic}. 
However, the computational complexity of dynamic programming will grow exponentially with the dimension of state function.
Considering that the state function in many applications is high-dimensional, solving such HJB equations is notoriously difficult in practice using conventional solvers. As neural networks have shown impressive power in learning high-dimensional functions, it's natural to resort to neural-network-based approaches for solving high-dimensional HJB equations.

\section{Failure Mode of PINN on High-Dimensional Stochastic Optimal Control}
\label{sec:theory}
%In this section, we theoretically show that PINN trained with $L^2$ physic informed loss will fail to learn accurate solutions for stochastic control problems.
%Note that only when this consistency is well-guaranteed, we can rely on PINNs to learn the solution to the PDE problem.

%\subsection{Problem Setup}
%\label{setup}
Note that $u(x)$ is the exact solution to the PDE (\ref{eq:general-pde}) if and only if both loss terms $\ell_{\Omega,p}(u)$ and $\ell_{\partial\Omega,p}(u)$ are zero. However, in practice, we usually can only obtain small but non-zero loss values due to the randomness in the optimization procedure or the capacity of the neural network. In such cases, a natural question arises: whether a learned $u(x)$ with a small loss will correspond to a good approximator to the exact solution $u^*(x)$? Such a property is highly related to the concept \emph{stability} in PDE literature, which can be defined as below in our learning scenario:
\begin{definition}
\label{def_stb}
Suppose $Z_1,Z_2,Z_3$ are three Banach spaces. We say a PDE defined as Eq. (\ref{eq:general-pde}) is $(Z_1,Z_2,Z_3)$-stable, if  $\|u^*(x)-u(x)\|_{Z_3}=O(\|\mathcal{L}u(x)-\varphi(x)\|_{Z_1}+\|\mathcal{B}u(x)-\psi(x)\|_{Z_2})$ as $\|\mathcal{L}u(x)-\varphi(x)\|_{Z_1},\|\mathcal{B}u(x)-\psi(x)\|_{Z_2}\to 0$ for any function $u$.
\end{definition}

% Formally speaking, let $u_1$ be the solution to
% \begin{equation}
% \label{eq:general-pde-turb}
%     \begin{cases}
%     \mathcal{L}u(x)=\varphi(x)+\hat{\varphi}(x)& \quad x\in\Omega\subset\R^n\\
%     \mathcal{B}u(x)=\psi(x)+\hat{\psi}(x)& \quad x\in\partial\Omega,
%     \end{cases}
% \end{equation}
% where $\hat{\varphi}$ and $\hat{\psi}$ refer to small perturbations. Both concepts capture the propoerty that $u_1\to u^*$ as $\hat{\varphi}\to 0, \hat{\psi}\to 0$.

By definition, if a PDE is $(L^2(\Omega), L^2(\partial \Omega), Z)$-stable with a suitable Banach space $Z$, we can minimize the widely used $L^2$ Physics-Informed Losses $\| \mathcal{L}u(x)-\varphi(x)\|_{L^2(\Omega)}^2$ and $\|\mathcal{B}u(x)-\psi(x)\|_{L^2(\partial\Omega)}^2$, and the learned solution is guaranteed to be close to the exact solution when the loss terms approach zero. However, stability is not always an obvious property for PDEs. There are tremendous equations that are unstable, such as the inverse heat equation. Moreover, even if an equation is stable, it is possible that the equation is not $(L^2(\Omega), L^2(\partial \Omega), Z)$-stable, which suggests that the original $L^2$ Physics-Informed Loss might not be a good choice for solving it. We will show later that for control problems, some practical high-dimensional HJB equations are stable but not $(L^2(\Omega), L^2(\partial \Omega), Z)$-stable, and using $L^2$ Physics-Informed Loss will fail to find an approximated solution in practice. 

We consider a class\footnote{The form of cost function we investigate in the paper is representative in optimal control. For example, in financial markets, we often face power-law trading cost in optimal execution problems \cite{forsyth2012optimal,schied2009risk}. The cost function in Linear–Quadratic–Gaussian control and Merton's portfolio model (constant relative risk aversion utility function in \cite{merton1975optimum}) is also of this form. Therefore, we believe our theoretical analysis for this class of HJB equation is relevant for practical applications. } of HJB equations in which the cost rate function is formulated as $r(x,m)= a_1|m_1|^{\alpha_1}+\cdots+a_n|m_n|^{\alpha_n}-\varphi(x,t)$. The corresponding Hamilton-Jacobi-Bellman equation can be reformulated as:
\begin{equation}
\label{eq:hjb-study}
\begin{cases}
\displaystyle{
\mathcal{L}_{\mathrm{HJB}}u:=\partial_t u(x,t)+\frac 1 2 \sigma^2 \Delta u(x,t)-\sum_{i=1}^n A_i |\partial_{x_i}u|^{c_i}=\varphi(x,t)}&\; (x,t)\in\sR^n\times[0,T]\\
\mathcal{B}_{\mathrm{HJB}}u:=u(x,T)=g(x) &\; x\in\sR^n
\end{cases},
\end{equation}
where %$A_i\in(0,+\infty)$ and $c_i\in(1,+\infty)$.
$A_i=(a_i\alpha_i)^{-\frac{1}{\alpha_i-1}}-a_i(a_i\alpha_i)^{-\frac{\alpha_i}{\alpha_i-1}}\in(0,+\infty)$ and $c_i={\frac{\alpha_i}{\alpha_i-1}}\in(1,\infty)$. See Appendix \ref{app:derive-HJB} for the detailed derivation. For a function $f:X\to \mathbb{R}$, where $X$ is a measurable space, we denote by $\mathrm{supp}f$ the support set of $f$, i.e. the closure of ${\{x\in X:f(x)\neq 0\}}$. 

An important concept for analyzing PDEs is the Sobolev space, which is defined as follows:

\begin{definition}
For $m\in \mathbb{N}$, $p\in[1,+\infty)$ and an open set $\Omega\subset\mathbb{R}^n$, 
the Sobolev space $W^{m,p}(\Omega)$ is defined as $\{f(x)\in L^p(\Omega):D^{\alpha}f\in L^p(\Omega),\forall \alpha\in \mathbb{N}^n,|\alpha|\leq m\}$. The function space $W^{m,p}(\Omega)$ is equipped with Sobolev norm, which is defined as $\|f\|_{W^{m,p}(\Omega)}=\left(\sum\limits_{|\alpha|\leq m}\|D^{\alpha}f\|^p_{L^p(\Omega)}\right)^{\frac 1 p}$.
\end{definition}

The definition above can be extended to functions defined on a spatiotemporal domain $Q\subseteq \sR^n \times [0,T]$. With a slight abuse of notation, we define $W^{m,p}(Q)=\{f(x,t)\in L^p(Q):D^{\alpha}f\in L^p(Q),\forall \alpha\in \mathbb{N}^n,|\alpha|\leq m\}$,
where the differential $D^{\alpha}$ is only operated over spatial variable $x$. The norm $\|\cdot\|_{W^{m,p}(Q)}$ can also be defined accordingly. % We will use simplified notations $\|f\|_p$ and $\|f\|_{m,p}$ for $L^p$ norm and $W^{m,p}$ norm when the domain is the whole space ($\mathbb{R}^n$ or $\mathbb{R}^n\times[0,T]$).

\paragraph{Stability of the HJB Equation}
%In this sub-section, we study the effectiveness of $L^2$ Physics Informed physics-informed in solving a broad class of HJB equation we present in sub-section \ref{setup} .
We present our main theoretical result which characterizes the stability of the HJB equation (Eq. (\ref{eq:hjb-study})). In particular, we show that the HJB equation is $(L^p(\sR^n\times[0,T]),L^q(\sR^n),W^{1,r}(\sR^n\times[0,T]))$-stable when $p$, $q$ and $r$ satisfies certain conditions.
We take the Banach space $Z_3$ in Definition \ref{def_stb} as $W^{1,r}$ here because it captures the properties of both the value and the derivatives of a function, but $L^p$ spaces do not. However, as could be seen from Appendix \ref{app:derive-HJB}, for optimal control problems, it is essential to obtain an accurate approximator for both the value and the gradient of the value function $u$ (the solution of  (Eq. (\ref{eq:hjb-study})). Thus, it is appropriate to analyze the quality of the approximate solution in $W^{1,r}$ space.

%Our main result, which characterizes the stability of HJB equation (\ref{eq:hjb-study}), is stated as follow. %(Definitions and notations for Sobolev norm and other concepts could be found in appendix \ref{apdxA})

\begin{theorem}
\label{thm:stb0}
For $p,q\geq 1$, let $r_0=\frac{(n+2)q}{n+q}$. Assume the following inequalities hold for $p,q$ and $r_0$:
\begin{equation}
\label{eq:main-thm-cond}
    p\geq \max\left\{2, \left(1-\frac{1}{\bar{c}}\right)n\right\};~
    q> \frac{(\bar{c}-1)n^2}{(2-\bar{c})n+2};~
    \frac{1}{r_0}\geq \frac{1}{p}-\frac{1}{n},
\end{equation}
where $\bar{c}=\max\limits_{1\leq i\leq n} c_i$ in Eq. (\ref{eq:hjb-study}). Then for any $r\in[1,r_0)$ and any bounded open set $Q\subset \mathbb{R}^n\times[0,T]$, Eq. (\ref{eq:hjb-study}) is $(L^p(\mathbb{R}^n\times[0,T]),L^q(\mathbb{R}^n),W^{1,r}(Q))$-stable for $\bar{c}\leq 2$.
\end{theorem}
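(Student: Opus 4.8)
The plan is to reduce the stability statement for the HJB equation to a quantitative well-posedness estimate for a \emph{linear} parabolic problem, by treating the nonlinear term as a perturbation. Concretely, suppose $u$ is an arbitrary test function with small residuals: write $f := \mathcal{L}_{\mathrm{HJB}}u - \varphi$ on $\mathbb{R}^n\times[0,T]$ and $h := u(\cdot,T) - g$ on $\mathbb{R}^n$, and let $u^*$ be the exact solution. Set $w := u - u^*$. Subtracting the two equations, $w$ solves a terminal-value problem of the form $\partial_t w + \tfrac12\sigma^2\Delta w = f + \sum_{i=1}^n A_i\bigl(|\partial_{x_i}u|^{c_i} - |\partial_{x_i}u^*|^{c_i}\bigr)$ with terminal data $w(\cdot,T) = h$. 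The first step is therefore to record $L^p$-type parabolic regularity (maximal regularity / Calder\'on--Zygmund estimates for the heat semigroup, as in standard references such as \cite{evans1998partial,lieberman1996second}): a solution to the linear heat equation with right-hand side in $L^p$ and terminal data in a suitable trace space lies in the anisotropic Sobolev space $W^{2,1}_p$, hence in $W^{1,r}$ for $r$ up to the parabolic Sobolev-embedding exponent. This is exactly where the exponent $r_0 = \tfrac{(n+2)q}{n+q}$ and the constraint $\tfrac1{r_0}\ge \tfrac1p - \tfrac1n$ come from: they are the Gagliardo--Nirenberg--Sobolev thresholds that let $\|w\|_{W^{1,r}(Q)}$ on a bounded $Q$ be controlled by $\|f\|_{L^p}$, $\|h\|_{L^q}$, plus a term involving the nonlinear difference.

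The second step is to absorb the nonlinear difference. Using the elementary inequality $\bigl||a|^{c} - |b|^{c}\bigr| \le c\,(|a|^{c-1}+|b|^{c-1})|a-b|$ and H\"older's inequality, one bounds $\sum_i A_i\||\partial_{x_i}u|^{c_i}-|\partial_{x_i}u^*|^{c_i}\|_{L^p}$ by a constant (depending on fixed bounds for $u^*$ and, crucially, on an a priori bound for $u$ that follows from the smallness of its residuals — one first argues $u$ cannot be far from $u^*$ in a weaker norm) times $\|\nabla w\|$ in some Lebesgue/Sobolev norm that, by interpolation and the condition $\bar c \le 2$ together with $q > \tfrac{(\bar c - 1)n^2}{(2-\bar c)n + 2}$, is \emph{weaker} than the $W^{1,r}$ norm already controlled — so it can be reabsorbed into the left-hand side, or handled by a Gronwall-type iteration in time. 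The interplay $\bar c\le 2$ is what makes the exponent $c-1 \le 1$ manageable and keeps the nonlinear term subcritical relative to the available regularity; the lower bound on $q$ is precisely the margin needed for the interpolation to land strictly below $r_0$. Combining steps one and two gives $\|w\|_{W^{1,r}(Q)} = O(\|f\|_{L^p} + \|h\|_{L^q})$, which is the claimed stability.

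The main obstacle I expect is step two: making the reabsorption of the nonlinear term fully rigorous on the \emph{unbounded} spatial domain $\mathbb{R}^n$ while the target norm lives on a \emph{bounded} $Q$. One must either localize carefully (cut-off functions generate commutator terms involving lower-order derivatives of $w$, which then need their own control from the global $L^p$ residual bound), or exploit decay of the exact solution and its gradient at infinity. A secondary technical point is establishing the a priori bound on $u$ itself: one needs that $\|\nabla u\|$ in the relevant norm is finite and uniformly controlled as the residuals shrink, which likely requires first proving a coarse stability estimate in a lower-regularity space and then bootstrapping. Getting the chain of Sobolev exponents to close — verifying that all three displayed inequalities in \eqref{eq:main-thm-cond} are exactly the ones that make the interpolation/embedding/absorption consistent — is bookkeeping-heavy but, given the hypotheses, should go through.
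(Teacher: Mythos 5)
Your high-level skeleton — subtract the two equations, treat $\sum_i A_i(|\partial_i u|^{c_i}-|\partial_i u^*|^{c_i})$ as a source term, invoke parabolic $L^p$ maximal regularity (Lemma~\ref{Lemma A.4}/\ref{Lemma A.5} in the paper), and match Sobolev exponents via Gagliardo--Nirenberg — is indeed the same skeleton as the paper, and your reading of the three exponent conditions in~\eqref{eq:main-thm-cond} is accurate. The paper even handles your worry about localization the way you suggest: it first proves the estimate under the extra hypothesis that $\mathrm{supp}(u-u^*)$ is compact (Lemma~\ref{Lemma C.2.}, Lemma~\ref{Lemma C.3.}), and then in Theorem~\ref{thm_stb} reduces the general case to that one by modifying $u$ outside a large parabolic cylinder so that the difference is compactly supported.

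However, there is a genuine gap in your step two, and it is exactly the point you flag as ``likely requires first proving a coarse stability estimate in a lower-regularity space and then bootstrapping.'' That coarse estimate does not come for free, and the paper resolves it by a mechanism you have not supplied. After the elementary decomposition of the nonlinear difference (the paper uses Lemma~\ref{Lemma C.1.}, a variant of the inequality you propose) and Sobolev/H\"older estimates, one does not get an inequality that can simply be reabsorbed or Gronwalled; one gets a \emph{polynomial} inequality of the form
$$K_0 M - \sum_i K_i M^{c_i} - \sum_{i,j} K_{ij} M^{t_{ij}} \le \|f\|_p, \qquad M:=\|u-u^*\|_{W^{2,p}},$$
with superlinear terms in $M$ (exponents $c_i>1$ and various $t_{ij}\in(1,c_i)$). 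This inequality is satisfied both by small $M$ and by all large enough $M$, so the smallness of $\|f\|_p$ alone does not force $M$ small. The paper closes this by a topological argument: $\tilde{\mathcal L}_{\mathrm{HJB}}$ is shown to be $C^1$ with invertible Fr\'echet derivative, so by the inverse function theorem in Banach spaces (Lemma~\ref{Lemma A.6}) it is a local diffeomorphism near $u^*$, and it is also an injection on $W^{2,p}_0(Q_T)$; together these rule out the large branch of $M$ when $\|f\|_p$ is small. Your proposal contains no substitute for this step. Without it, the absorption argument cannot be made rigorous: the ``a priori bound on $u$'' you allude to is precisely what is missing, and neither the Lipschitz-type inequality $||a|^c-|b|^c|\le c(|a|^{c-1}+|b|^{c-1})|a-b|$ nor a Gronwall iteration in time supplies it, since the coefficient multiplying $|a-b|$ depends on $u$ itself, whose gradient is not yet controlled. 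This is the conceptual core of the proof, and you would need to add it before the rest of the bookkeeping can go through.
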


The proof of Theorem \ref{thm:stb0} can be found in Appendix \ref{app:proof-upper-bound} and an improved theorem with relaxed dependency on $\bar{c}$ can be found in Appendix \ref{new-thm}.
Intuitively, Theorem \ref{thm:stb0} states that $(L^p,L^q,W^{1,r})$-stability of Eq. (\ref{eq:hjb-study}) can be achieved when $p,q=\Omega(n)$. We further show that this linear dependency on $n$ cannot be relaxed in the following theorem:

\begin{theorem}
\label{thm:lower-bound}
There exists an instance of Eq. (\ref{eq:hjb-study}), whose exact solution is $u^*$, such that for any $\varepsilon>0,A>0,r\geq 1,m\in\mathbb{N}$ and $p\in\left[1,\frac n 4\right]$, there exists a function $u\in C^{\infty}(\mathbb{R}^n\times(0,T])$ which satisfies the following conditions:
\begin{itemize}
    \item $\|\mathcal{L}_{\mathrm{HJB}}u-\varphi\|_{L^p(\sR^n\times[0,T])}<\varepsilon$,  $\mathcal{B}_{\mathrm{HJB}}u=\mathcal{B}_{\mathrm{HJB}}u^{*}$, and $\mathrm{supp}(u-u^{*})$ is compact, where $\mathcal{L}_{\mathrm{HJB}}$ and $\mathcal{B}_{\mathrm{HJB}}$ are defined in Eq. (\ref{eq:hjb-study}).
    \item ${\|u-u^{*}\|_{W^{m,r}(\sR^n\times[0,T])}>A}.$
\end{itemize}
% which has the same regularity as $u^{*}$ (i.e. if )
% $\|\mathcal{L}_{\mathrm{HJB}}u\|_q<\epsilon$, $\mathcal{B}_{\mathrm{HJB}}u=\mathcal{B}_{\mathrm{HJB}}u^{*}$ and $\mathrm{supp}(u-u^{*})$ is compact, while $\|u-u^{*}\|_{m,r}>A$.
\end{theorem}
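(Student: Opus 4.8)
I would take $u^{*}\equiv 0$, i.e.\ the value function of the trivial control problem with zero running and terminal cost (the identically--zero control is optimal), so that $u^{*}$ is indeed the unique solution, $\varphi\equiv 0$, and $\mathcal{B}_{\mathrm{HJB}}u^{*}\equiv 0$. I fix all $\alpha_i$ equal and so large that every exponent $c_i=\tfrac{\alpha_i}{\alpha_i-1}$ equals a common value $c\in\bigl(1,\tfrac{n+2}{n+1}\bigr)$; this is well inside $(1,2)$ and, since $n/p\ge 4$, also forces $n/p-c>0$. The constants $A_i\in(0,\infty)$ and $\sigma>0$ are then fixed. Because $u^{*}=0$ we have the exact identity $\mathcal{L}_{\mathrm{HJB}}u-\varphi=\partial_t u+\tfrac12\sigma^2\Delta u-\sum_i A_i|\partial_{x_i}u|^{c}$, so it suffices to produce, for each admissible $(\varepsilon,A,r,m,p)$, a function $\phi\in C^{\infty}(\sR^n\times(0,T])$ with $\mathrm{supp}\,\phi$ compact, $\phi(\cdot,T)=0$, $\|\mathcal{L}_{\mathrm{HJB}}\phi\|_{L^p(\sR^n\times[0,T])}<\varepsilon$, and $\|\phi\|_{W^{m,r}(\sR^n\times[0,T])}>A$, and set $u=\phi$.

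\textbf{The perturbation and its scaling.} Fix $\psi\in C_c^{\infty}(B_1(0))$ and $\zeta\in C_c^{\infty}((0,\infty))$ with $\mathrm{supp}\,\zeta\subset[1,2]$ and $\zeta\not\equiv 0$, and consider the family $\phi(x,t)=H\,\psi(x/\lambda)\,\zeta\bigl((T-t)/\tau\bigr)$ for parameters $H,\lambda,\tau>0$; then $\phi(\cdot,T)=0$ and $\mathrm{supp}\,\phi$ is compact once $2\tau<T$. A direct computation gives, with constants depending only on $\psi,\zeta,p,r,T$,
\begin{align*}
\|\partial_t\phi\|_{L^p}\sim H\,\tau^{\frac1p-1}\lambda^{\frac np},\qquad
\|\Delta\phi\|_{L^p}\sim H\,\lambda^{\frac np-2}\tau^{\frac1p},\qquad
\bigl\||\partial_{x_i}\phi|^{c}\bigr\|_{L^p}\sim H^{c}\lambda^{\frac np-c}\tau^{\frac1p},
\end{align*}
and $\|D^{\alpha}_x\phi\|_{L^r}\sim H\,\lambda^{\frac nr-|\alpha|}\tau^{\frac1r}$ for $|\alpha|\le m$. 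Choosing $\tau=\lambda^{2}$ balances the first two residual terms at $H\lambda^{(n+2)/p-2}$, and the restriction $c<\tfrac{n+2}{n+1}$ is exactly what keeps the nonlinear term no larger even when $H\to\infty$; meanwhile $\|\phi\|_{W^{m,r}}\gtrsim H\lambda^{(n+2)/r-m}$ by taking $|\alpha|=m$.

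\textbf{Two rescaling limits.} The crucial point is that $p\le n/4$ forces $\tfrac{n+2}{p}-2\ge 2>0$, i.e.\ the residual exponent of $\lambda$ is \emph{strictly positive}. In the \emph{concentration} regime ($\lambda\to 0$, $\tau=\lambda^{2}$, $H=\lambda^{-\beta}$) the residual is $\lesssim\lambda^{(n+2)/p-2-\beta}\to 0$ as long as $\beta<\tfrac{n+2}{p}-2$, while $\|\phi\|_{W^{m,r}}\gtrsim\lambda^{(n+2)/r-m-\beta}\to\infty$ as long as $\beta>\tfrac{n+2}{r}-m$; such a $\beta\ge 0$ exists precisely when $\tfrac1r<\tfrac1p+\tfrac{m-2}{n+2}$. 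In the complementary \emph{spreading} regime ($\lambda\to\infty$, $\tau$ fixed, $H=\varepsilon\lambda^{-n/p}$) the residual stays $\lesssim\varepsilon$ while $\|\phi\|_{W^{m,r}}\ge\|\phi\|_{L^r}\sim\varepsilon\,\lambda^{n/r-n/p}\to\infty$ whenever $r<p$. For fixed $(\varepsilon,A)$ one then selects $\lambda$ small (resp.\ large) enough to make the residual $<\varepsilon$ and the Sobolev norm $>A$; the resulting $\phi$ is $C^{\infty}$, compactly supported, vanishes at $t=T$, and $\mathrm{supp}(u-u^{*})=\mathrm{supp}\,\phi$ is compact. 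These two regimes already cover every $r\ge1$ when $m\ge 3$; the leftover values ($r=p$ for $m=2$, and the band $p\le r\le\tfrac{(n+2)p}{n+2-p}$ for $m=1$) I would handle by the same scheme applied to a superposition of $N$ disjointly supported copies of $\phi$, optimizing the exponents in $\text{residual}\sim N^{1/p}H\lambda^{(n+2)/p-2}$, $\ \|\phi\|_{W^{m,r}}\gtrsim N^{1/r}H\lambda^{(n+2)/r-m}$ jointly over $(N,H,\lambda,\tau)$.

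\textbf{Main obstacle.} The mechanism itself---positivity of the residual exponents once $p\le n/4$, so that concentrating or spreading a bump drives the loss to zero while blowing up a chosen derivative norm---is immediate. The technical heart is the exponent bookkeeping needed to make one construction cover \emph{every} pair $(m,r)$, in particular the low--regularity band near $r=p$ for $m\in\{1,2\}$, where the spatial scale $\lambda$, temporal scale $\tau$, amplitude $H$ (and, if needed, the number $N$ of bumps) must be balanced against one another while one checks that no single residual term secretly dominates.
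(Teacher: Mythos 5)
Your scaling construction is sound as far as it goes: the instance $u^{*}\equiv 0$ with all $c_i$ equal to some $c\in(1,\tfrac{n+2}{n+1})$ is admissible, the exponent bookkeeping for the bump $H\psi(x/\lambda)\zeta((T-t)/\tau)$ is correct, and the two regimes genuinely prove the claim for the pairs $(m,r)$ they cover (concentration exploits the failure of the parabolic Sobolev embedding when $\tfrac1r<\tfrac1p+\tfrac{m-2}{n+2}$, spreading exploits the unbounded spatial domain when $r<p$). But there is a real gap, and it sits exactly where the theorem has its teeth. First, you never address $m=0$: in this paper $\mathbb{N}$ contains $0$ (cf.\ $W^{0,r}=L^r$, which is precisely how Theorem \ref{thm:lower-bound} is used in Section \ref{sec:theory} to conclude failure of $(L^p,L^q,L^r)$-stability), and for $m=0$ your regimes leave the band $p\le r\le \frac{(n+2)p}{n+2-2p}$ uncovered — in particular the headline case $p=r=2$, $m=0$. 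Second, and more fundamentally, the fallback you propose for the uncovered band (disjoint superpositions of the same bump, optimizing $N,H,\lambda,\tau$) cannot work while the nonlinearity is kept subdominant, which your choice $c<\tfrac{n+2}{n+1}$ and your scalings are explicitly designed to ensure. If the term $\sum_i A_i|\partial_{x_i}\phi|^{c}$ is negligible in $L^p$, the residual is essentially $(\partial_t+\tfrac12\sigma^2\Delta)\phi$ with zero terminal data, and maximal parabolic $L^p$-regularity on the whole space (constants independent of the spatial support, hence of $N$ and of how the bumps are placed) gives $\|\phi\|_{W^{2,p}}\lesssim\|(\partial_t+\tfrac12\sigma^2\Delta)\phi\|_{L^p}$; combined with the scale-invariant Gagliardo--Nirenberg inequalities (Lemma \ref{Lemma A.3}), this bounds $\|\phi\|_{W^{m,r}}$ for every $(m,r)$ in exactly the leftover band by the loss. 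So in that band small $L^p$ residual \emph{forces} small error for any essentially-linear construction; no amount of juggling $(N,H,\lambda,\tau)$ escapes this, and arranging genuine cancellation between the linear and nonlinear parts of the residual is a different (and much harder) problem than your sketch addresses.

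This is why the paper's proof (Appendix \ref{app:proof-lower-bound}) takes a route that exploits the nonlinearity in an essential way: it fixes the quadratic instance $c_i=2$, drives the equation with a smoothed version of the singular forcing $c|x|^{-0.7}$, which for $p\le\frac n4$ is small in $L^p$ (and in $W^{1,p_0}$, giving control of the perturbation in $W^{3,p_0}$ via Lemma \ref{Lemma D.2.}) but blows up in $L^{p_2}$ with $p_2=\frac{11}{7}n$ as the inner radius $a\to0$; the reverse continuity estimate of Lemma \ref{Lemma D.1.} then forces $\|w\|_{W^{2,p_1}}\gtrsim\sqrt{\|F_{a,c}\|_{p_2}}\to\infty$, and Gagliardo--Nirenberg interpolation between $W^{3,p_0}$ and $L^1$ pushes the blow-up all the way down to $\|w\|_{L^1}$, after which the reduction ``$L^1$ is the weakest norm on a bounded set'' handles every $(m,r)$ simultaneously, including $m=0$, $r=p$. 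To repair your argument you would need an ingredient of this kind — some mechanism by which a perturbation with small $L^p$ residual has a large nonlinear contribution that cannot be absorbed by parabolic regularity — rather than more elaborate arrangements of essentially linear bumps.
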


The proof of Theorem \ref{thm:lower-bound} can be found in Appendix \ref{app:proof-lower-bound}.

\paragraph{Discussion.} Theorem \ref{thm:stb0} and \ref{thm:lower-bound} together state that when the dimension of the state function $n$ is large, the HJB equation in Eq. (\ref{eq:hjb-study}) cannot be $(L^p,L^q,W^{1,r})$-stable %not even $(L^p,L^q,L^{r})$-stable 
if $p$ and $q$ are small. Furthermore, since $L^r$=$W^{0,r}$ by definition, Theorem \ref{thm:lower-bound} also implies that Eq. (\ref{eq:hjb-study}) is not even $(L^p,L^q,L^{r})$-stable. Therefore, for high-dimensional HJB problems, if we use classic $L^2$ Physics-Informed Loss for training PINN, the learned solution may be arbitrarily distant from $u^*$ even if the loss is very small. Such theoretical results are verified in our empirical studies in Section \ref{sec:exp}. 

More importantly, our theoretical results indicate that the design choice of the Physics-Informed Loss plays a significant role in solving PDEs using PINN. In this work, we shed light upon this problem using HJB equations. We believe the relationship between PDE's stability and the Physics-Informed Loss should be carefully investigated in the future, especially for high-dimensional non-linear PDEs whose stability are more complicated than low-dimensional and linear ones \cite{evans1998partial,gilbarg1977elliptic,lieberman1996second}. 
Given the above observations, we further propose a new algorithm for training PINN to solve HJB Equations, which will be presented in the subsequent sections.

\section{Solving HJB Equations with Adversarial Training}
\label{sec:alg}

\begin{algorithm}[b]
    \caption{$L^{\infty}$ Training for Physics-Informed Neural Networks}
    \label{alg:main}
    \hspace*{0.02in} \textbf{Input:} Target PDE (Eq. (\ref{eq:general-pde})); neural network $u_{\theta}$; initial model parameters $\theta$\\
    \hspace*{0.02in} \textbf{Output:} Learned PDE solution $u_{\theta}$\\
    \hspace*{0.02in} \textbf{Hyper-parameters:} Number of total training iterations $M$; number of iterations and step size of inner loop $K,\eta$; weight for combining the two loss term $\lambda$
    \begin{algorithmic}[1]
    \For{$i=1, \cdots, M$}
        \State Sample $x^{(1)},\cdots,x^{(N_1)}\in\Omega$ and $\tilde x^{(1)},\cdots,\tilde x^{(N_2)}\in \partial\Omega$
    	\For{$j=1, \cdots, K$}
    		\For{$k=1, \cdots, N_1$}
    		    \State $x^{(k)}\leftarrow \mathrm{Project}_{\Omega}\left( x^{(k)}+\eta~\mathrm{sign}\nabla_x  \left(\mathcal{L}u_{\theta}(x^{(k)})-\varphi(x^{(k)})\right)^2\right)$
    	    \EndFor
    	    \For{$k=1, \cdots, N_2$}
    		    \State $\tilde x^{(k)}\leftarrow \mathrm{Project}_{\partial \Omega}\left(\tilde x^{(k)}+\eta~ \mathrm{sign}\nabla_x \left(\mathcal{B}u_{\theta}(\tilde x^{(k)})-\psi(\tilde x^{(k)})\right)^2\right)$
    	    \EndFor
    	\EndFor
    	\State $\displaystyle{ g\leftarrow \nabla_{\theta}\left(\frac{1}{N_1}\sum_{i=1}^{N_1}\left( \mathcal{L}u_{\theta}(x^{(i)})-\varphi(x^{(i)})\right)^2+\lambda\cdot\frac{1}{N_2}\sum_{i=1}^{N_2}\left( \mathcal{B}u_{\theta}(\tilde x^{(i)}) -\psi(\tilde x^{(i)})\right)^2 \right)}$
	\State $\theta\leftarrow \mathrm{Optimizer}\left(\theta, g \right)$
	\EndFor
    \State \Return $u_{\theta}$
    \end{algorithmic}
\end{algorithm}

The above results suggest that we should use a large value of $p$ and $q$ in the loss $\ell_{\Omega,p}(u)$ and $\ell_{\partial\Omega,q}(u)$ to guarantee a learned solution $u$ is close to $u^*$ for high-dimensional HJB problems. Note that $L^p$-norm and $L^\infty$-norm behave similarly when $p$ is large. We can substitute $L^p$-norm by $L^\infty$-norm and directly optimize $\ell_{\Omega,\infty}(u)$ and $\ell_{\partial\Omega,\infty}(u)$. Overall, the training objective can be formulated as:
\begin{equation}\label{eq:pinn-loss-inf}
    \min_{u}~\ell_{\infty}(u)=\sup\limits_{x\in\Omega}| \mathcal{L}u(x)-\varphi(x)|+\lambda\sup\limits_{x\in\partial\Omega}| \mathcal{B}u(x)-\psi(x)|,
\end{equation}
where $\lambda>0$ is a hyper-parameter to trade off the two objectives.

It is straightforward to obtain that setting $p$ and $q$ to infinity satisfies the conditions in Theorem \ref{thm:stb0}, and thus the quality of the learned solution enjoys theoretical guarantee. Furthermore,  Eq.~(\ref{eq:pinn-loss-inf}) can be regarded as a min-max optimization problem. The inner loop is a maximization problem to find data points on $\Omega$ and $\partial\Omega$ where $u$ violates the PDE most, and the outer loop is a minimization problem to find $u$ (i.e., the neural network parameters) that minimizes the loss on those points. 

In deep learning, such a min-max optimization problem has been intensively studied, and adversarial training is one of the most effective learning approaches in many applications. We leverage adversarial training, and the detailed implementation is described in Algorithm \ref{alg:main}. In each training step, the model parameters and data points are iteratively updated. We first fix the model $u$ and randomly sample data points $x^{(1)},\cdots,x^{(N_1)}\in\Omega$ and $\tilde x^{(1)},\cdots,\tilde x^{(N_2)}\in \partial\Omega$, serving as a random initialization of the inner loop optimization. Then we perform gradient-based methods to obtain data points with large point-wise Physics-Informed Losses, which leads to the following inner-loop update rule:
\begin{align}
    &x^{(k)}\leftarrow \mathrm{Project}_{\Omega}\left( x^{(k)}+\eta~\mathrm{sign}\nabla_x  \left(\mathcal{L}u_{\theta}(x^{(k)})-\varphi(x^{(k)})\right)^2\right);\\
    &\tilde x^{(k)}\leftarrow \mathrm{Project}_{\partial \Omega}\left(\tilde x^{(k)}+\eta~ \mathrm{sign}\nabla_x \left(\mathcal{B}u_{\theta}(\tilde x^{(k)})-\psi(\tilde x^{(k)})\right)^2\right),
\end{align}
where $\mathrm{Project}_{\Omega}\left(\cdot \right)$ and $\mathrm{Project}_{\partial \Omega}\left(\cdot \right)$ project the updated data points to the domain. When the inner-loop optimization finishes, we fix the generated data points and calculate the gradient $g$ to the model parameter:
\begin{align}
g\leftarrow \nabla_{\theta}\left(\frac{1}{N_1}\sum_{i=1}^{N_1}\left( \mathcal{L}u_{\theta}(x^{(i)})-\varphi(x^{(i)})\right)^2+\lambda\cdot\frac{1}{N_2}\sum_{i=1}^{N_2}\left( \mathcal{B}u_{\theta}(\tilde x^{(i)}) -\psi(\tilde x^{(i)})\right)^2 \right),    
\end{align}
then the model parameter can be updated using any first-order optimization methods. When the training finishes, the learned neural network will converge to a solution with small $L^{\infty}$ losses and is guaranteed to be close to the exact solution.

\section{Experiments}
\label{sec:exp}

In this section, we conduct experiments to verify the effectiveness of our approach. Ablation studies on the design choices and hyper-parameters are then provided. Our codes are implemented based on \texttt{PyTorch} \cite{paszke2019pytorch}. All the models are trained on one NVIDIA Tesla V100 GPU with 16GB memory. Due to space limitation, we only showcase our methods on the Linear Quadratic Gaussian control problem in the main body of the paper. More experimental results on other PDE problems can be found in Appendix \ref{app:more-exp}.

\subsection{High Dimensional Linear Quadratic Gaussian Control Problem}
\label{sec:hjb}

\begin{table}[tb]
\caption{\textbf{Experimental results of solving the 100/250-dimensional LQG control problems.} $n$ denotes the dimensionality of the problem. Performances are measured by $L^1$, $L^2$, and $W^{1,1}$ relative error in $[0,1]^n\times[0,T]$. The best performances are indicated in \textbf{bold}.}
\label{tab:exp-lqg-main}
\centering
\begin{tabular}{ccccccc}\toprule
\multirow{2}{*}{Method} & \multicolumn{3}{c}{Relative error for $n=100$} & \multicolumn{3}{c}{Relative error for $n=250$}\\
                        &  $L^1$ & $L^2$ & $W^{1,1}$ &  $L^1$  & $L^2$  & $W^{1,1}$\\\midrule
Original PINN \cite{raissi2019physics}     & 3.47\%    & 4.25\%  &11.31\%  & 6.74\% & 7.67\% & 17.51\% \\ 
Adaptive time sampling \cite{wight2020solving}  & 3.05\%	& 3.67\% & 13.63\% & 7.18\% & 7.91\% & 18.38\%\\
Learning rate annealing  \cite{wang2021understanding}   &  11.09\%  &  11.82\%   &  33.61\%  &  6.94\% & 8.04\% & 18.47\% \\
Curriculum regularization \cite{krishnapriyan2021characterizing}   &   3.40\% & 3.91\% &   9.53\%  &   6.72\% &  7.51\% & 17.52\%\\ 
\midrule
Adversarial training (ours)  & \textbf{0.27\%}	& \textbf{0.33\%}  & \textbf{2.22\%} & \textbf{0.95\%}	& \textbf{1.18\%}	& \textbf{4.38\%}	\\ \bottomrule     
\end{tabular}
\end{table}

We follow \cite{han2018solving} to study the classical linear-quadratic Gaussian (LQG) control problem in $n$ dimensions, a special case of the HJB equation:
\begin{equation}
\label{eq:lqg}
\begin{cases}
    \partial_t u(x,t) + \Delta u(x,t) -\mu \|\nabla_x u(x,t)\|^2 = 0 & x \in \mathbb{R}^n, 
    t \in [0, T] \\
    u(x,T) = g(x) & x \in \mathbb{R}^n,
\end{cases}
\end{equation}

As is shown in \cite{han2018solving}, there is a unique solution to Eq. (\ref{eq:lqg}):
\begin{equation}
    u(x,t) = -\frac{1}{\mu} \ln\left(\int_{\R^n} (2\pi)^{-n/2}\mathrm{e}^{-\|y\|^2/2}\cdot \mathrm{e}^{-\mu g(x-\sqrt{2(T-t)}y)} \diff y\right),
\end{equation}

We set $\mu=1$, $T=1$, and the terminal cost function $g(x)=\ln\left(\dfrac{1+\|x\|^2}{2}\right)$.  

\paragraph{Experimental Design} The neural network used for training is a 4-layer MLP with 4096 neurons and $\mathrm{tanh}$ activation in each hidden layer. 
To train the models, we use Adam as the optimizer \cite{kingma2015adam}. The learning rate is set to $7\mathrm{e}-4$ in the beginning and then decays linearly to zero during training. The total number of training iterations is set to 5000/10000 for the 100/250-dimensional problem. In each training iteration, we sample $N_1=100/50$ points from the domain $\mathbb{R}^n\times [0, T]$ and $N_2=100/50$ points from the boundary $\mathbb{R}^n\times \{T\}$ to obtain a mini-batch for the 100/250-dimensional problem. The number of inner-loop iterations $K$ is set to 20, and the inner-loop step size $\eta$ is set to 0.05 unless otherwise specified. 
Evaluations are performed on a hold-out validation set which is unseen during training. We use the $L^1$, $L^2$, and $W^{1,1}$ relative error in $[0,1]^n\times[0,T]$ as evaluation metrics: $L^1$ and $L^2$ relative errors are popular evaluation metrics in literature. We additionally consider $W^{1,1}$ relative error since the gradient of the solution to HJB equations plays an important role in applications, and our theory indicates that Eq. (\ref{eq:lqg}) is $(L^{\infty},L^{\infty}, W^{1,r})$ stable. More detailed descriptions of the experimental setting and evaluation metrics can be found in Appendix \ref{app:exp-settings}.

We compare our method with a few strong baselines: 1) original PINN trained with $L^2$ Physics-Informed Loss \cite{raissi2019physics}; 2) adaptive time sampling for PINN training proposed in \cite{wight2020solving}; 3) PINN with the learning rate annealing algorithm proposed in \cite{wang2021understanding}; 4) curriculum PINN regularization proposed in \cite{krishnapriyan2021characterizing}. The training recipes for the baseline methods, including the neural network architecture, the training iterations, the optimizer, and the learning rate, are the same as those of our method described above. 
It should be noted that although these approaches modifies the data sampler, training algorithms or the loss function, they all keep the $L^2$ norm of the PDE residual and boundary residual unchanged in the training objective.

\begin{figure}[t]
\begin{minipage}{0.35\linewidth}
    \centering
    \includegraphics[width=1\linewidth]{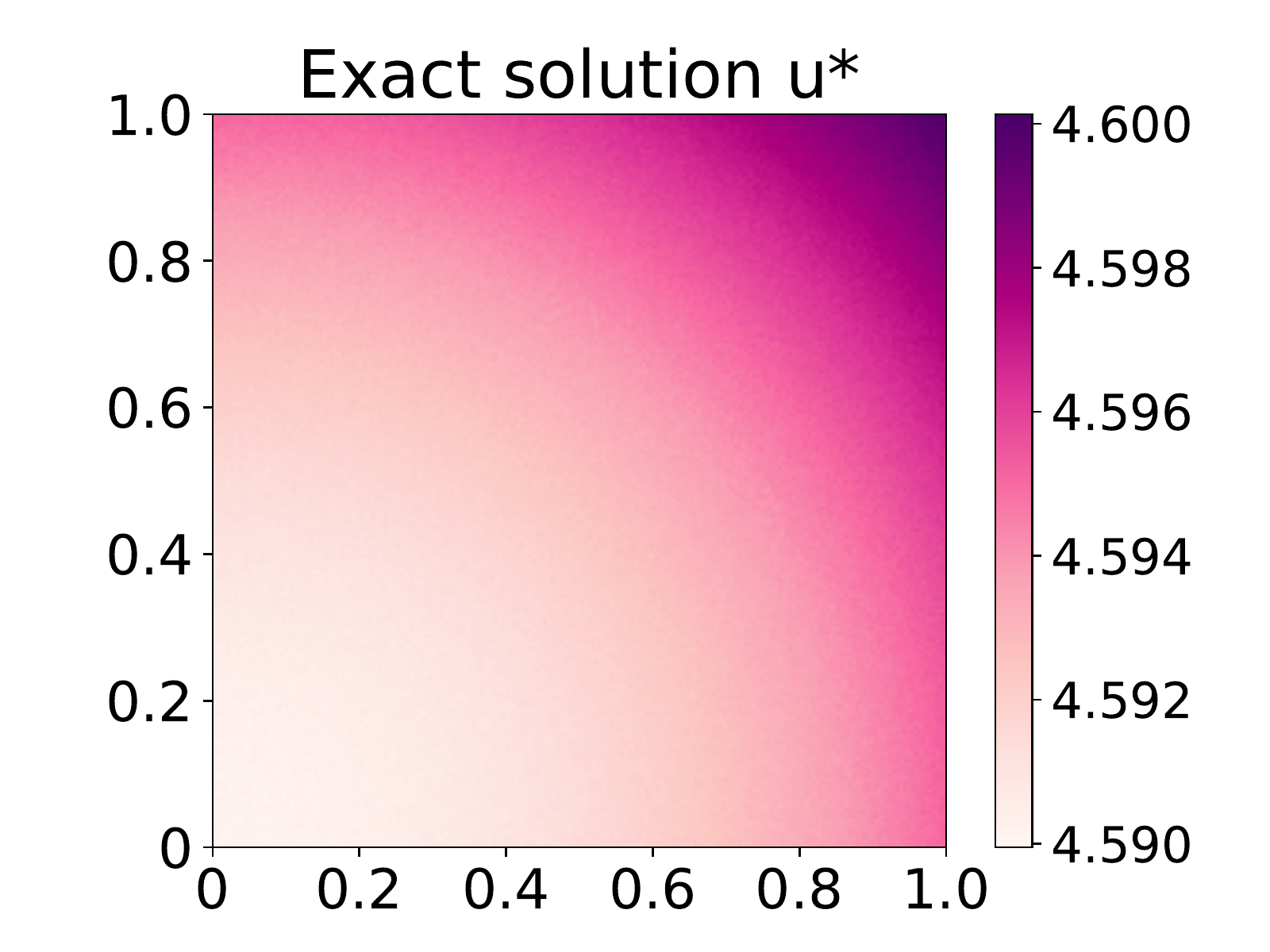}
    % \caption{}
\end{minipage}\hfill
\begin{minipage}{0.32\linewidth}
    \centering
    \includegraphics[width=1\linewidth]{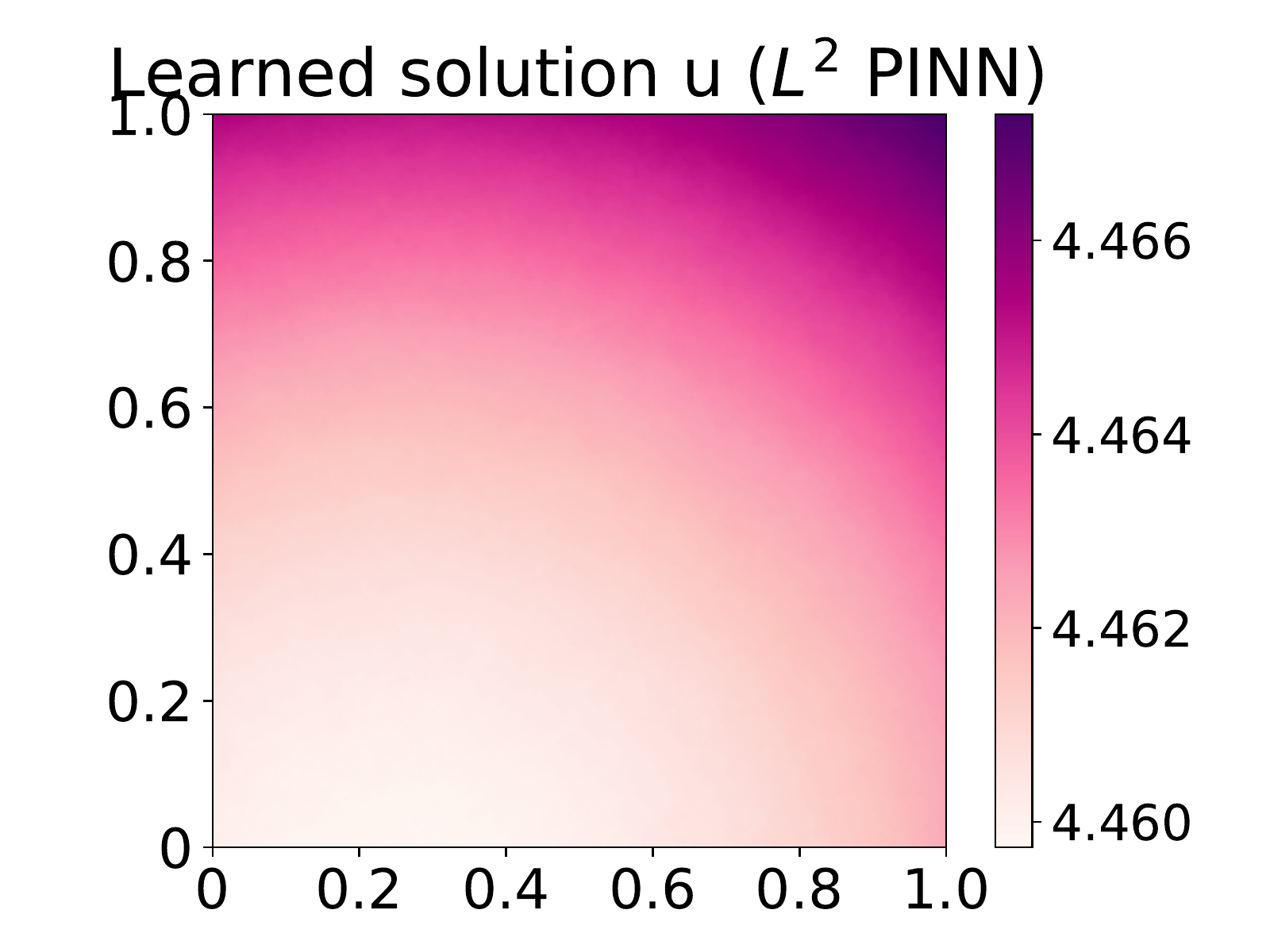}
    \includegraphics[width=1\linewidth]{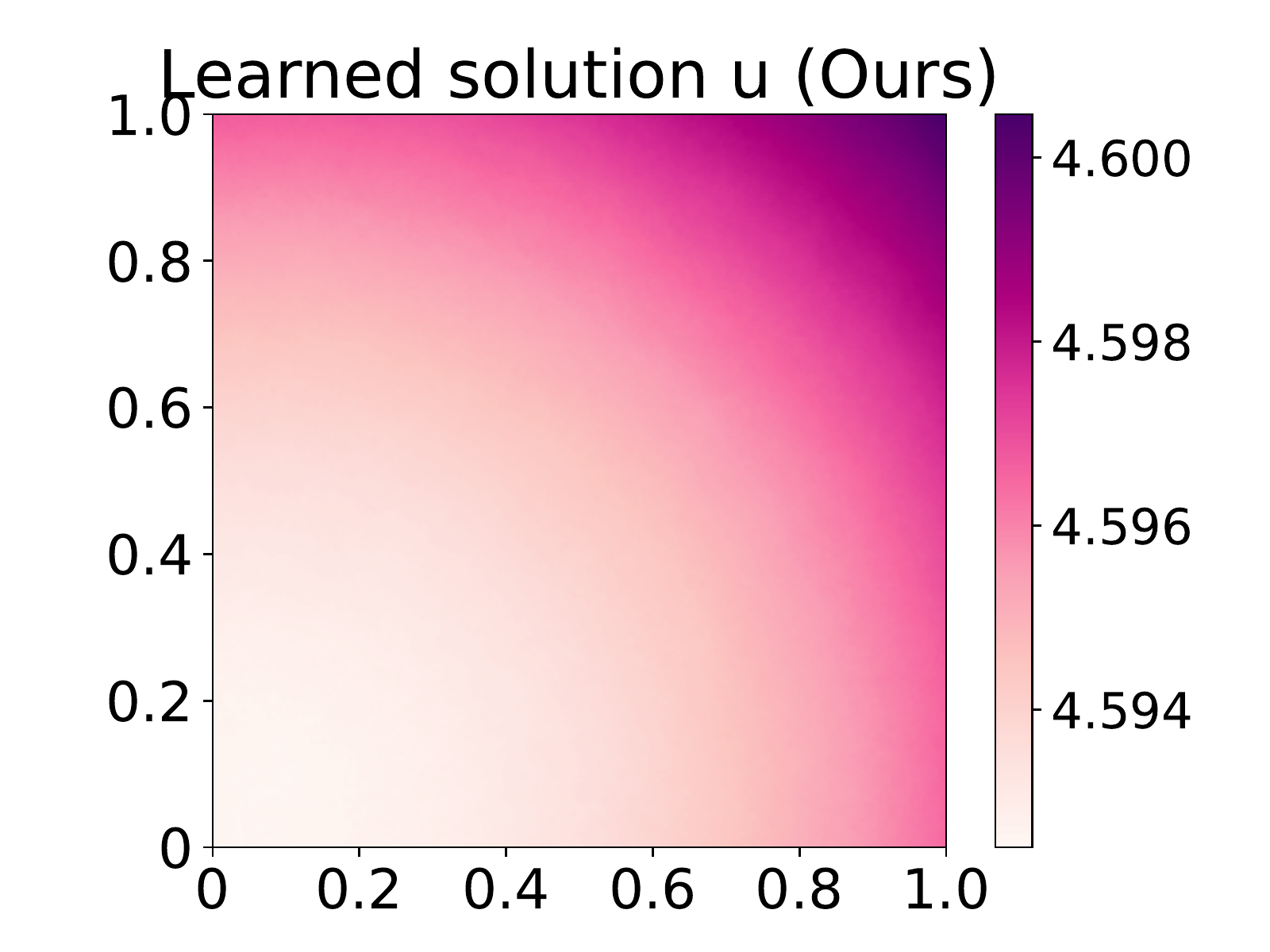}
    % \caption{}
\end{minipage}\hfill
\begin{minipage}{0.32\linewidth}
    \centering
    \includegraphics[width=1\linewidth]{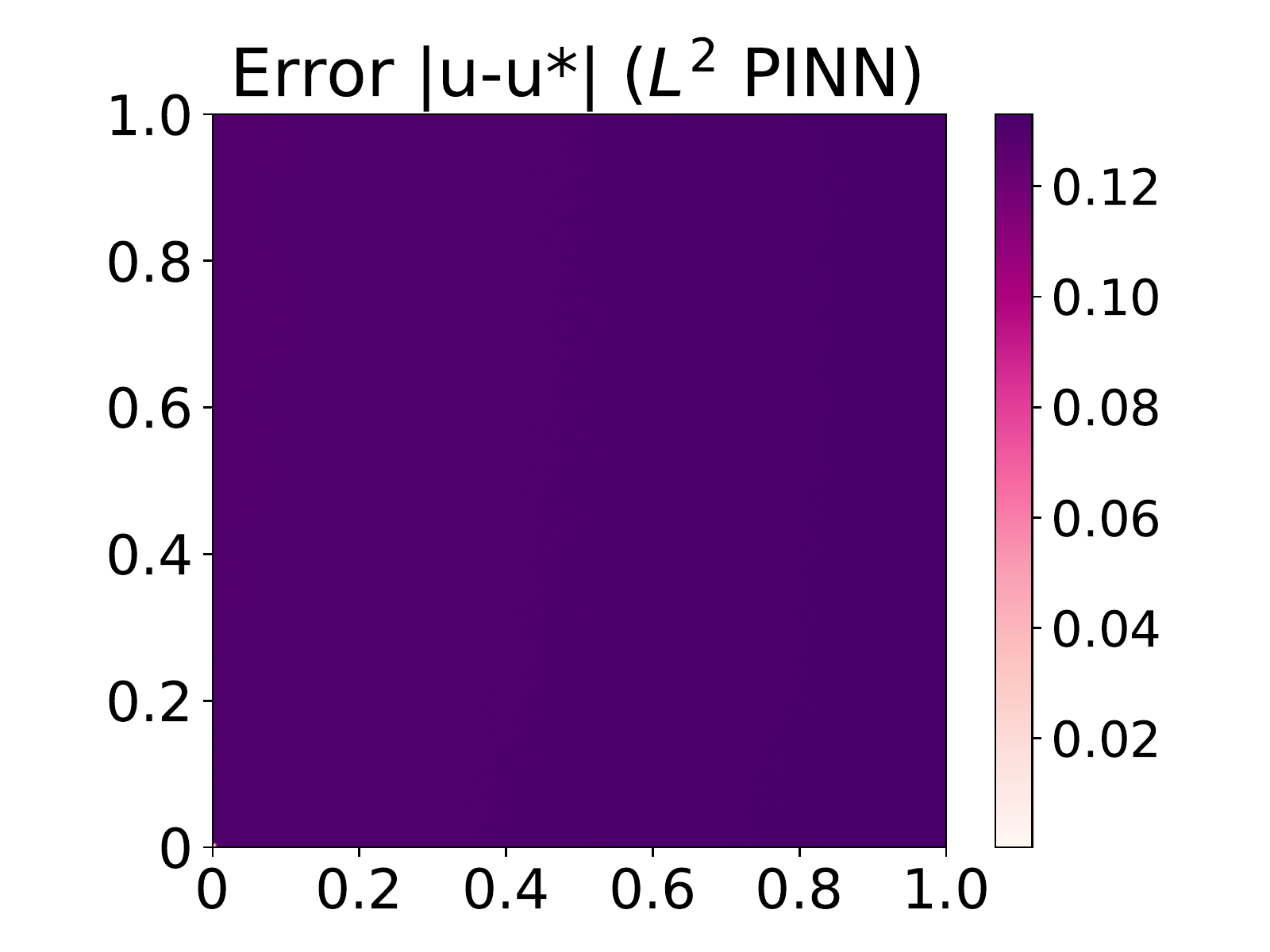}
    \includegraphics[width=1\linewidth]{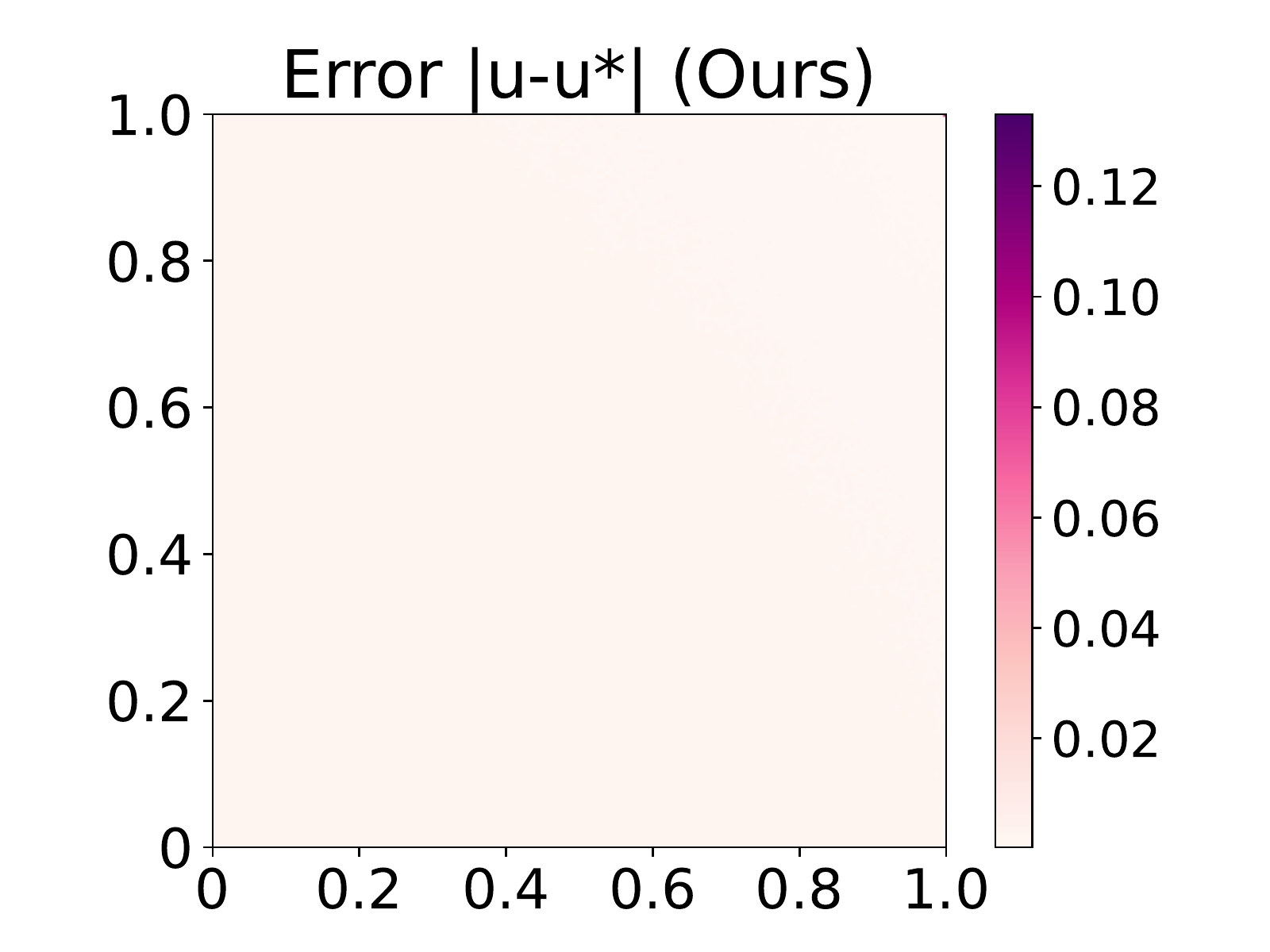}
    % \caption{}
\end{minipage}
\caption{\textbf{Visualization for the solutions of Eq. (\ref{eq:lqg})}. The left panel shows the exact solution $u^*$; the middle panel shows the learned solutions $u$ of the original PINN method with $L^2$ loss and our method with adversarial training; the right panel show the point-wise absolute error $|u-u^*|$. Note that the solution is a high dimensional function, and we visualize its snapshot on a two-dimensional domain. Specifically, we visualize a bivariate function $u(x_1, x_2, 0, \cdots,0; 0)$ for $x_1,x_2\in[0,1]$ with the horizontal axis and vertical axis corresponding to $x_1$ and $x_2$ respectively.}
\label{fig:lqg}
\end{figure}

\paragraph{Experimental Results} The experimental results are summarized in Table \ref{tab:exp-lqg-main}. It's clear that the relative error of the model trained using the original PINN does not fit the solution well, e.g., the $L^1$ relative error is larger than $6\%$ when $n=250$. This empirical observation aligns well with our theoretical analysis, i.e., minimizing $L^2$ loss cannot guarantee the learned solution to be accurate. Advanced methods, e.g., curriculum PINN regularization \cite{krishnapriyan2021characterizing}, can improve the accuracy of the learned solution but with marginal improvement, which suggests that these methods do not address the key limitation of PINN in solving high-dimensional HJB Equations. By contrast, our proposed method significantly outperforms all the baseline methods in terms of both $L^p$ relative error and Sobolev relative error, which indicates that both the values and the gradients of our learned solutions are more accurate than the baselines.

We also examine the quality of the learned solution $u(x,t)$ by visualization. As the solution is a high-dimensional function, we visualize its snapshot on a two-dimensional space. Specifically, we consider a bivariate function $u(x_1, x_2, 0, \cdots,0; 0)$ and use a heatmap to show its function value given different $x_1$ and $x_2$. Figure \ref{fig:lqg} shows the ground truth $u^*$, the learned solutions $u$ of original PINN and our method, and the point-wise absolute error $|u-u^*|$ for each methods. The two axises correspond $x_1$ and $x_2$, respectively. We can see that the point-wise error of the learned solution using our algorithm is less than $2\mathrm{e}-2$ on average. In contrast, the point-wise error of the learned solution using original PINN method with $L^2$ loss is larger than $1.3\mathrm{e}-1$ for most areas. Therefore, the visualization of the solutions clearly illustrate that PINN learned more accurate solution using our proposed algorithm.

Furthermore, we visualize the gradient norm $|\nabla_x u|$ of the learned solution of both our method and the original PINN in Figure \ref{fig:lqg-grad}, to illustrate that not only can the learned solution of our method accurately approximate the exact solution, but also the \textit{gradient} of the learned solution can accurately approximate the \textit{gradient} of the exact solution. 
Again, since $|\nabla_x u|$ is a high dimensional function, we use a heatmap to show its function value given different $x_1$ and $x_2$, and set the other variables to 0. From the right panel of Figure \ref{fig:lqg-grad}, we can clearly see that the gradient of the learned solution of our method is much more accurate compared with that of the gradient of the learned solution using original PINN. The gradient norm error of the vanilla PINN approach is nearly $1\mathrm{e}-2$ in some areas shown in the visualization, while the error of our method is less than $1\mathrm{e}-3$ for most data points. This empirical observation aligns well with our theory, which states that Eq. (\ref{eq:lqg}) is $(L^{\infty},L^{\infty}, W^{1,r})$ stable.

%We also solve a few other equations using our method and conduct more visualization to study the quality of the solutions. Please refer to Appendix \ref{app:more-exp} for details.

\begin{figure}[t]
\begin{minipage}{0.35\linewidth}
    \centering
    \includegraphics[width=1\linewidth]{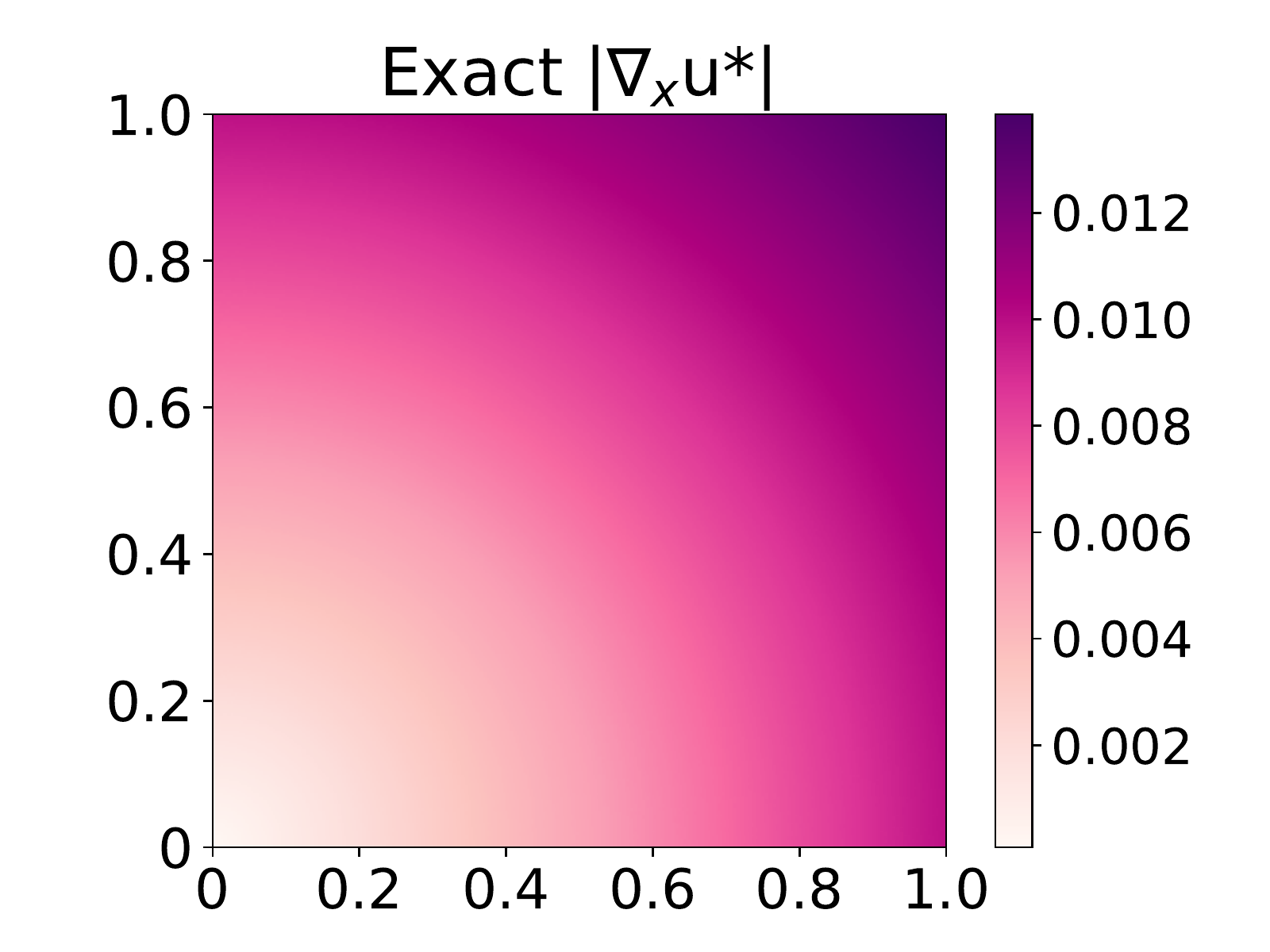}
    % \caption{}
\end{minipage}\hfill
\begin{minipage}{0.32\linewidth}
    \centering
    \includegraphics[width=1\linewidth]{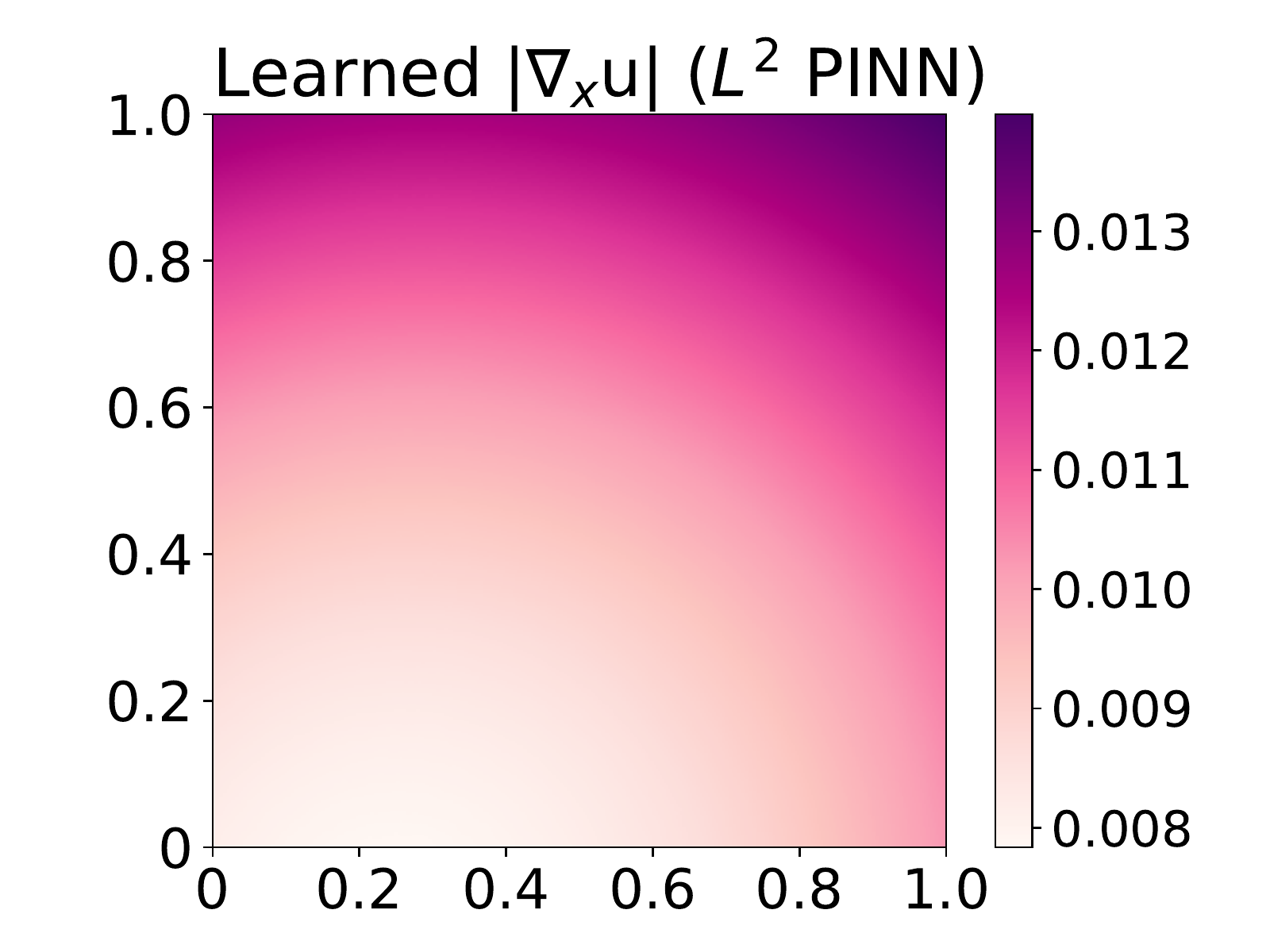}
    \includegraphics[width=1\linewidth]{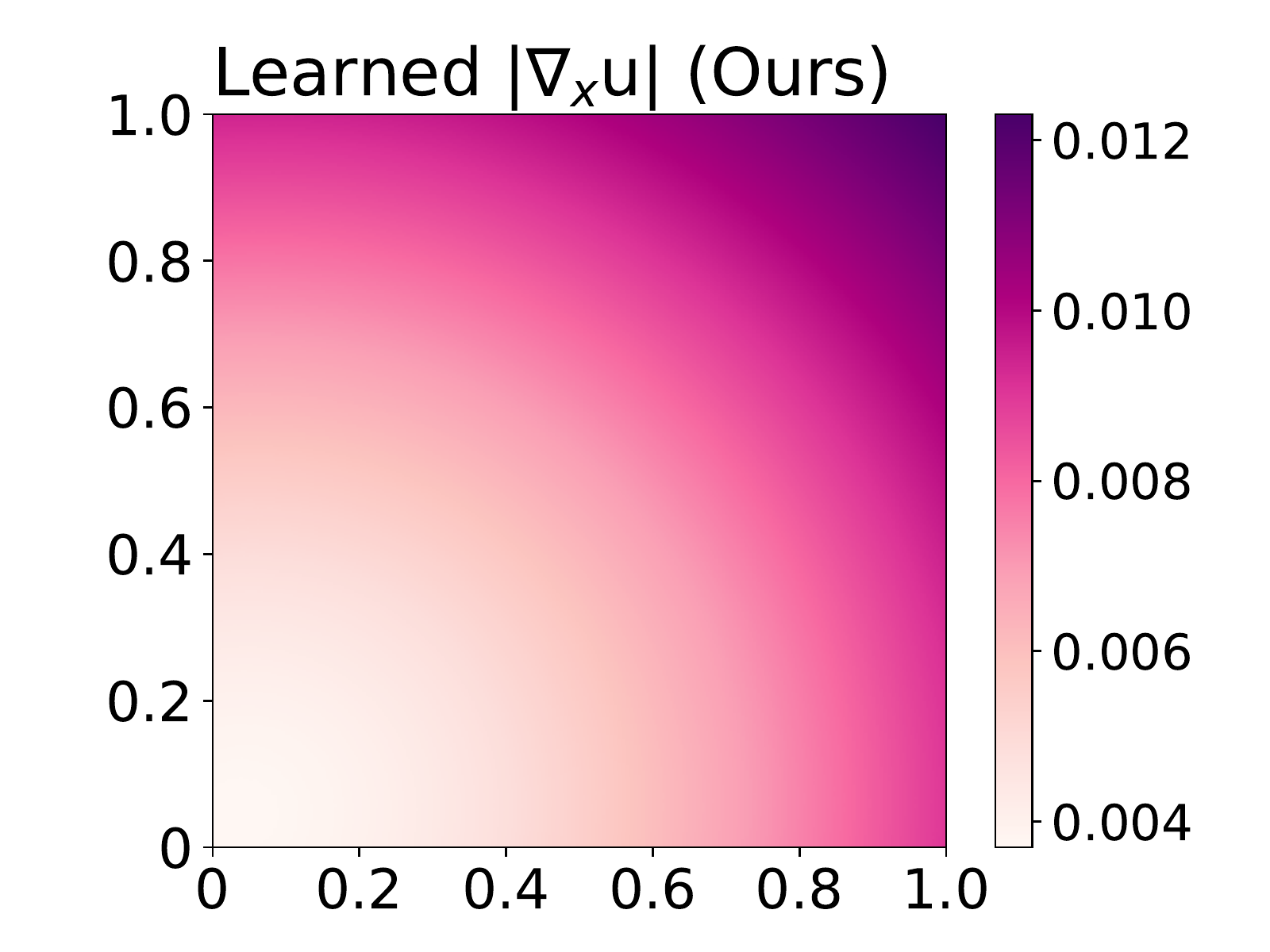}
    % \caption{}
\end{minipage}\hfill
\begin{minipage}{0.32\linewidth}
    \centering
    \includegraphics[width=1\linewidth]{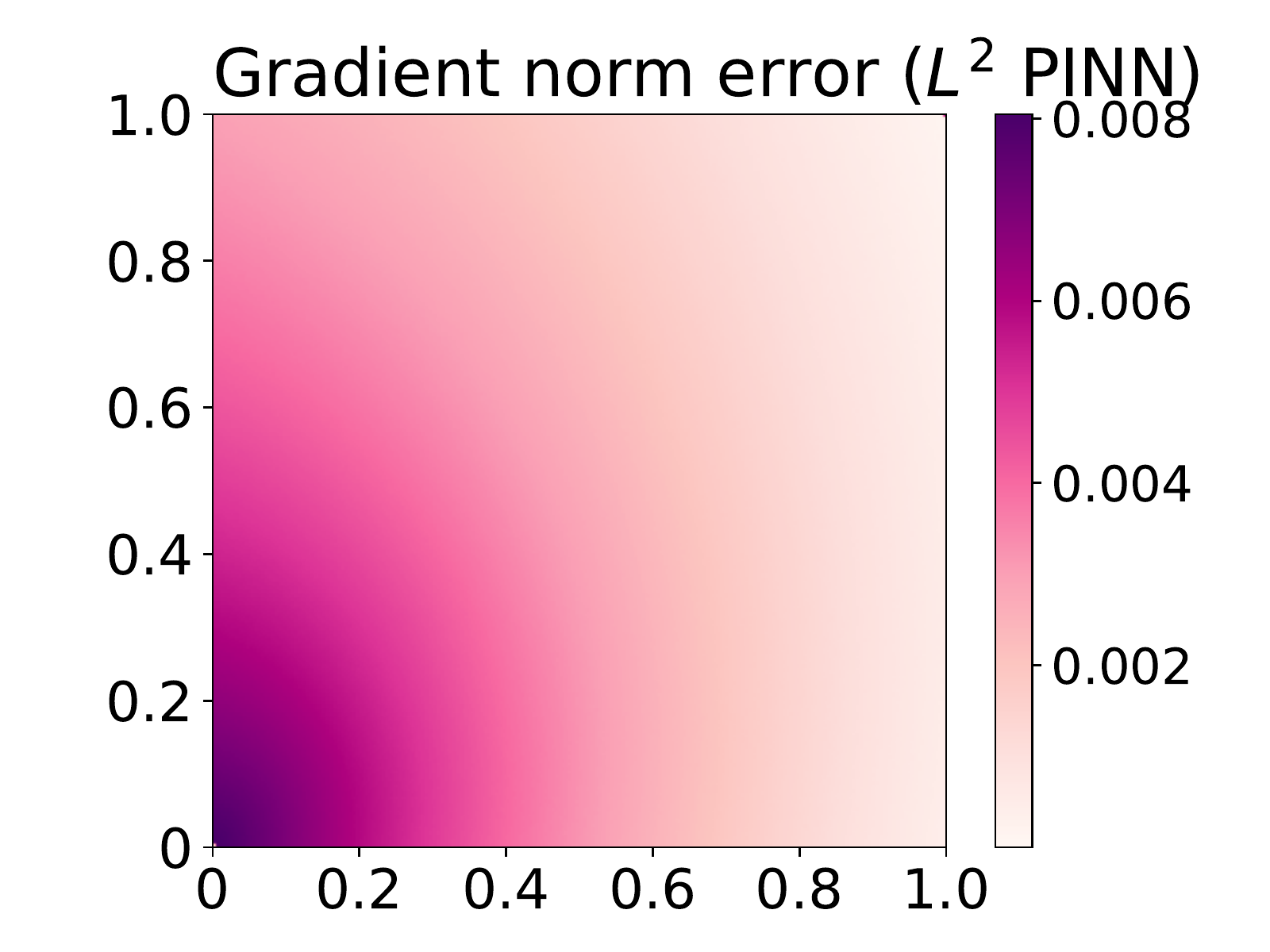}
    \includegraphics[width=1\linewidth]{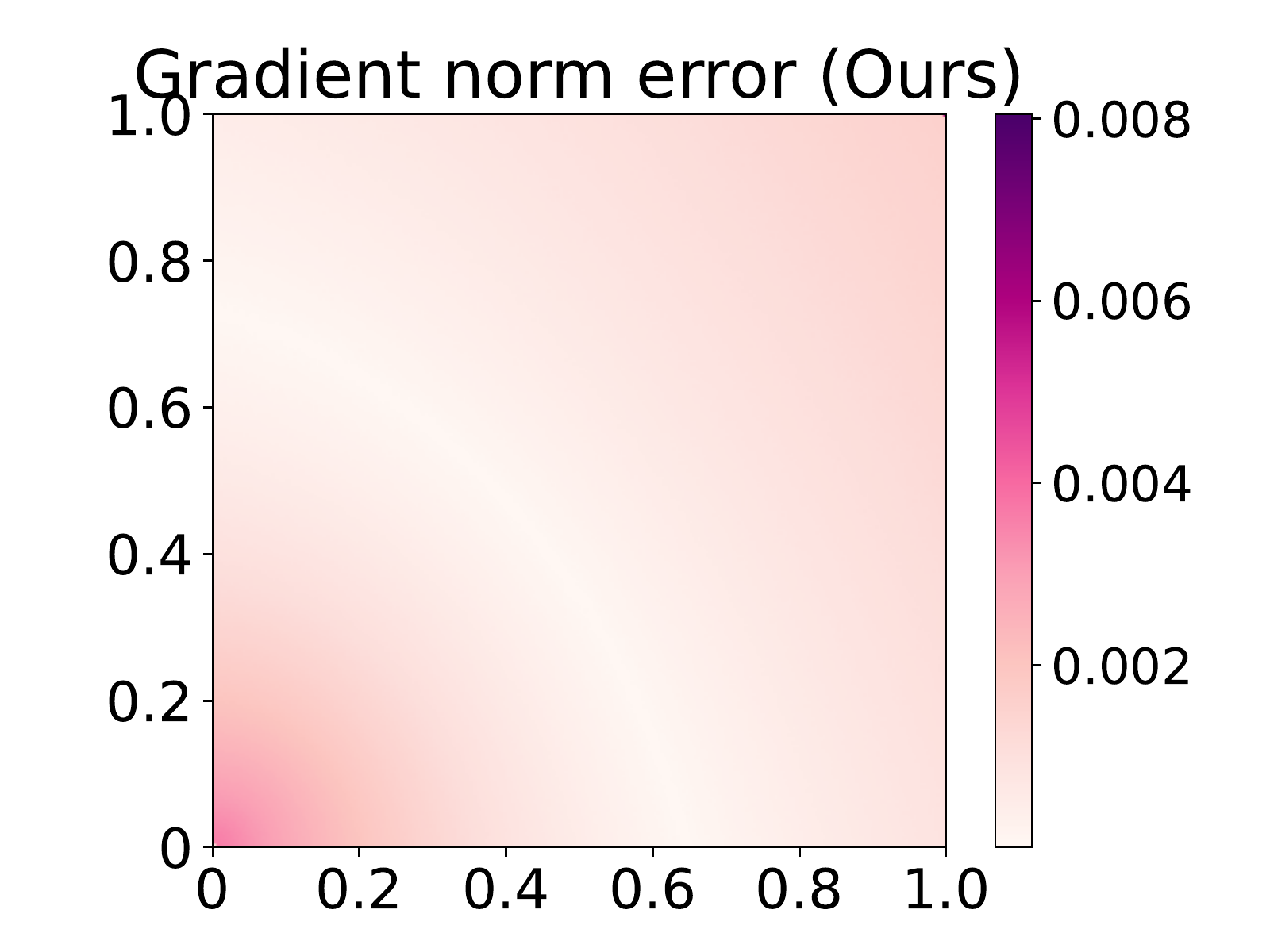}
    % \caption{}
\end{minipage}
\caption{\textbf{Visualization for the gradient norm of the solutions of Eq. (\ref{eq:lqg})}. The left panel shows the ground truth $|\nabla_x u^*|$; the middle panel shows the learned $|\nabla_x u|$ of the original PINN method with $L^2$ loss and our method with adversarial training; the right panel shows the point-wise absolute error $|\nabla_x (u-u^*)|$. We only visualize a snapshot on a two-dimensional domain. Specifically, we visualize a bivariate function $u(x_1, x_2, 0, \cdots,0; 0)$ for $x_1,x_2\in[0,1]$ with the horizontal axis and vertical axis corresponding to $x_1$ and $x_2$ respectively.}
\label{fig:lqg-grad}
\end{figure}

\subsection{Ablation studies}
\label{sec:ablate}
We conduct ablation studies on the 100-dimensional LQG control problem (Eq. (\ref{eq:lqg})) to ablate the main designs in our algorithm. 
\paragraph{Adversarial training v.s. Directly optimizing $L^p$ physic-informed loss.} Our proposed algorithm introduces a min-max optimization procedure. One may have concerns that such an approach may be unnecessarily complicated, and directly minimizing $L^p$ physic-informed loss with a large $p$ would have the same effect. We use these two methods to solve Eq. (\ref{eq:lqg}), and compare their performance in the left panel of Table \ref{tab:ablate}. It can be seen that directly minimizing $L^p$ physic-informed loss does not lead to satisfactory results. %We conjecture that the $L^p$ loss with a large $p$ can make the landscape very stiff and bring optimization difficulties. 

We point out that this observation does not contradict our theoretical analysis (Theorem \ref{thm:stb0}). Theorem \ref{thm:stb0} focuses on the \textit{approximation} ability, which indicates that a model with a small $L^p$ loss can approximate the exact solution well. The empirical results in Table \ref{tab:ablate} demonstrate the \textit{optimization} difficulty of learning such a model with $L^p$ loss. By comparison, our proposed adversarial training method is more stable and leads to better performance. More detailed discussions are provided in Appendix \ref{app:supp-table2}.
	
\paragraph{Adversarial training should be applied to both the PDE residual and the boundary residual.} Our theoretical analysis suggests that we should use a large value of $p$ and $q$ in the loss $\ell_{\Omega,p}(u)$ and $\ell_{\partial\Omega,q}(u)$ to guarantee the quality of the learned solution $u$. Thus, in the proposed Algorithm \ref{alg:main}, both the data points inside the domain and the data points on the boundary are learned in the inner-loop maximization. From the right panel of Table \ref{tab:ablate}, we can see that when adversarial training is applied to one loss term, the performance is slightly improved, but its accuracy is still not satisfactory. When both loss terms use adversarial training, the solution is one order of magnitude more accurate, indicating that applying adversarial training to the whole loss function is essential.

\begin{table*}[t]
\caption{\textbf{Experimental results for ablation studies.} The left panel compares PINN trained with $L^p$ Physic Informed Loss and our method; the right panel compares PINN trained with partial or no adversarial training and our method. In the first two columns of the right panel, {\scriptsize\XSolidBrush}  indicates using the original $L^2$ Physics-Informed Loss for the PDE/boundary residual loss term, while {\scriptsize\Checkmark} indicates using the proposed adversarial training method for the corresponding loss term. Performances are measured by $L^1$ relative error. Best performances are indicated in \textbf{bold}.}
\label{tab:ablate}
\begin{minipage}{0.29\linewidth}
% \caption{}
\centering
\begin{tabular}{cc}\toprule
Method          & Relative error\\\midrule
$L^4$ Loss      & 2.42\%     \\
$L^8$ Loss      & 53.55\%    \\
$L^{16}$ Loss   & 113.24\%   \\ \midrule
Ours & \textbf{0.27\%}\\ \bottomrule     
\end{tabular}
\end{minipage}\hfill
\begin{minipage}{0.43\linewidth}
% \caption{}
\centering
\begin{tabular}{cccc}\toprule
\multicolumn{2}{c}{Adversarial training}& \multirow{2}{*}{Relative error}\\
Domain & Boundary   & \\\midrule
{\scriptsize\XSolidBrush} & {\scriptsize\XSolidBrush}  & 3.47\%     \\
{\scriptsize\XSolidBrush}  & {\scriptsize\Checkmark} & 2.79\%    \\\midrule
{\scriptsize\Checkmark} & {\scriptsize\Checkmark} & \textbf{0.27\%}\\ \bottomrule    
% {\scriptsize\Checkmark} & {\scriptsize\XSolidBrush}  & 10.71\%   & 36.45\%   \\ 
\end{tabular}
\end{minipage}\hspace{-0.33cm}
\begin{minipage}{0.29\linewidth}
    \centering
    \includegraphics[width=1\linewidth]{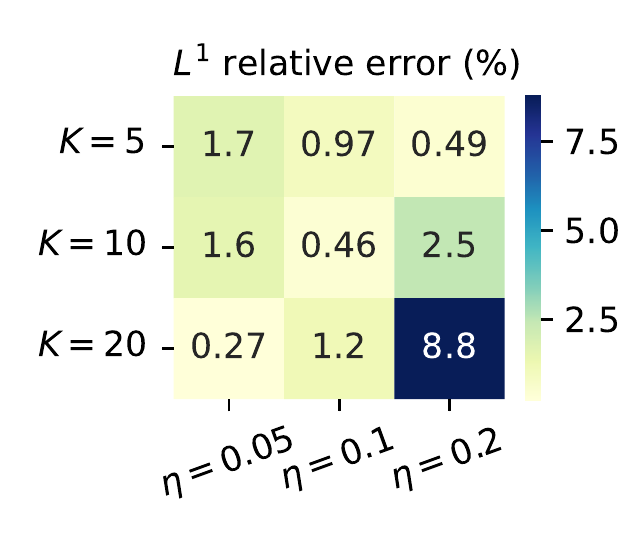}
\end{minipage}
\end{table*}

\paragraph{Hyper-parameters $K$ and $\eta$ for the inner loop maximization.}
Our approach introduces additional hyper-parameters $K$ (the number of inner-loop iterations) and $\eta$ (the inner-loop step size). These two parameters control the accuracy of inner-loop maximization.
We conduct ablation studies to examine the effects of different design choices. Specifically, we experiment with $K=5, 10, 20$ and $\eta=0.05, 0.1, 0.2$, and show the $L^1$ relative error in the right panel of Table \ref{tab:ablate}. Typically we find that setting the product $K\eta=1$ achieves the best performance. When $K\eta$ is fixed, our results suggest that using a larger $K$ and a smaller $\eta$, i.e., more inner-loop iterations and smaller step sizes, will lead to better performance while being more time-consuming.

\section{Conclusions}

\label{sec:conclusion}
In this paper, we theoretically investigate the relationship between the loss function and the approximation quality of the learned solution using the concept of stability in the partial differential equation. We study an important class of high-dimensional non-linear PDEs in optimal control, the Hamilton-Jacobi-Bellman (HJB) equation, and prove that for general $L^p$ Physics-Informed Loss, the HJB equation is stable only if $p$ is sufficiently large. Such a theoretical finding reveals that the widely used $L^2$ loss is not suitable for training PINN on high-dimensional HJB equations, while $L^{\infty}$ loss is a better choice. 
The theory also inspires us to develop a novel PINN training algorithm to minimize the $L^{\infty}$ loss for HJB equations in a similar spirit to adversarial training. One limitation of this work is that we only work on the HJB Equation. Theoretical investigation of other important equations can be an exciting direction for future works.
We believe this work provides important insights into the loss design in Physics-Informed deep learning.

\section*{Acknowledgements}
We thank Weinan E, Bin Dong, Yiping Lu, Zhifei Zhang, and Yufan Chen for the helpful discussions.

This work is supported by National Science Foundation of China (NSFC62276005), The Major Key Project of PCL (PCL2021A12), Exploratory Research Project of Zhejiang Lab (No. 2022RC0AN02), and Project 2020BD006 supported by PKUBaidu Fund.

\newpage
\bibliography{main}

\begin{thebibliography}{10}

\bibitem{beck2021deep}
Christian Beck, Sebastian Becker, Patrick Cheridito, Arnulf Jentzen, and Ariel
  Neufeld.
\newblock Deep splitting method for parabolic pdes.
\newblock {\em SIAM Journal on Scientific Computing}, 43(5):A3135--A3154, 2021.

\bibitem{bellman1966dynamic}
Richard Bellman.
\newblock Dynamic programming.
\newblock {\em Science}, 153(3731):34--37, 1966.

\bibitem{bertsekas1996stochastic}
Dimitri Bertsekas and Steven~E Shreve.
\newblock {\em Stochastic optimal control: the discrete-time case}, volume~5.
\newblock Athena Scientific, 1996.

\bibitem{christ2003asymptotics}
Michael Christ, James Colliander, and Terrence Tao.
\newblock Asymptotics, frequency modulation, and low regularity ill-posedness
  for canonical defocusing equations.
\newblock {\em American journal of mathematics}, 125(6):1235--1293, 2003.

\bibitem{davey2022deep}
Ashley Davey and Harry Zheng.
\newblock Deep learning for constrained utility maximisation.
\newblock {\em Methodology and Computing in Applied Probability},
  24(2):661--692, 2022.

\bibitem{de2022error}
Tim De~Ryck, Ameya~D Jagtap, and Siddhartha Mishra.
\newblock Error estimates for physics informed neural networks approximating
  the navier-stokes equations.
\newblock {\em arXiv preprint arXiv:2203.09346}, 2022.

\bibitem{deng2018long}
Yu~Deng and Nader Masmoudi.
\newblock Long time instability of the couette flow in low gevrey spaces.
\newblock {\em arXiv preprint arXiv:1803.01246}, 2018.

\bibitem{evans1998partial}
Lawrence~C Evans.
\newblock Partial differential equations.
\newblock {\em Graduate studies in mathematics}, 19(4):7, 1998.

\bibitem{fleming2012deterministic}
Wendell~H Fleming and Raymond~W Rishel.
\newblock {\em Deterministic and stochastic optimal control}, volume~1.
\newblock Springer Science \& Business Media, 2012.

\bibitem{forsyth2012optimal}
Peter~A Forsyth, J~Shannon Kennedy, Shu~Tong Tse, and Heath Windcliff.
\newblock Optimal trade execution: a mean quadratic variation approach.
\newblock {\em Journal of Economic dynamics and Control}, 36(12):1971--1991,
  2012.

\bibitem{gilbarg1977elliptic}
David Gilbarg, Neil~S Trudinger, David Gilbarg, and NS~Trudinger.
\newblock {\em Elliptic partial differential equations of second order}, volume
  224.
\newblock Springer, 1977.

\bibitem{halperin2021distributional}
Igor Halperin.
\newblock Distributional offline continuous-time reinforcement learning with
  neural physics-informed pdes (sciphy rl for doctr-l).
\newblock {\em arXiv preprint arXiv:2104.01040}, 2021.

\bibitem{han2018solving}
Jiequn Han, Arnulf Jentzen, and E~Weinan.
\newblock Solving high-dimensional partial differential equations using deep
  learning.
\newblock {\em Proceedings of the National Academy of Sciences},
  115(34):8505--8510, 2018.

\bibitem{khoo2017solving}
Yuehaw Khoo, Jianfeng Lu, and Lexing Ying.
\newblock Solving parametric pde problems with artificial neural networks.
\newblock {\em arXiv preprint arXiv:1707.03351}, 2017.

\bibitem{kingma2015adam}
Diederik~P Kingma and Jimmy Ba.
\newblock Adam: A method for stochastic optimization.
\newblock In {\em ICLR (Poster)}, 2015.

\bibitem{kirk2004optimal}
Donald~E Kirk.
\newblock {\em Optimal control theory: an introduction}.
\newblock Courier Corporation, 2004.

\bibitem{krishnapriyan2021characterizing}
Aditi Krishnapriyan, Amir Gholami, Shandian Zhe, Robert Kirby, and Michael~W
  Mahoney.
\newblock Characterizing possible failure modes in physics-informed neural
  networks.
\newblock {\em Advances in Neural Information Processing Systems}, 34, 2021.

\bibitem{li2011inverse}
Weiwei Li, Emanuel Todorov, and Dan Liu.
\newblock Inverse optimality design for biological movement systems.
\newblock {\em IFAC Proceedings Volumes}, 44(1):9662--9667, 2011.

\bibitem{lieberman1996second}
Gary~M Lieberman.
\newblock {\em Second order parabolic differential equations}.
\newblock World scientific, 1996.

\bibitem{lin2011inviscid}
Zhiwu Lin and Chongchun Zeng.
\newblock Inviscid dynamical structures near couette flow.
\newblock {\em Archive for rational mechanics and analysis}, 200(3):1075--1097,
  2011.

\bibitem{lu2021machine}
Yiping Lu, Haoxuan Chen, Jianfeng Lu, Lexing Ying, and Jose Blanchet.
\newblock Machine learning for elliptic pdes: Fast rate generalization bound,
  neural scaling law and minimax optimality.
\newblock {\em arXiv preprint arXiv:2110.06897}, 2021.

\bibitem{merton1975optimum}
Robert~C Merton.
\newblock Optimum consumption and portfolio rules in a continuous-time model.
\newblock In {\em Stochastic optimization models in finance}, pages 621--661.
  Elsevier, 1975.

\bibitem{paszke2019pytorch}
Adam Paszke, Sam Gross, Francisco Massa, Adam Lerer, James Bradbury, Gregory
  Chanan, Trevor Killeen, Zeming Lin, Natalia Gimelshein, Luca Antiga, et~al.
\newblock Pytorch: An imperative style, high-performance deep learning library.
\newblock {\em Advances in neural information processing systems},
  32:8026--8037, 2019.

\bibitem{pereira2020feynman}
Marcus Pereira, Ziyi Wang, Tianrong Chen, Emily Reed, and Evangelos Theodorou.
\newblock Feynman-kac neural network architectures for stochastic control using
  second-order fbsde theory.
\newblock In {\em Learning for Dynamics and Control}, pages 728--738. PMLR,
  2020.

\bibitem{pereira2019learning}
Marcus Pereira, Ziyi Wang, Ioannis Exarchos, and Evangelos~A Theodorou.
\newblock Learning deep stochastic optimal control policies using
  forward-backward sdes.
\newblock {\em arXiv preprint arXiv:1902.03986}, 2019.

\bibitem{pham2009continuous}
Huy{\^e}n Pham.
\newblock {\em Continuous-time stochastic control and optimization with
  financial applications}, volume~61.
\newblock Springer Science \& Business Media, 2009.

\bibitem{pham2021neural}
Huyen Pham, Xavier Warin, and Maximilien Germain.
\newblock Neural networks-based backward scheme for fully nonlinear pdes.
\newblock {\em SN Partial Differential Equations and Applications}, 2(1):1--24,
  2021.

\bibitem{raissi2019physics}
Maziar Raissi, Paris Perdikaris, and George~E Karniadakis.
\newblock Physics-informed neural networks: A deep learning framework for
  solving forward and inverse problems involving nonlinear partial differential
  equations.
\newblock {\em Journal of Computational Physics}, 378:686--707, 2019.

\bibitem{DBLP:journals/corr/abs-2203-09346}
Tim~De Ryck, Ameya~D. Jagtap, and Siddhartha Mishra.
\newblock Error estimates for physics informed neural networks approximating
  the navier-stokes equations.
\newblock {\em CoRR}, abs/2203.09346, 2022.

\bibitem{schied2009risk}
Alexander Schied and Torsten Sch{\"o}neborn.
\newblock Risk aversion and the dynamics of optimal liquidation strategies in
  illiquid markets.
\newblock {\em Finance and Stochastics}, 13(2):181--204, 2009.

\bibitem{shin2020error}
Yeonjong Shin, Zhongqiang Zhang, and George~Em Karniadakis.
\newblock Error estimates of residual minimization using neural networks for
  linear pdes.
\newblock {\em arXiv preprint arXiv:2010.08019}, 2020.

\bibitem{sieniutycz2000hamilton}
Stanislaw Sieniutycz.
\newblock Hamilton--jacobi--bellman framework for optimal control in multistage
  energy systems.
\newblock {\em Physics Reports}, 326(4):165--258, 2000.

\bibitem{sirignano2018dgm}
Justin Sirignano and Konstantinos Spiliopoulos.
\newblock Dgm: A deep learning algorithm for solving partial differential
  equations.
\newblock {\em Journal of computational physics}, 375:1339--1364, 2018.

\bibitem{wang2021understanding}
Sifan Wang, Yujun Teng, and Paris Perdikaris.
\newblock Understanding and mitigating gradient flow pathologies in
  physics-informed neural networks.
\newblock {\em SIAM Journal on Scientific Computing}, 43(5):A3055--A3081, 2021.

\bibitem{wight2020solving}
Colby~L Wight and Jia Zhao.
\newblock Solving allen-cahn and cahn-hilliard equations using the adaptive
  physics informed neural networks.
\newblock {\em arXiv preprint arXiv:2007.04542}, 2020.

\bibitem{willems1971least}
Jan Willems.
\newblock Least squares stationary optimal control and the algebraic riccati
  equation.
\newblock {\em IEEE Transactions on automatic control}, 16(6):621--634, 1971.

\bibitem{wonham1968matrix}
William~M Wonham.
\newblock On a matrix riccati equation of stochastic control.
\newblock {\em SIAM Journal on Control}, 6(4):681--697, 1968.

\bibitem{yong1999stochastic}
Jiongmin Yong and Xun~Yu Zhou.
\newblock {\em Stochastic controls: Hamiltonian systems and HJB equations},
  volume~43.
\newblock Springer Science \& Business Media, 1999.

\bibitem{yu2020backward}
Yajie Yu, Bernhard Hientzsch, and Narayan Ganesan.
\newblock Backward deep bsde methods and applications to nonlinear problems.
\newblock {\em arXiv preprint arXiv:2006.07635}, 2020.

\end{thebibliography}
\bibliographystyle{plain}

\newpage
\appendix

\section{Notation and Auxiliary Results}
\label{apdxA}
% Optionally include extra information (complete proofs, additional experiments and plots) in the appendix.
% This section will often be part of the supplemental material.

This section gives an overview of the notations used in the paper and summarizes some basic results in partial differential equation and functional analysis.

\subsection{Basic notations}

For $n\in\mathbb{N},$ we denote $\{1,2,...n\}$ by $[n]$ for simplicity in the paper.

For two Banach spaces $X,Y$, $\mathscr{L}(X,Y)$ refers to the set of continuous linear operator mapping from $X$ to $Y$.

For a mapping $f:X\to Y$, and $u,v\in X$, $df(u,v)$ denotes  Gateaux differential of $f$ at $u$ in the direction of $v$, and $f'(u)$ denotes Fréchet derivative of $f$ at $u$.

For $r>0$, and $x_0\in X$, where $X$ is a Banach space equipped with norm $\|\cdot\|_X$, $B_{r}(x_0)$ refers to $\{x\in X: \|x-x_0\|_X<r\}$.

For a function $f:X\to \mathbb{R}$, where $X$ is a measurable space. We denote by $\mathrm{supp}f$ the support set of $f$, i.e. the closure of ${\{x\in X:f(x)\neq 0\}}$.

% \begin{definition}
% \label{defPARO}
For a measurable set $\Omega\subset\mathbb{R}^n$, define the parabolic region $Q_t=Q_t(\Omega)$ as $\Omega\times[0,t]$.

The parabolic boundary $\partial_p Q_t$ is then defined as $\Omega\times\{0\}\cup \partial\Omega\times[0,t]$.

For $R>0$, the parabolic neighborhood of 0 (denoted by $Q(R)$) is defined as
$\{(x,t)\in\mathbb{R}^n\times\mathbb{R}_{\geq 0}:|x|<R,\ t<R^2\}$. 

Its parabolic boundary
$\partial_p(Q(R))$ is defined as $\{(x,t):\ |x|<R,\ t=0\ $or$\ |x|=R,\ t\in[0,R^2]\}$.

% \end{definition}

\subsection{Multi-index notations}

For $n\in\mathbb{N}$, we call an $n-$tuple of non-negative integers $\alpha\in\mathbb{N}^n$ a multi-index. We use the notation $|\alpha|=\Sigma_{i=1}^n\alpha_i,\ \alpha!=\Pi_{i=1}^n\alpha_i!$. For $x=(x_1,x_2,...x_n)\in\mathbb{R}^n$, we denote by $x^\alpha=\Pi_{i=1}^n x_i^{\alpha_i}$ the corresponding multinomial. Given two multi-indices $\alpha,\beta\in\mathbb{N}^d$, we say $\alpha\leq\beta$ if and only if $\alpha_i\leq\beta_i,\ \forall i\in[n]$.
% For a multi-index $\alpha$, we define
% \begin{align}
%     { |\alpha| \choose \alpha}:=\frac{|\alpha|!}{\alpha!}
% \end{align}

For an open set $\Omega\subset \mathbb{R}^n,\ T\in \mathbb{R}^{+}$ and a function $f(x):\Omega\to\mathbb{R}$ or $f(x,t):\Omega\times[0,T]\to \mathbb{R}$, we denote by
\begin{align}
D^{\alpha}f=\frac{\partial^{|\alpha|}f}{\partial x_1^{\alpha_1}...\partial x_n^{\alpha_n}}
\end{align}
the classical or weak derivative of $f$.

For $k\in \mathbb{R}^n$, we denote by $D^kf$ (or $\nabla^k f$) the vector whose components are $D^{\alpha}f$ for all $|\alpha|=k$, and we abbreviate $D^1f$ as $Df$.

\subsection{Norm notations}
\label{sblv}

Let $n\in\mathbb{N}^*,\ m\in \mathbb{N},\ T\in \mathbb{R}^{+}$, and $\Omega\subset \mathbb{R}^n,$ 
$Q\subset\mathbb{R}^n\times[0,T]$ be open sets.%Let $Q=Q(\Omega)$ for $t\in[0,T]$.
%We denote by $Q$ the time-space region $\Omega\times [0,t]$ for $t\in[0,T]$.
We denote by $L^p(\Omega)$ and $L^p(Q)$ the usual Lebesgue space.

The Sobolev space $W^{m,p}(\Omega)$ is defined as
\begin{align}
    \{f(x)\in L^p(\Omega):D^{\alpha}f\in L^p(\Omega),\ \forall \alpha\in \mathbb{N}^n\ with\ |\alpha|\leq m\}.
\end{align}

And we define $W^{m,p}(Q)$ as
\begin{align}
    \{f(x)\in L^p(Q):D^{\alpha}f\in L^p(Q),\ \forall \alpha\in \mathbb{N}^n\ with\ |\alpha|\leq m\}.
\end{align}

We define
\begin{align}
    \|f\|_{W^{m,p}(\Omega)}:=\left(\sum_{|\alpha|\leq m}\|D^{\alpha}f\|^p_{L^p(\Omega)}\right)^{\frac 1 p}
\end{align}
for $1\leq p<\infty$, and
\begin{align}
    \|f\|_{W^{m,\infty}(\Omega)}:=\max_{|\alpha|\leq m}\|D^{\alpha}f\|_{L^\infty(\Omega)}
\end{align}
for $p=\infty$.

We define $\|f\|_{W^{m,p}(Q)}$ for $p\in[1,\infty]$ similarly.

We will use simplified notations $\|f\|_p$ and $\|f\|_{m,p}$ for $L^p-$norm and $W^{m,p}-$norm when the domain is whole space ($\mathbb{R}^n$, $\mathbb{R}^n\times[0,T]$ or $\mathbb{R}^n\times\mathbb{R}_{\geq 0}$) or when it is clear to the reader.

$W_0^{m,p}(\Omega)$ is defined as the completion of $C_0^{\infty}(\Omega)$ under $\|\cdot\|_{m,p}$ norm. In the similar way we define $W_0^{m,p}(Q_t)$.

\subsection{Auxiliary results}
In this section, we list out several fundamental yet important results in the field of PDE and functional analysis.

To begin with, we would like to recall three useful inequalities in real analysis.

\begin{lemma}[Young's convolution inequality]
\label{Lemma A.1}
In $\mathbb{R}^n$, we define the convolution of two functions $f$ and $g$ as $(f*g)(x):=\int_{\mathbb{R}^n}f(y)g(x-y)dy$.
Suppose $f\in L^p(\mathbb{R}^n)$,  $g\in L^q(\mathbb{R}^n)$, and $\frac 1 p+ \frac 1 q=\frac 1 r+1$ with $p,q,r\in[1,\infty]$, then$\|f*g\|_r\leq\|f\|_p\|g\|_q$.
\end{lemma}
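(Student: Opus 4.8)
The plan is to deduce the inequality from the generalized Hölder inequality together with Tonelli's theorem, after a pointwise factorization of the integrand $|f(y)g(x-y)|$ that separates the ``mixed'' part $|f(y)|^{p}|g(x-y)|^{q}$ from pure powers of $|f|$ and of $|g|$. Before the main argument I would dispose of the degenerate cases. If $r=\infty$, the hypothesis forces $\tfrac1p+\tfrac1q=1$, so by Hölder's inequality and translation invariance of Lebesgue measure $|(f*g)(x)|\le\int_{\mathbb{R}^n}|f(y)||g(x-y)|\,dy\le\|f\|_p\|g\|_q$, and taking the supremum over $x$ gives the claim. If $p=1$ (hence $r=q$), or symmetrically $q=1$, Minkowski's integral inequality gives $\|f*g\|_q\le\int_{\mathbb{R}^n}|f(y)|\,\|g(\cdot-y)\|_q\,dy=\|f\|_1\|g\|_q$. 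So I may assume $1<p,q<\infty$ and $1\le r<\infty$; since $q>1$ the relation $\tfrac1r=\tfrac1p+\tfrac1q-1$ gives $r>p$, and since $p>1$ likewise $r>q$.

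The heart of the proof is the identity
\[
  |f(y)g(x-y)|=\bigl(|f(y)|^{p}|g(x-y)|^{q}\bigr)^{1/r}\cdot|f(y)|^{\,1-p/r}\cdot|g(x-y)|^{\,1-q/r},
\]
which holds because the exponents on $|f(y)|$ add up to $\tfrac pr+\bigl(1-\tfrac pr\bigr)=1$ and likewise for $|g(x-y)|$. I would then integrate in $y$ and apply Hölder's inequality with the three (finite, greater-than-one) exponents $r$, $\tfrac{pr}{r-p}$, $\tfrac{qr}{r-q}$, whose reciprocals sum to $\tfrac1r+\tfrac{r-p}{pr}+\tfrac{r-q}{qr}=\tfrac1p+\tfrac1q-\tfrac1r=1$. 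This produces, for each $x$,
\[
  \int_{\mathbb{R}^n}|f(y)g(x-y)|\,dy\ \le\ \Bigl(\int_{\mathbb{R}^n}|f(y)|^{p}|g(x-y)|^{q}\,dy\Bigr)^{1/r}\,\|f\|_p^{\,1-p/r}\,\|g\|_q^{\,1-q/r},
\]
which in particular shows that the integral defining $(f*g)(x)$ converges for a.e.\ $x$.

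Finally I would raise this inequality to the $r$-th power, integrate over $x\in\mathbb{R}^n$, pull the constant factors out, and use Tonelli's theorem together with translation invariance to evaluate $\int_{\mathbb{R}^n}\!\int_{\mathbb{R}^n}|f(y)|^p|g(x-y)|^q\,dy\,dx=\|f\|_p^p\|g\|_q^q$. This yields $\|f*g\|_r^{\,r}\le\|f\|_p^{\,r-p}\|g\|_q^{\,r-q}\cdot\|f\|_p^{p}\|g\|_q^{q}=\|f\|_p^{\,r}\|g\|_q^{\,r}$, and taking $r$-th roots finishes the argument. I do not expect a genuine obstacle: the only care needed is bookkeeping the exponents in the threefold Hölder inequality and the observation that the auxiliary exponents $\tfrac{pr}{r-p}$, $\tfrac{qr}{r-q}$ degenerate to $\infty$ precisely when $p=1$ or $q=1$, which is why those cases (and $r=\infty$) are peeled off at the start. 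As an alternative one could fix $g$ and interpolate the endpoint estimates $\|g*\,\cdot\,\|_{L^1\to L^q}\le\|g\|_q$ and $\|g*\,\cdot\,\|_{L^{q'}\to L^\infty}\le\|g\|_q$ via the Riesz--Thorin theorem, but the direct argument above is entirely self-contained.
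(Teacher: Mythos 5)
Your proof is correct and complete: the degenerate cases ($r=\infty$, $p=1$ or $q=1$) are peeled off properly, the pointwise factorization and the three-exponent H\"older application with exponents $r$, $\tfrac{pr}{r-p}$, $\tfrac{qr}{r-q}$ are the standard argument, and the final Tonelli/translation-invariance step is handled correctly. Note that the paper states Lemma~\ref{Lemma A.1} as a classical auxiliary result without any proof, so there is no paper argument to compare against; your self-contained derivation is exactly the textbook proof one would supply.
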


\begin{lemma}[Sobolev embedding theorem]
\label{Lemma A.2}
Let $\Omega$ be an open set in $\mathbb{R}^n$,  $p\in[1,\infty],$ and $\ m\leq k$ be a non-negative integer.
\begin{enumerate}[(i)]
    \item If $\frac 1 p-\frac k n>0$, and set $q=\frac {np}{n-pk}$, then $W^{m,p}\subset W^{m-k,q}$ and the embedding is continuous, i.e. there exists a constant $c>0$ such that $\|u\|_{m-k,q}\leq c \|u\|_{m,p},\ \forall u\in W^{m,p}$.
    \item If $\frac 1 p-\frac k n\leq0$, then for any $q\in[1,\infty)$, $W^{m,p}\subset W^{m-k,q}$ and the embedding is continuous.
\end{enumerate}
% such that $\frac 1 q=\frac 1 p-\frac k n$,
% then $W^{m,p}\subset W^{m-k,q}$ and the embedding is continuous i.e., there exists a constant $c>0$ such that $\|u\|_{m-k,q}\leq c \|u\|_{m,p},\ \forall u\in W^{m,p}$.
\end{lemma}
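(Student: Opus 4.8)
The plan is to reduce the whole statement to one basic estimate --- the Gagliardo--Nirenberg--Sobolev inequality on $\mathbb{R}^n$, which trades a single derivative for a gain in integrability --- and then to iterate. Throughout I would first reduce to $\Omega=\mathbb{R}^n$: for a general open set one passes from $W^{m,p}(\Omega)$ to $W^{m,p}(\mathbb{R}^n)$ by a bounded linear extension operator (available for $\mathbb{R}^n$ trivially and, say, for domains with Lipschitz boundary), or else one restricts to $W_0^{m,p}(\Omega)$ and extends by zero; in part (ii) one additionally uses $|\Omega|<\infty$ so that $L^{q'}(\Omega)\hookrightarrow L^q(\Omega)$ whenever $q\le q'$.

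The core step is to prove, for $1\le p<n$ and every $u\in C_c^{\infty}(\mathbb{R}^n)$,
\[
\|u\|_{L^{p^{*}}(\mathbb{R}^n)}\le \tfrac{(n-1)p}{n-p}\,\|Du\|_{L^{p}(\mathbb{R}^n)},\qquad p^{*}=\tfrac{np}{n-p}.
\]
For $p=1$ this is the classical argument: from $u(x)=\int_{-\infty}^{x_i}\partial_{x_i}u$ one gets $|u(x)|\le\int_{\mathbb{R}}|Du|\,dx_i$ for each $i\in[n]$, hence $|u(x)|^{n/(n-1)}\le\prod_{i=1}^{n}\bigl(\int_{\mathbb{R}}|Du|\,dx_i\bigr)^{1/(n-1)}$; integrating successively in $x_1,\dots,x_n$ and applying the generalized H\"older inequality after each integration (the Loomis--Whitney bookkeeping) yields $\|u\|_{L^{n/(n-1)}}\le\|Du\|_{L^{1}}$. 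For $1<p<n$ I would apply this to $v=|u|^{\gamma}$ with $\gamma=\tfrac{(n-1)p}{n-p}$, chosen so that $\gamma\cdot\tfrac{n}{n-1}=(\gamma-1)\tfrac{p}{p-1}=p^{*}$, estimate $\bigl\|\,|u|^{\gamma-1}|Du|\,\bigr\|_{L^{1}}$ by H\"older, and divide by $\|u\|_{L^{p^{*}}}^{\gamma-1}$. A density argument (density of $C_c^{\infty}$ in $W^{1,p}(\mathbb{R}^n)$) then extends the estimate to all of $W^{1,p}$, giving the continuous embedding $W^{1,p}(\mathbb{R}^n)\hookrightarrow L^{p^{*}}(\mathbb{R}^n)$, equivalently $W^{1,p}\hookrightarrow W^{0,p^{*}}$.

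For part (i), assuming $\tfrac1p-\tfrac kn>0$, I would iterate this one-derivative embedding $k$ times. Set $\tfrac1{p_{0}}=\tfrac1p$ and $\tfrac1{p_{j+1}}=\tfrac1{p_j}-\tfrac1n$; the hypothesis forces $p_j<n$ for $j=0,\dots,k-1$ (the tightest requirement, $\tfrac1{p_{k-1}}=\tfrac1p-\tfrac{k-1}{n}>\tfrac1n$, is exactly $\tfrac1p-\tfrac kn>0$), so at each stage and for each multi-index $\alpha$ with $|\alpha|$ at most one below the current order, $D^{\alpha}u\in W^{1,p_j}\hookrightarrow L^{p_{j+1}}$; this gives $W^{m,p}\hookrightarrow W^{m-1,p_1}\hookrightarrow\cdots\hookrightarrow W^{m-k,p_k}$ with $\tfrac1{p_k}=\tfrac1p-\tfrac kn$, i.e. $p_k=q=\tfrac{np}{n-pk}$, the constant being the product of the $k$ step constants (times a combinatorial factor from summing over $\alpha$). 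Part (ii) runs the same iteration, but it eventually reaches a space with exponent equal to $n$ or larger; there one invokes the limiting embeddings --- $W^{1,n}(\mathbb{R}^n)\hookrightarrow L^{s}$ for every finite $s$ (the $p=1$ argument applied to $|u|^{\gamma}$ with $\gamma$ large) and Morrey's inequality $W^{1,p}(\mathbb{R}^n)\hookrightarrow C^{0,1-n/p}\hookrightarrow L^{\infty}$ for $p>n$ --- possibly a second time, and then, using $|\Omega|<\infty$, lands in $W^{m-k,q}(\Omega)$ continuously for every $q\in[1,\infty)$.

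The hard part will not be any single computation but keeping the hypotheses on $\Omega$ honest: the clean whole-space estimates are what make the argument short, and transferring them to a general open $\Omega$ is precisely where one silently needs either boundary regularity (an extension operator) or the restriction to $W_0^{m,p}$, plus $|\Omega|<\infty$ in the critical and supercritical regime of part (ii). Within the whole-space core, the only genuinely delicate points are the $p=1$ Gagliardo--Nirenberg estimate --- the iterated-H\"older bookkeeping over the $n$ coordinate directions --- and the correct choice of the exponent $\gamma$ when bootstrapping from $p=1$ to general $p$; everything after that is a finite composition of these embeddings applied to the derivatives $D^{\alpha}u$.
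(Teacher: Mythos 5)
The paper gives no proof of Lemma \ref{Lemma A.2}: it is quoted as a classical auxiliary result (the Sobolev embedding theorem), with only Lemmas \ref{Lemma A.4} and \ref{Lemma A.5} proved in the appendix, so there is no internal argument to compare yours against. Your sketch is the standard textbook proof and is essentially correct: the Gagliardo--Nirenberg--Sobolev estimate $\|u\|_{L^{p^*}}\le C\|Du\|_{L^p}$, obtained for $p=1$ by the coordinate-wise integration and iterated H\"older argument and bootstrapped to $1<p<n$ via $v=|u|^{\gamma}$ with $\gamma=\frac{(n-1)p}{n-p}$ (your exponent bookkeeping $\gamma\cdot\frac{n}{n-1}=(\gamma-1)\frac{p}{p-1}=p^*$ checks out), then iterated $k$ times for part (i) --- where the constraint $\frac1p-\frac kn>0$ is indeed exactly what keeps every intermediate exponent below $n$ --- and supplemented in part (ii) by the critical and supercritical endpoints ($W^{1,n}\hookrightarrow L^{s}$ for all finite $s$, Morrey's inequality for exponents above $n$) together with finiteness of $|\Omega|$ to reach small $q$. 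Your closing caveat is also accurate and worth making explicit: as stated for an arbitrary open $\Omega$ the lemma is not literally true, and one needs either an extension property of the domain (Lipschitz boundary or a cone condition), or restriction to $W_0^{m,p}(\Omega)$ with extension by zero, plus $|\Omega|<\infty$ for the downward embeddings in part (ii); these hypotheses do hold in the bounded domains and compact-support situations where the lemma is invoked in Appendices \ref{app:proof-upper-bound}--\ref{new-thm}, so your proof, with the caveats you state, covers the generality in which the result is actually used.
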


\begin{lemma}[A special case of Gagliardo-Nirenberg inequality]
\label{Lemma A.3}
$\Omega$ is an open set in $\mathbb{R}^n$,
Let $q\in[1,\infty]$ and $\ j,k\in\mathbb{N}$, and suppose $j\neq 0$
and
$$
\begin{cases}
1<r<\infty\\
k-j-\frac n r\notin\mathbb{N}\\
\frac j k\leq \theta<1.
\end{cases}$$
If we set
\begin{align}
    \frac 1 p=\frac j n+\theta\left(\frac 1 r-\frac k n\right)+\frac{1-\theta}{q},
\end{align}
then there exists a constant $C$ independent of $u$ such that
\begin{align}
    \|\nabla^j u\|_p\leq C\|\nabla^k u\|_r^{\theta}\|u\|_q^{1-\theta},\ \forall u\in L^q(\Omega)\cap W^{k,r}(\Omega).
\end{align}
\end{lemma}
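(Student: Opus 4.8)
The plan is to prove this via the classical Gagliardo--Nirenberg argument, organized around scaling and using the auxiliary results already at hand (Young's inequality, Lemma~A.1, and the Sobolev/Morrey embeddings, Lemma~A.2). \emph{Reductions.} By density of $C_c^\infty$ and a bounded Sobolev extension it suffices to treat $u\in C_c^\infty(\mathbb{R}^n)$ (the domains in this paper are $\mathbb{R}^n$ or sets carrying such an extension). Dimensional analysis fixes the exponents: feeding the dilates $u_\lambda(x)=u(\lambda x)$ into any candidate $\|\nabla^j u\|_p\le C\|\nabla^k u\|_r^{a}\|u\|_q^{b}$ forces $a=\theta$, $b=1-\theta$ and exactly the stated relation for $1/p$. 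The same device reduces the \emph{multiplicative} inequality to the weaker \emph{additive} one: once $\|\nabla^j u\|_p\le C_0\big(\|\nabla^k u\|_r+\|u\|_q\big)$ is known, applying it to $u_\lambda$ gives $\|\nabla^j u\|_p\le C_0\big(\lambda^{(1-\theta)\mu}\|\nabla^k u\|_r+\lambda^{-\theta\mu}\|u\|_q\big)$ with $\mu=k-\tfrac n r+\tfrac n q$, and minimizing in $\lambda>0$ returns the claim; the borderline $\mu=0$ collapses to a $\theta$-independent Sobolev embedding supplied by Lemma~A.2.

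\emph{The anchor $\theta=j/k$ and the first-order core.} The additive estimate is obtained from a discrete log-convexity in the order of differentiation: for an appropriate chain of Lebesgue exponents one proves the three-term bound $\|\nabla^{i}u\|_{p_i}^{2}\le C\|\nabla^{i-1}u\|_{p_{i-1}}\|\nabla^{i+1}u\|_{p_{i+1}}$ for $1\le i\le k-1$ and telescopes it (via Young's inequality on the resulting product) to interpolate $\|\nabla^{j}u\|$ between $\|u\|_q$ and $\|\nabla^{k}u\|_r$ with weight $j/k$. The three-term bound, after translation/dilation, is the one-dimensional fact $\|g'\|_{L^{p}(\mathbb{R})}^{2}\le C\|g\|_{L^{t}(\mathbb{R})}\|g''\|_{L^{s}(\mathbb{R})}$ with $\tfrac1p=\tfrac12(\tfrac1s+\tfrac1t)$, which one gets by Taylor-expanding $g(x\pm h)$ about $x$ to isolate $g'(x)$ modulo a remainder integral of $g''$, optimizing the increment $h$ pointwise, and applying H\"older; slicing $\mathbb{R}^n$ along the coordinate axes and recombining with H\"older lifts it to $n$ dimensions (a Riesz-potential argument through Lemma~A.1 gives an alternative route to the same core estimate).

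\emph{Filling in $j/k\le\theta<1$, and where the hypotheses enter.} The anchor gives only $\theta=j/k$; a general $\theta\in[j/k,1)$ follows by H\"older interpolation in Lebesgue exponents between the anchor and a near-endpoint estimate from Lemma~A.2 --- when $(k-j)r<n$, the Sobolev embedding $\nabla^j u\in L^{p_1}$ with $\tfrac1{p_1}=\tfrac1r-\tfrac{k-j}{n}$ (the ``$\theta=1$'' endpoint); when $(k-j)r>n$, the Morrey embedding $W^{k,r}\hookrightarrow C^{j,\,k-j-n/r}$. This is precisely where the technical assumptions are used: $k-j-\tfrac nr\notin\mathbb{N}$ keeps all invoked Sobolev/Morrey exponents off the borderline (in particular excludes the case $(k-j)r=n$) that would otherwise require logarithmic corrections, $1<r<\infty$ secures the Calder\'on--Zygmund and maximal-function estimates behind Lemma~A.2 and the potential bounds, and $\theta<1$ is essential since the genuine endpoint $\theta=1$ fails once $(k-j)r\ge n$.

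\emph{Main obstacle.} The conceptual scaffold is short; the demanding, error-prone part is the exponent accounting --- verifying that every intermediate Lebesgue exponent generated by the log-convexity chain and every exponent entering the final H\"older interpolation is admissible (finite, positive, and not one of the excluded borderline values) across the \emph{whole} parameter range allowed by the hypotheses --- together with separate handling of the degenerate cases $q=\infty$, $\mu=0$, and $j=k$.
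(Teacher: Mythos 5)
The paper does not prove this lemma at all: Lemma~A.3 is quoted as a classical auxiliary fact (Nirenberg's interpolation inequality) alongside Young's inequality and the Sobolev embedding, so there is no in-paper argument to compare against. Your outline is the standard Nirenberg proof --- scaling to fix the exponents and pass between the additive and multiplicative forms, the anchor case $\theta=j/k$ via the three-term estimate $\|\nabla^{i}u\|_{p_i}^{2}\le C\|\nabla^{i-1}u\|_{p_{i-1}}\|\nabla^{i+1}u\|_{p_{i+1}}$ built from the one-dimensional Taylor/H\"older argument and slicing, and the range $j/k\le\theta<1$ by Lebesgue-exponent interpolation against the Sobolev/Morrey endpoints --- and as a strategy it is sound; your reading of the hypotheses ($1<r<\infty$, $k-j-\tfrac nr\notin\mathbb{N}$, $\theta<1$) as keeping the invoked embeddings off the borderline is also the right one.

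One step would fail as written: the opening reduction ``by density of $C_c^\infty$ and a bounded Sobolev extension'' to $u\in C_c^\infty(\mathbb{R}^n)$. The purely multiplicative inequality is simply false on a bounded open set --- take $u(x)=x_1$ on bounded $\Omega$ with $j=1$, $k=2$: the right-hand side vanishes while $\|\nabla u\|_{L^p(\Omega)}>0$ --- and the extension operator only controls $\|Eu\|_{W^{k,r}(\mathbb{R}^n)}$ by the full $\|u\|_{W^{k,r}(\Omega)}$, not $\|\nabla^k Eu\|_r$ by $\|\nabla^k u\|_{L^r(\Omega)}$ alone, so extension cannot transport the whole-space inequality to a general open $\Omega$. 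The fix is to prove the lemma on $\mathbb{R}^n$ (equivalently for compactly supported $u$), which is exactly how it is used in Appendix~D, where it is applied to the compactly supported $w_{a,c}$; for a general domain one must either add the lower-order term $+C\|u\|_q$ or replace $\|\nabla^k u\|_r$ by $\|u\|_{W^{k,r}}$ --- the blanket statement for arbitrary open $\Omega$ is an imprecision inherited from the lemma as stated. Beyond that, your write-up is a plan rather than a proof: the exponent bookkeeping along the log-convexity chain and in the final interpolation, and the degenerate cases $q=\infty$, $\mu=0$, $j=k$, are flagged but not carried out.
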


Next, we list out several basic results in second order parabolic equations, which will be of great use in Appendix C.

In the following lemmas, we denote by $\mathcal{L}_0$ the operator $\frac{\partial}{\partial t}-\Delta$.% $\partial_p Q_t$ refers to the parabolic boundary of $Q_t$ i.e., $\Omega\times\{0\}\cup \partial\Omega\times[0,t]$,and $Q(R)$ refers to
%$\{(x,t)\in\mathbb{R}^n\times\mathbb{R}_{\geq 0}:|x|^2<R,t<R^2\}$.

\begin{lemma}
\label{Lemma A.4}
Suppose $\Omega$ is bounded, and $Q_T=\Omega\times[0,T]$. 

Let $u\in W^{2,2}(Q_T)$ be the solution to

\begin{equation}
\label{eqn:lma4}
   \begin{cases}
\mathcal{L}_0u=f(x,t),&\ \ (x,t)\in Q_T\\
u=0,&\ \ (x,t)\in \partial_p Q_T,
\end{cases} 
\end{equation}

then for $2\leq p<\infty$, if $f\in L^p(Q_T)$, we have $u\in W^{2,p}(Q_T)$ and there exists a constant $C$ such that $\|u\|_{2,p}\leq C \|f\|_p$.
\end{lemma}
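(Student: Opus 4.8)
# Proof Proposal for Lemma A.4 (Interior-like $L^p$ Estimate for the Heat Equation)

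The plan is to deduce this $W^{2,p}$ estimate from the classical $L^p$ theory for parabolic equations, which I will assume is available in the form of the Calder\'on--Zygmund estimate for the heat operator $\mathcal{L}_0 = \partial_t - \Delta$. The cleanest route is the following. First, extend the source term $f$ by zero outside $Q_T$ to obtain $\tilde f \in L^p(\mathbb{R}^n \times \mathbb{R})$ with $\|\tilde f\|_{L^p(\mathbb{R}^n\times\mathbb{R})} = \|f\|_{L^p(Q_T)}$. One then forms the Newtonian-type parabolic potential $w = \Gamma * \tilde f$, where $\Gamma$ is the heat kernel (extended by $0$ for $t<0$); this $w$ solves $\mathcal{L}_0 w = \tilde f$ on all of $\mathbb{R}^n\times\mathbb{R}$. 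The singular-integral estimates for the second spatial derivatives and the time derivative of $\Gamma$ — a standard parabolic Calder\'on--Zygmund result, valid precisely for $1<p<\infty$ and in particular for $2\le p<\infty$ — give $\|D^2 w\|_{L^p} + \|\partial_t w\|_{L^p} \le C\|\tilde f\|_{L^p} = C\|f\|_{L^p(Q_T)}$, with $C=C(n,p)$.

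Next I would handle the boundary condition. The function $v := u - w$ (restricted to $Q_T$) satisfies the homogeneous equation $\mathcal{L}_0 v = 0$ in $Q_T$ with $v = -w$ on the parabolic boundary $\partial_p Q_T$. Because $v$ is caloric in $Q_T$, it is smooth in the interior; to get the global $W^{2,p}(Q_T)$ bound one invokes the $L^p$ boundary regularity theory for the heat equation on the cylinder $Q_T = \Omega\times[0,T]$ (here one uses that $\Omega$ is bounded; implicitly one also needs enough regularity of $\partial\Omega$, which is the standing assumption in this part of the paper), together with the fact that $w\in W^{2,p}$ controls the boundary data in the appropriate trace sense. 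Combining $\|v\|_{W^{2,p}(Q_T)}\le C\|w\|_{W^{2,p}(Q_T)}$ with the potential estimate yields $\|u\|_{W^{2,p}(Q_T)} \le \|v\|_{W^{2,p}(Q_T)} + \|w\|_{W^{2,p}(Q_T)} \le C\|f\|_{L^p(Q_T)}$, which is the claim. The hypothesis $u\in W^{2,2}(Q_T)$ ensures we are working with the genuine (unique) weak solution, so this $u$ is exactly $v+w$ and no ambiguity arises.

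An alternative, perhaps more self-contained, approach avoids potential theory entirely: start from the $L^2$ solution, use energy estimates and a bootstrap/duality argument, or cite the semigroup-theoretic maximal regularity of $-\Delta$ with Dirichlet boundary conditions on $L^p(\Omega)$ (the operator $-\Delta$ has bounded imaginary powers / is $R$-sectorial), which immediately gives $\partial_t u, \Delta u \in L^p(0,T;L^p(\Omega))$ with the stated norm control for data $f\in L^p(Q_T)$ and zero initial/boundary data. Either way, the estimate $\|D^2 u\|_p \le C(\|\Delta u\|_p + \text{l.o.t.})$ — i.e. recovering the full Hessian from the Laplacian — is again the Calder\'on--Zygmund inequality for the Laplacian on $\Omega$, using bounded $\Omega$.

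The main obstacle I anticipate is not any single deep estimate but rather the bookkeeping at the parabolic boundary: one must be careful that the decomposition $u = v + w$ is legitimate (the zero-extension of $f$ introduces a jump across $t=0$, so $w$ is not smooth there, but it is still in $W^{2,p}$ on $Q_T$, which is all we need), and that the boundary $L^p$-regularity theory one quotes genuinely applies to the cylinder with its parabolic boundary $\partial_p Q_T = \Omega\times\{0\}\cup\partial\Omega\times[0,T]$ and to the restricted range $2\le p<\infty$. Provided one is willing to cite the standard parabolic $L^p$ theory (e.g. Ladyzhenskaya--Solonnikov--Ural'tseva or Lieberman), the argument is routine; the restriction $p\ge 2$ in the statement is presumably imposed only because that is the range actually used later, not because of a genuine obstruction (the theory holds for all $1<p<\infty$).
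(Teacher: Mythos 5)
Your proposal follows a genuinely different path from the paper's proof. The paper never forms a Newtonian-type heat potential and never splits $u = v + w$. Instead it extends $u$ \emph{itself} by zero, setting $\hat u = u\mathbf{1}_{Q_T}$ on $\mathbb{R}^n\times\mathbb{R}_{\geq 0}$, so that $\hat u$ solves $\mathcal{L}_0\hat u = \hat f$ with zero parabolic boundary data on an arbitrarily large parabolic cylinder $Q(R)\supset Q_T$. A single citation (Proposition 7.18 of Lieberman) then gives $\|D^2\hat u\|_p \leq C\bigl(\|\hat f\|_p + R^{-1}\|D\hat u\|_p + R^{-2}\|\hat u\|_p\bigr)$; the Poincar\'e inequality on the compactly supported $\hat u$ converts $\|D^2\hat u\|_p$ into the full $\|\hat u\|_{2,p}$, and sending $R\to\infty$ absorbs the lower-order terms. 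The structural point of this argument is that all boundary analysis is pushed onto the smooth round parabolic boundary $\partial_p Q(R)$, whose contribution vanishes in the limit, so no $L^p$ boundary-regularity theory on the original cylinder $Q_T$ is ever invoked.

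Your decomposition does not sidestep that issue: the caloric correction $v = u - w$ carries nonzero boundary data $-w$ on $\partial_p Q_T$, and the estimate $\|v\|_{W^{2,p}(Q_T)}\leq C\|w\|_{W^{2,p}(Q_T)}$ that you quote \emph{is} the $L^p$ boundary-regularity estimate for the heat equation with nonhomogeneous Dirichlet data on a cylinder. Obtaining it requires the parabolic trace and inverse-trace theorems (the boundary data naturally lives in a fractional Besov-type space, not in $W^{2,p}$ of the domain), compatibility at the corner $\partial\Omega\times\{0\}$, and regularity of $\partial\Omega$. You describe this as bookkeeping, but it is the load-bearing step of your argument, and it imports a hypothesis on $\partial\Omega$ that neither the lemma's statement nor the paper's proof uses. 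The free-space Calder\'on--Zygmund estimate for $w = \Gamma * \tilde f$ is solid and plays exactly the role of Lieberman's Prop.~7.18 in the paper, so the divergence is entirely in the treatment of the boundary; there the paper's extension-by-zero device is the more economical and more hypothesis-lean route. Your semigroup alternative has the same weakness: $R$-sectoriality and maximal regularity of the Dirichlet Laplacian on $L^p(\Omega)$ again presuppose regularity of $\partial\Omega$.
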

\begin{proof}
Since $\Omega$ is bounded, we can choose an $R_0>1$ such that $Q_T\subset Q(R_0)$.
Set $\hat{u}(x,t)=u(x,t)\textbf{1}_{Q_T}$ and $\hat{f}(x,t)=f(x,t)\textbf{1}_{Q_T}$, which are extensions of $u$ and $f$ in $\mathbb{R}^n\times\mathbb{R}_{\geq 0}$, respectively. The boundary condition in Eq. (\ref{eqn:lma4}) implies that $\hat{u}\in W^{2,p}(\mathbb{R}^n\times\mathbb{R}_{\geq 0})$.

Furthermore, for any $R>R_0$, it holds that, 
$$
\begin{cases}
\mathcal{L}_0\hat{u}=\hat{f}(x,t),&\ \ (x,t)\in Q(R)\\
\hat{u}=0,&\ \ (x,t)\in \partial_p Q(R).
\end{cases}
$$

 From proposition 7.18 in \cite{lieberman1996second}, and in light of the fact that both $\hat{u}$ and $\hat{f}$ are supported on $Q_T$, we obtain that $\|D^2\hat{u}\|_p\leq C(\|\hat{f}\|_p+\frac 1 R \|D\hat{u}\|_p+\frac 1 {R^2}\|\hat{u}\|_p)$ holds for $\forall R>R_0$. Additionally, Poincaré inequality guarantees that there exists a constant $C'>0$ depending only on $\Omega$ and $ n$, such that $\|\hat{u}\|_{2,p}\leq C' \|D^2\hat{u}\|_p$.%and thus there exists $c'>0$ such that $\|Du\|_p+\|u\|_p\leq c'\|u\|_{2,p}$.

Therefore we have
\begin{align}
    \frac 1 {C'}\|\hat{u}\|_{2,p}\leq C \left(\|\hat{f}\|_p+\frac 1 R \|\hat{u}\|_{2,p}+\frac 1 {R^2}\|\hat{u}\|_{2,p}\right)&\leq C \left(\|\hat{f}\|_p+\frac 2 R \|\hat{u}\|_{2,p}\right)\\
    \left(\frac 1 {C'}-\frac {2C}R\right) \|\hat{u}\|_{2,p}&\leq C\|\hat{f}\|_p.
\end{align}
Let $R\to\infty$ and we derive $\|\hat{u}\|_{2,p}\leq CC'\|\hat{f}\|_p$. Since $\|\hat{u}\|_{W^{2,p}(\mathbb{R}^n\times\mathbb{R}_{\geq 0})}=\|u\|_{W^{2,p}(Q_T)}$ and $\|\hat{f}\|_{L^p(\mathbb{R}^n\times\mathbb{R}_{\geq 0})}=\|f\|_{L^p(Q_T)}$, we completes the proof.
\end{proof}

\begin{lemma}
\label{Lemma A.5}
Let $u$ be the solution to
$$
\begin{cases}
\mathcal{L}_0u(x,t)=0,\quad & (x,t)\in \mathbb{R}^n\times[0,T]\\
u(x,0)=g(x),\quad & x\in \mathbb{R}^n.
\end{cases}$$
For any compact set $Q\subset\mathbb{R}^n\times[0,T]$ and $p\geq 1$, let $r<\frac{(n+2)p}{n+p}$, then
\begin{enumerate}[(i)]
    \item there exists a constant $C$ such that $\|u\|_{W^{1,r}(Q)}\leq C \|g\|_{L^p(\mathbb{R}^n)}$,
    \item there exists a constant $C'$ such that $\|u\|_{W^{2,r}(Q)}\leq C' \|g\|_{W^{1,p}(\mathbb{R}^n)}$.
\end{enumerate}
% there exists a constant $C$ such that $\|u\|_{W^{1,r}(\Omega)}\leq C \|g\|_{L^p(\mathbb{R}^n)}$, where
% $r\leq\frac{(n+2)p}{n+p}$. %(\frac {n}{n+2}\frac 1 p+\frac {1}{n+2})^{-1}$. 
% In addition, there exists another constant $C'$ such that $\|u\|_{W^{2,r}(\Omega)}\leq C' \|g\|_{W^{1,p}(\mathbb{R}^n)}$.
\end{lemma}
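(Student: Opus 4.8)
I would work from the explicit heat–kernel representation. Write $\Phi_t(x)=(4\pi t)^{-n/2}e^{-|x|^2/(4t)}$ for the fundamental solution of $\mathcal L_0=\partial_t-\Delta$, so that $u(\cdot,t)=\Phi_t*g$. Fix $R$ with $Q\subset B_R\times(0,T)$; by Hölder's inequality on the bounded set $Q$ it suffices to estimate $\|u\|_{W^{1,r}(B_R\times(0,T))}$. I would first dispose of the far field: splitting $g=g\mathbf 1_{B_{4R}}+g\mathbf 1_{B_{4R}^c}$, the contribution of the second piece to $u(x,t)$ and $\nabla u(x,t)$ for $x\in B_R$ only involves $\Phi_t(x-y),\nabla\Phi_t(x-y)$ with $|x-y|\ge 3R$, and the Gaussian factor $e^{-|x-y|^2/(4t)}\le e^{-cR^2/t}$ makes this term, together with all its derivatives, uniformly bounded on $B_R\times(0,T)$ (indeed it vanishes as $t\to 0$) with bound $C_R\|g\|_{L^p}$. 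Hence I may assume $g$ supported in $B_{4R}$, with $\|g\|_{L^p}$ unchanged up to a constant.

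\textbf{Main estimates and where $r_0$ comes from.} The workhorses are the scaling identities $\|\Phi_t\|_{L^s(\R^n)}=C_s\,t^{-\frac n2(1-\frac1s)}$ and $\|\nabla\Phi_t\|_{L^s(\R^n)}=C_s\,t^{-\frac12-\frac n2(1-\frac1s)}$ (change of variables $x\mapsto\sqrt t\,x$), which through Young's convolution inequality (Lemma~\ref{Lemma A.1}) give, for every $\rho\ge p$,
\[
\|u(\cdot,t)\|_{L^\rho(\R^n)}\le C\,t^{-\frac n2(\frac1p-\frac1\rho)}\|g\|_{L^p},\qquad
\|\nabla u(\cdot,t)\|_{L^\rho(\R^n)}\le C\,t^{-\frac12-\frac n2(\frac1p-\frac1\rho)}\|g\|_{L^p}.
\]
For the value part, take $\rho=\max\{p,r\}$ and integrate the first bound over $t\in(0,T)$: the resulting power of $t$ is integrable whenever $r<\frac{(n+2)p}{n}$, which is implied by $r<r_0$, so $\|u\|_{L^r(B_R\times(0,T))}\le C\|g\|_{L^p}$. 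For the gradient part \emph{when $r\ge p$}, take $\rho=r$; then $\|\nabla u\|_{L^r(B_R\times(0,T))}^r\le\int_0^T\big(Ct^{-\frac12-\frac n2(\frac1p-\frac1r)}\|g\|_{L^p}\big)^r\,dt$, and the exponent condition $r\big(\tfrac12+\tfrac n2(\tfrac1p-\tfrac1r)\big)<1$ is, after clearing denominators, exactly equivalent to $r<\frac{(n+2)p}{n+p}=r_0$. This is the precise point at which the threshold $r_0$ is born.

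\textbf{Main obstacle: the regime $1\le r<p$.} When $r<p$ (which only bites for $p>2$, where it forces $r\in[2,r_0)$), $\nabla u(\cdot,t)$ need not lie in $L^r(\R^n)$, and merely restricting to $B_R$ via Hölder degrades the bound to $t^{-1/2}$, which closes the time integral only for $r<2$. Recovering the sharp factor $t^{-\frac12-\frac n2(\frac1p-\frac1r)}$ on $B_R$ requires a genuinely space–time argument rather than a slicewise one: away from the initial time $u$ solves the heat equation, so interior parabolic regularity controls $\|u\|_{W^{1,\infty}(B_R\times(\delta,T))}$ by $\|u\|_{L^p(B_{2R}\times(\delta/2,T))}\le C\|g\|_{L^p}$; near $t=0$ one invokes the $L^p$-theory for second-order parabolic equations (in the spirit of the estimates used to prove Lemma~\ref{Lemma A.4}), whose parabolic scaling reproduces the same power of $t$ and hence the same threshold $r<r_0$. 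This space–time step, and the bookkeeping to glue the two regions with the far-field term, is the hard part; the rest is routine.

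\textbf{Part (ii).} This reduces to part (i): if $g\in W^{1,p}(\R^n)$, then $\nabla u$ solves $\mathcal L_0(\nabla u)=0$ with initial datum $\nabla g\in L^p(\R^n)$, so part (i) gives $\|\nabla u\|_{W^{1,r}(Q)}\le C\|\nabla g\|_{L^p}$, i.e. control of all second spatial derivatives of $u$; combining with the part-(i) bound $\|u\|_{W^{1,r}(Q)}\le C\|g\|_{L^p}$ yields $\|u\|_{W^{2,r}(Q)}\le C'\|g\|_{W^{1,p}}$.
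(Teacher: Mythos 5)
Where your argument actually runs, it is the same argument as the paper's: the explicit heat-kernel representation, the scaling identities $\|\Phi_t\|_{L^s}\sim t^{-\frac n2(1-\frac1s)}$ and $\|\nabla\Phi_t\|_{L^s}\sim t^{-\frac12-\frac n2(1-\frac1s)}$, Young's convolution inequality applied slice by slice in time, integration of the resulting power of $t$ (which is exactly where the threshold $\frac{(n+2)p}{n+p}$ is born), and Hölder on the bounded set $Q$ to lower the exponent; your part (ii), differentiating the datum so that $\partial_j u$ is the heat extension of $\partial_j g\in L^p$, is a mild streamlining of the paper's identical computation with $\partial_i K*\partial_j g$. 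The far-field splitting you begin with is harmless but unnecessary, since the Young estimates are already global in space.

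The genuine problem is the regime you flag yourself: $r<p$ with $r\ge 2$ (equivalently $p>2$ and $r\in[2,\frac{(n+2)p}{n+p})$). There your proposal contains no proof, and the sketch you offer would not close it. Interior parabolic regularity on $B_R\times(\delta,T)$ carries constants that blow up as $\delta\to0$, so it cannot be summed down to the initial time; and the $L^p$-theory behind Lemma \ref{Lemma A.4} is maximal regularity for an $L^p$ forcing with \emph{zero initial data}, which says nothing about the space-time integrability of $\nabla e^{t\Delta}g$ for a datum that is merely $L^p$ --- that integrability is precisely what has to be proved, and ``parabolic scaling reproduces the same power of $t$'' is a heuristic, not an argument. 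Indeed, any slicewise bound collapses to the factor $t^{-1/2}$ once you restrict to a ball (Young forces the spatial exponent to be at least $p$, and the time singularity is smallest at exponent exactly $p$), hence only gives $r<2$; reaching all $r<\frac{(n+2)p}{n+p}$ in this regime requires a genuinely space-time device (a mixed-norm kernel estimate, or a parabolic fractional-integration/interpolation argument), not a per-time-slice one. You should also note that the paper's own proof is confined to the same slicewise mechanism: its final step chooses intermediate exponents in $[p,\frac{(n+2)p}{n+p})\cap[r,+\infty)$, an interval that is nonempty only when $p<2$, so the obstacle you isolated is real and is not dealt with in the paper's text either; but as a standalone proof your proposal is incomplete in exactly that range, and the route you gesture at for it would fail.
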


\begin{proof}
 $u$ have the explicit form
\begin{align}
\label{8}
u(x,t)=\int_{\mathbb{R}^n}\frac{a}{t^{\frac n 2}}e^{-b\frac{|x-y|^2}{t}}g(y)dy,
\end{align}
where $a=(4\pi)^{-\frac n 2}$ and $b=\frac 1 4$.

Note that $u$ is the convolution between heat kernel $K(x,t):=\frac{a}{t^{\frac n 2}}e^{-b\frac{|x|^2}{t}}$ and $g(x)$, with Lemma \ref{Lemma A.1}, we derive
\begin{align}
\label{9a}
\|u(\cdot, t)\|_{r_0}\leq\|g\|_{p_0}\|K(\cdot,t)\|_{q_0}
\end{align}
for $p_0,q_0,r_0\in[1,\infty]$ satisfying $\frac 1 {p_0}+ \frac 1 {q_0}=\frac 1 {r_0}+1$.

Due to the uniform convergence of (\ref{8}), we have $\frac {\partial u}{\partial x_i}=(\frac {\partial}{\partial x_i} K(x,t))*g(x)$ for all $i\in[n]$.
Thus we have
\begin{align}
\label{(10)}
    \|\frac {\partial u(\cdot,t)}{\partial x_i}\|_{r'}\leq \|g\|_{p'} \|\frac {\partial}{\partial x_i} K(x,t)\|_{q'}.
\end{align}
for $p',q',r'\in[1,\infty]$ satisfying $\frac {1}{p'}+ \frac {1}{q'} =\frac {1}{r'}+1$.

It is enough to decide appropriate tuples of $(p_0,q_0,r_0),(p',q',r')$.

For $q\geq 1$,
\begin{align}
    \|K(\cdot,t)\|_q^q=\frac {a^q}{t^{\frac {nq}{2}}}\int_{\mathbb{R}^n}e^{-bq\frac {\|x\|^2}{t}}dx
    =\frac {a^q}{t^{\frac {nq-n}{2}}}\int_{\mathbb{R}^n}e^{-bq\|y\|^2}dy\ \ \quad (y=\frac {1}{\sqrt{t}}x).
\end{align}
Thus $\|K(\cdot,t)\|_q=\frac {C}{t^{\frac{nq-n}{2q}}}$, where $C$ is a constant.

As a result, for any $p,q,r\in[1,\infty]$ satisfying $\frac 1 p+ \frac 1 q=\frac 1 r+1$,
\begin{align}
\label{25a}
    \|u\|_{L^r(\mathbb{R}^n\times[0,T])}^r=\int_0^T\|u(\cdot,t)\|_r^rdt
    \leq \|g\|_p^r\int_0^T\|K(\cdot,t)\|_q^rdt=C^r\|g\|_p^r\int_0^T \frac {dt}{t^{\frac{nq-n}{2q}r}}.
\end{align}
 Here we have $\|u\|_r\leq C_1 \|g\|_p$ for a constant $C_1$ $\iff$ the integral in the R.H.S. of Eq. (\ref{25a}) converges $\iff \frac{nq-n}{2q}r<1$. Then we could decide appropriate tuple $(p_0,q_0,r_0)$ for (\ref{9a}): tuples that satisfy $\frac {n}{n+2}\frac 1 {p_0}<\frac 1 {r_0}\leq \frac 1 {p_0}$. (The second inequality comes from the constraint $q_0\in [1,\infty]$. It could be removed when these $L^p-$norms are calculated in a bounded domain).

 We handle $(p',q',r')$ in (\ref{(10)}) with exactly the same method and find that tuples which satisfy $\frac {n}{n+2}\frac 1 {p'}+\frac {1}{n+2}<\frac 1 {r'}\leq \frac 1 {p'}$ gives 
 \begin{align}
 \label{a29}
\left\|\frac{\partial u}{\partial x_i}\right\|_{r'}\leq C_2 \|g\|_{p'},
\end{align}
where $C_2$ is a constant.

 Finally, note that for any bounded set $\Omega$, and $1\leq q<p$, there is a constant $C$ such that $\|v\|_{L^q(\Omega)}\leq C\|v\|_{L^p(\Omega)}$ for all $v\in L^p(\Omega)$. 
 Together with the inequalities (\ref{25a}) and (\ref{a29}), this means that %there exists a constant $C'$ such that
 for any compact set $Q\subset\mathbb{R}^n\times[0,T]$, $p\geq 1$, and $r<\frac{(n+2)p}{n+p}$, 
 \begin{align}
     \|u\|_{W^{1,r}(Q)}&=(\|u\|_{L^r(Q)}^r+\sum_{i=1}^n\|\frac{\partial u}{\partial x_i}\|_{L^r(Q)}^r)^{\frac 1 r}\\
     &\leq (C_3\|u\|_{L^{r_0}(Q)}^{r}+C_4\sum_{i=1}^n\|\frac{\partial u}{\partial x_i}\|_{L^{r'}(Q)}^r)^{\frac 1 r}\\
     &\leq(C_3\|u\|_{L^{r_0}(\mathbb{R}^n\times[0,T])}^r+C_4\sum_{i=1}^n\|\frac{\partial u}{\partial x_i}\|_{L^{r'}(\mathbb{R}^n\times[0,T])}^r)^{\frac 1 r}\\
     &\leq (C_5\|g\|^r_{L^p(\mathbb{R}^n)}+C_6\sum_{i=1}^n\|g\|^r_{L^p(\mathbb{R}^n)})^{\frac 1 r}\\
     &=C_7\|g\|_{L^p(\mathbb{R}^n)},
 \end{align}
 where $r_0,r'\in[p,\frac{(n+2)p}{n+p})\cap[r,+\infty)$ and all $C_i$ are constants.

 This gives the first statement.% the proof.
 
 Next, we prove the second statement. 
 
Note that $\frac {\partial^2 u}{\partial x_i x_j}=(\frac {\partial}{\partial x_i} K(x,t))*\frac{\partial g(x)}{\partial x_j},\ \forall i,j\in[n]$.% for all $i\in[n]$.

With the same argument for (\ref{a29}), we obtain that for any $r'',\ p''$ satisfying $\frac {n}{n+2}\frac 1 {p''}+\frac {1}{n+2}<\frac 1 {r''}\leq \frac 1 {p''}$,
\begin{equation}
 \label{a29_2}
\left\|\frac{\partial^2 u}{\partial x_ix_j}\right\|_{L^{r''}(\mathbb{R}^n\times[0,T])}\leq C \left\|\frac{\partial g(x)}{\partial x_j}\right\|_{L^{p''}(\mathbb{R}^n)},\quad \forall i,j\in[n],
\end{equation}
where $C$ is a constant.

Therefore, 
 for any compact set $Q\subset\mathbb{R}^n\times[0,T]$, $p\geq 1$, and $r<\frac{(n+2)p}{n+p}$, 
 \begin{align}
     &\|u\|_{W^{2,r}(Q)}\\
     =&\left(\|u\|_{L^r(Q)}^r+\sum_{i=1}^n\left\|\frac{\partial u}{\partial x_i}\right\|_{L^r(Q)}^r+
     \sum_{i,j=1}^n\left\|\frac{\partial^2 u}{\partial x_ix_j}\right\|_{L^r(Q)}^r\right)^{\frac 1 r}\\
     \leq & \left(C_1\|u\|_{L^{r_0}(Q)}^{r}+C_2\sum_{i=1}^n\left\|\frac{\partial u}{\partial x_i}\right\|_{L^{r'}(Q)}^r+
     C_3\sum_{i,j=1}^n\left\|\frac{\partial^2 u}{\partial x_ix_j}\right\|_{L^{r''}(Q)}^{r}\right)^{\frac 1 r}\\
     \leq &\left(C_1\|u\|_{L^{r_0}(\mathbb{R}^n\times[0,T])}^r+C_2\sum_{i=1}^n\left\|\frac{\partial u}{\partial x_i}\right\|_{L^{r'}(\mathbb{R}^n\times[0,T])}^r+
     C_3\sum_{i,j=1}^n\left\|\frac{\partial^2 u}{\partial x_ix_j}\right\|_{L^{r''}(\mathbb{R}^n\times[0,T])}^{r}\right)^{\frac 1 r}\\
     \leq  &\left(C_4\|g\|^r_{L^p(\mathbb{R}^n)}+C_5\sum_{i=1}^n\|g\|^r_{L^p(\mathbb{R}^n)}
     +C_6\sum_{i,j=1}^n\left\|\frac{\partial g}{\partial x_j}\right\|^r_{L^p(\mathbb{R}^n)}
     \right)^{\frac 1 r}\\
     \leq  & C_7\|g\|_{W^{1,p}(\mathbb{R}^n)},
 \end{align}
where $r_0,r',r''\in[p,\frac{(n+2)p}{n+p})\cap[r,+\infty)$ and all $C_i$ are constants.

This completes the proof.
 \end{proof}
%%%!!!!!!!!!!!!!! the number "(8),(9)" might be changed!!!!!!!!!!!!!

At last, we present it here a well-known result in functional analysis.%which will be of great use in Appendix C.

\begin{lemma}[Inverse function theorem in Banach space]
\label{Lemma A.6}
Let $X,Y$ be two Banach spaces, $ V \subset X$ be an open set, and $g\in C^1(V,Y)$ be a mapping. Assume $x_0\in V,\ y_0=g(x_0)$ and the inverse of the  Fréchet derivative $(g'(x_0))^{-1}\in \mathscr{L}(Y,X)$. Then there exists $r>0$ and $s>0$ such that $B_r(y_0)\subset g(V)$, $B_s(x_0)\subset V$  and $g:B_s(x_0)\to g(B_s(x_0))$ is a differmorphism.
\end{lemma}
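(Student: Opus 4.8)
\textbf{Proof proposal for Lemma~\ref{Lemma A.6} (Inverse function theorem in Banach spaces).}

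The plan is to reduce the statement to the Banach contraction mapping principle, exactly as in the finite-dimensional case, with the single adjustment that the role of ``invertibility of the Jacobian'' is played by the hypothesis $(g'(x_0))^{-1}\in\mathscr{L}(Y,X)$. First I would normalize the problem: replacing $g$ by $x\mapsto (g'(x_0))^{-1}(g(x+x_0)-y_0)$, we may assume $x_0=0$, $y_0=0$, and $g'(0)=\mathrm{id}_X$ (this is legitimate because composition with the fixed bounded invertible operator $T:=(g'(x_0))^{-1}$ preserves the $C^1$ property and the conclusion transfers back via $T$). Set $h(x):=x-g(x)$, so $h\in C^1$ near $0$, $h(0)=0$, and $h'(0)=0$. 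By continuity of $x\mapsto h'(x)=\mathrm{id}-g'(x)$ in operator norm, choose $s>0$ with $\overline{B_s(0)}\subset V$ and $\|h'(x)\|\le \tfrac12$ for all $x\in\overline{B_s(0)}$; the mean value inequality in Banach spaces (which follows from the fundamental theorem of calculus for the $C^1$ map $h$ along segments) then gives $\|h(x_1)-h(x_2)\|\le\tfrac12\|x_1-x_2\|$ on that ball.

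Next I would solve $g(x)=y$ for small $y$ by a fixed point argument. For fixed $y$ with $\|y\|<r:=s/2$, define $\Phi_y(x):=y+h(x)=y+x-g(x)$; then $x$ solves $g(x)=y$ iff $x$ is a fixed point of $\Phi_y$. On $\overline{B_s(0)}$ we have $\|\Phi_y(x)\|\le\|y\|+\|h(x)-h(0)\|\le r+\tfrac12 s = s$, so $\Phi_y$ maps the complete metric space $\overline{B_s(0)}$ into itself, and it is a $\tfrac12$-contraction there; Banach's fixed point theorem yields a unique $x=:\varphi(y)\in\overline{B_s(0)}$ with $g(\varphi(y))=y$. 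In particular $B_r(0)\subset g(V)$. Uniqueness of the fixed point also gives injectivity of $g$ on a neighborhood of $0$: if $g(x_1)=g(x_2)=y$ with $x_1,x_2\in B_s(0)$, both are fixed points of $\Phi_y$, hence equal. Shrinking, I would set the final inverse-image ball: let $s':=r$ and note $g$ restricted to $B_{s'}(0)\cap g^{-1}(B_r(0))$ is a bijection onto an open set; for cleanliness one takes the open neighborhood $U:=g^{-1}(B_r(0))\cap B_s(0)$ of $0$, on which $g:U\to B_r(0)$ is a continuous bijection with inverse $\varphi$.

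It remains to show $\varphi$ is $C^1$ (so that $g$ is a diffeomorphism onto its image). The estimate $\|h(x_1)-h(x_2)\|\le\tfrac12\|x_1-x_2\|$ rearranges, via the reverse triangle inequality, to $\|g(x_1)-g(x_2)\|\ge\tfrac12\|x_1-x_2\|$, giving Lipschitz continuity of $\varphi$ with constant $2$. For differentiability at a point $y=g(x)$, the natural candidate derivative is $(g'(x))^{-1}$; I would first check $g'(x)$ is invertible in $\mathscr{L}(Y,X)$ by writing $g'(x)=\mathrm{id}-h'(x)$ with $\|h'(x)\|\le\tfrac12<1$ and invoking the Neumann series, so $g'(x)^{-1}=\sum_{k\ge0}h'(x)^k$ exists and is bounded. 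Then the standard computation — plug $y+\delta y = g(x+\delta x)$, use $g(x+\delta x)-g(x)=g'(x)\delta x + o(\|\delta x\|)$ together with the already-established $\|\delta x\|\le 2\|\delta y\|$ to convert $o(\|\delta x\|)$ into $o(\|\delta y\|)$ — shows $\varphi'(y)=g'(x)^{-1}=g'(\varphi(y))^{-1}$. Finally, continuity of $\varphi'$ follows from continuity of $\varphi$, continuity of $g'$, and continuity of operator inversion on the set of invertible operators. The main obstacle, and the only place Banach-space-specific care is needed beyond the finite-dimensional proof, is precisely the invertibility of $g'(x)$ at \emph{nearby} points $x$: in finite dimensions this is automatic from $\det\neq0$ near $x_0$, whereas here it must be obtained from the quantitative bound $\|\mathrm{id}-g'(x)\|<1$ and the Neumann series; choosing $s$ small enough to guarantee this bound is what ties the whole argument together.
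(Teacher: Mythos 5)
Your proof is correct, but note that the paper does not actually prove Lemma~\ref{Lemma A.6}: it is stated in Appendix~A as a well-known fact from functional analysis and simply invoked later (in the proof of Lemma~\ref{Lemma C.2.}), so there is no in-paper argument to compare against. What you wrote is the standard contraction-mapping proof, and every step checks out: the normalization to $x_0=0$, $y_0=0$, $g'(0)=\mathrm{id}$; the bound $\|h'(x)\|\le\tfrac12$ on $\overline{B_s(0)}$ via continuity of $g'$ and the mean value inequality; the fixed-point solution of $g(x)=y$ for $\|y\|<s/2$ giving $B_r(y_0)\subset g(V)$ together with injectivity; the lower bound $\|g(x_1)-g(x_2)\|\ge\tfrac12\|x_1-x_2\|$ giving a Lipschitz inverse; and the Neumann-series invertibility of $g'(x)$ at nearby points, which you correctly flag as the one genuinely Banach-specific ingredient, feeding into the usual computation $\varphi'(y)=g'(\varphi(y))^{-1}$ and the continuity of $\varphi'$. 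The only cosmetic mismatch with the statement is that you exhibit a diffeomorphism $g\colon U\to B_r(0)$ with $U=g^{-1}(B_r(0))\cap B_s(0)$ rather than from a ball $B_s(x_0)$ onto its image; since $g|_U$ is a homeomorphism onto $B_r(0)$, any ball contained in $U$ is mapped onto an open set on which the same $C^1$ inverse restricts, so the stated form follows immediately and no substantive gap remains.
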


% \begin{thm}\label{thm1}
% Suppose system (Lemma \ref{L1}) satisfies Assumption (\ref{mim1}), the closed-loop system consisting of
% system (Lemma \ref{L1}), the disturbance observer (\ref{g1}) andthe proposed controller (\ref{n3}) is semi-globally ISS.
% \end{thm}

\section{Derivation of a Class of Hamilton-Jacobi-Bellman (HJB) Equations}
\label{app:derive-HJB}

For the sake of completeness of the paper, we give the derivation of a class of Hamilton-Jacobi-Bellman (HJB) Equations as below.

%Hamilton-Jacobi-Bellman equation is the core model in optimal control.
%%%!!!!!!!!!!!!! Some supplemnent on backgrounds is needed.!!!!!!!!!!!!!

To start with, we derive the general form of HJB Equation in stochastic control problem.

% The background could be read in \cite{yong1999stochastic}.

In stochastic control, the state function $\{X_t\}_{0\leq t \leq T}$ is a stochastic process, where $T$ is the time horizon of the control problem. The evolution of the state function is governed by the following stochastic differential equation:
\begin{equation}
\label{eq:general-control1}
\left\{
\begin{array}{ll}
    \diff X_s = m(s,X_s)\diff s+\sigma \diff W_s & s\in[t,T] \\
    X_t=x
\end{array}
\right.,
\end{equation}
where $m:[t, T]\times\mathbb{R}^n\to \mathbb{R}^n$ is the control function and $\{W_s\}$ is a standard $n$-dimensional Brownian motion.

Given a control function $m=(m_1(s,y),m_2(s,y),...m_n(s,y)),\ s\in[t,T]$, $y\in\mathbb{R}^n$%:[t,T]\to\mathbb{R}^n$
, its total cost is defined as $J_{x,t}(m)=\E\int_t^T r(X_s,m(s,X_s),s)\diff s+ g(X_T)$,
where $r:\mathbb{R}^n\times\mathbb{R}^n\times [0,T]\to\mathbb{R}$ measures the cost rate during the process and $g:\mathbb{R}^n\to\mathbb{R}$ measures the final cost at the terminal state. The expectation is taken over the randomness of the trajectories. 

We are interested in finding a control function that minimizes the total cost for a given initial state. Formally speaking, we define the \textit{value function} of the control problem (\ref{eq:general-control1}) as $u(x,t)=\min\limits_{m\in \mathcal{M}}J_{x,t}(m)$, where $\mathcal{M}$ denotes the set of possible control functions that we take into consideration. 

% It can be obtained that the value function will follow a particular partial differential equation as stated below.

% For $s\in[0,T]$,let $X_s\in \mathbb{R}^n$, a random variable, be the state at time $s$.
% The evolution of $X_s$ obeys the law
% $$
% \begin{cases}
% dX_s=m_sds+\sigma dW_s,\ \ s\in[t,T],\\
% X_{t}=\textbf{1}_{x}
% \end{cases}$$
% for a fixed $x\in \mathbb{R}^n$, where $m_s=(m_1(s),m_2(s),...m_n(s))$ is the control(policy) at time $s$ and $W_s$ is standard Brownian motion in $\mathbb{R}^n$.

% For a type of cost function $r(x,y):\mathbb{R}^n\times\mathbb{R}^n\to\mathbb{R}$, the total cost is defined as
% \begin{align}
% J_{x,t}(m):=\mathbb{E}\int_t^T r(X_s,m_s)ds+ g(X_T),
% \end{align}
% and the value function is defined as $u(x,t):=\min_m J_{x,t}(m)$.

It is obvious that $u$ satisfies $u(x,T)=g(x)$. In addition, according to
dynamical programming principle, we have
\begin{align}
\label{B29}
u(x,t)=\min\limits_{m\in \mathcal{M}}\mathbb{E}(\int_t^{t+h} r(X_s,m(s,X_s),s)ds+ u(X_{t+h},t+h)),
\end{align}

With Ito's formula, we derive
\begin{align}
\label{b16}
u(X_{t+h},t+h)=u(x,t)+(\partial_t u+\frac 1 2 \sigma^2 \Delta u)h+\nabla u\cdot(m(t,x) h+\sigma (W_{t+h}-W_t))+o(h)
\end{align}

After taking expectation and some calculation, we derive from (\ref{b16}),
\begin{align}
0=(\partial_t u+\frac 1 2 \sigma^2 \Delta u)h+\min\limits_{m\in \mathcal{M}}\mathbb{E}(\int_t^{t+h} r(X_s,m(s,X_s),s)ds+\nabla u\cdot m(t,x) h)+o(h)\\
0=\partial_t u(x,t)+\frac 1 2 \sigma^2 \Delta u(x,t)+\min\limits_{m\in \mathcal{M}}( r(x,m(t,x),t)+\nabla u\cdot m(t,x))
\end{align}

Then we get HJB equation
\begin{equation}
        \begin{cases}
    \label{B.hjb}
\partial_t u(x,t)+\frac 1 2 \sigma^2 \Delta u(x,t)+\min\limits_{m\in \mathcal{M}}( r(x,m(t,x),t)+\nabla u\cdot m(t,x))=0\\
u(x,T)=g(x).
\end{cases}
\end{equation}

Next, we further simplify this equation in some special cases.

In practice, different components of the state have different meanings, and thus the effects of controlling corresponding components have different significance. Therefore, the cost function's dependence on each component of $m_t$ takes a very different form.

Based on this argument, we consider the case when $r(x,y)$ takes the form
\begin{align}
\label{costB}
    r(x,y,t)=\sum_{i=1}^n a_i|y_i|^{\alpha_i}-\varphi(x,t)
\end{align}
for some appropriate function $\varphi$ and $a_i\geq0,\alpha_i>1$ (if $\alpha_i\leq 1$, the minimizing term might be $-\infty$),$\forall i\in[n]$.

Denote $m(t,x)=(m_1(t,x),m_2(t,x),...m_n(t,x))$ as $y\in \mathbb{R}^n$, and $\frac{\partial u(x,t)}{\partial x_i}$ as $\partial_i u$, and suppose that $\mathcal{M}$ is so large that it includes the global minimizor of (\ref{B29}), then the third term in HJB equation (\ref{B.hjb}) could be written as
\begin{align}
\label{b20}
    \mathop{\min}_{y\in \mathbb{R}^n}( -\varphi(x,t)+\sum_{i=1}^n (a_i|y_i|^{\alpha_i}+y_i\partial_i u))=\varphi(x,t)+\sum_{i=1}^n\mathop{\min}_{y_i\in \mathbb{R}}(a_i|y_i|^{\alpha_i}+y_i\partial_i u).
\end{align}

With some simple computation, we get
\begin{align}
    \mathop{\min}_{y_i\in \mathbb{R}}(a_i|y_i|^{\alpha_i}+y_i\partial_i u)=\left(\frac{a_i}{(a_i\alpha_i)^{\frac{\alpha_i}{\alpha_i-1}}}-\frac{1}{(a_i\alpha_i)^{\frac{1}{\alpha_i-1}}}\right)|\partial_i u|^{\frac{\alpha_i}{\alpha_i-1}}.
\end{align}
As a result, HJB equation in this case is
\begin{equation}
\label{B.hjb.derive}
  \begin{cases}
\partial_t u(x,t)+\frac 1 2 \sigma^2 \Delta u(x,t)-\varphi(x,t)-\sum_{i=1}^n
A_i|\partial_i u|^{c_i}=0\\
u(x,T)=g(x),
\end{cases}  
\end{equation}

where $A_i={(a_i\alpha_i)^{-\frac{1}{\alpha_i-1}}}-{a_i}{(a_i\alpha_i)^{-\frac{\alpha_i}{\alpha_i-1}}}\in(0,+\infty)$ and $c_i={\frac{\alpha_i}{\alpha_i-1}}\in(1,+\infty)$.

%\textbf{Remark 1.}

% \begin{remark}
% \label{rk4}
% The existence and uniqueness of the solution to Eq. (\ref{B.hjb.derive}) 
% \end{remark}

\begin{remark}
\label{1rk}
After taking the transform $v(x,t):=u(x,T-t)$, the equation above becomes
\begin{equation}
\label{B.ini.hjb}
    \begin{cases}
\partial_t v(x,t)-\frac 1 2 \sigma^2 \Delta v(x,t)+\sum_{i=1}^n
A_i|\partial_i v|^{c_i}=-\varphi(x,T-t)\\
v(x,0)=g(x).
\end{cases}
\end{equation}

We will study this equation in the rest of the paper instead.
\end{remark}

%\textbf{Remark 2.}
\begin{remark}
\label{2rk}
The minimizer of (\ref{b20}) is $y_i^*=(\frac{|\partial_i u|}{a_i\alpha_i})^{\frac {1}{\alpha_i-1}}$ and this gives the i-th component of the optimal control $m^*$. Based on the fact that both the value function $u$ and the optimal control $m^*$ are of interest in applications, it is necessary to study this equation in $W^{1,p}$ space.

Moreover, in most cases, only a bounded domain $\Omega\subset{\mathbb{R}^n}$ is taken into consideration in both real applications and numerical experiments. Therefore, we study this equation in the space of $W^{1,p}(\Omega\times[0,T])$ for a bounded domain $\Omega$, instead of $W^{1,p}(\mathbb{R}^n\times[0,T])$.
\end{remark}

\begin{remark}
%\textbf{Remark 3.}
\label{rk3}
The form of cost function (\ref{costB}) we investigate in the paper is a generalization of the widely-used power-law cost (or utility) function, which is representative in optimal control. For example, in financial markets, we often face power-law trading cost in optimal execution problems \cite{forsyth2012optimal,schied2009risk}. The cost function in Linear–Quadratic–Gaussian control and Merton's portfolio model (constant relative risk aversion utility function in \cite{merton1975optimum}) is also of this form. Therefore, we believe our theoretical analysis for this class of HJB equation is relevant for practical applications. 
%According to Stone-Weierstrass theorem, any continuous regret function $r(x,y)$ could be uniformly approximated by a polynomial $p(x,y)$ in a compact domain, thus the simplified and approximated HJB equation would have similar form with what we have derived above. To simplified the exposition, we only focus on the HJB equations described above.
\end{remark}

\section{Proof of Theorem \ref{thm:stb0}}
\label{app:proof-upper-bound}
In this section, we give the proof of an equivalent statement of Theorem \ref{thm:stb0}.%  we give the proof of Theorem \ref{thm:stb0}. 

In light of remark \ref{1rk}, it is equivalent to consider the stability property (as is defined in Definition \ref{def_stb}) for the following equation:
\begin{equation}
\label{eqC}
    \begin{cases}
\partial_t u(x,t)-\frac 1 2 \sigma^2 \Delta u(x,t)+\sum_{i=1}^n
A_i|\partial_i u|^{c_i}=h(x,t)\quad \ (x,t)\in\mathbb{R}^n\times[0,T]\\
%\tilde{\mathcal{B}}_{\mathrm{HJB}}u:=
u(x,0)=g(x),
\end{cases}
\end{equation}

where $A_i>0$, $c_i\in(1,\infty)$, and $h(x,t)$ corresponds to $-\varphi(x,T-t)$ in Eq. (\ref{B.ini.hjb}). Without loss of generality, we assume $\sigma=\sqrt{2}$ for simplicity in the discussion below.

We define operators $\mathcal{L}_0:=\frac{\partial}{\partial t}-\Delta$, $\tilde{\mathcal{L}}_{\mathrm{HJB}}u:=\mathcal{L}_0u+\sum_{i=1}^n A_i|\partial_i u|^{c_i}$ and $\tilde{\mathcal{B}}_{\mathrm{HJB}}u(x,t):=u(x,0)$ for clarity. We define $\bar{c}$ as $\max\limits_{i\in[n]}c_i$.

We start with the proof of some auxiliary results.

\begin{lemma}
\label{Lemma C.1.}
For every $c>1$, there exist $k\in\mathbb{N}$, $\{t_i\}_{i=1}^{k}$ satisfying $1\leq t_1 <t_2<...<t_{k}<c$, and $k$ power functions $f_1,...f_{k}$ whose orders are strictly less than $c$ and no smaller than 0, such that
\begin{align}
    (b+w)^c-b^c-w^c\leq \sum_{i=1}^{k}f_i(b)w^{t_i},\ \forall b,w \geq 0.
\end{align}
\end{lemma}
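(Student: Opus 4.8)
The plan is to prove this directly as a pointwise inequality by an elementary one-variable estimate, splitting on whether $c\le 2$ or $c>2$. Write $\Phi(b,w):=(b+w)^c-b^c-w^c$; since $\Phi$ is symmetric it is enough to bound it, and throughout I abbreviate $M=b\vee w$, $\mu=b\wedge w$, so that $\Phi(b,w)=\Phi(M,\mu)$ with $M\ge\mu\ge 0$.

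The core estimate is this: for $c\ge 1$ the map $t\mapsto t^c$ has nondecreasing derivative $ct^{c-1}$ on $[0,\infty)$, so $(M+\mu)^c-M^c=\int_M^{M+\mu}ct^{c-1}\,\diff t\le c(M+\mu)^{c-1}\mu$. Dropping the nonnegative term $\mu^c$ and using $M+\mu\le 2M$ yields
\[
\Phi(b,w)\ \le\ (M+\mu)^c-M^c\ \le\ c(M+\mu)^{c-1}\mu\ \le\ c\,2^{c-1}M^{c-1}\mu .
\]
This already has the right shape; what remains is to re-express $M^{c-1}\mu$ in the variables $b,w$ with admissible exponents.

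For $c\in(1,2]$ I claim $M^{c-1}\mu\le b^{c-1}w$: if $b\ge w$ this is an equality, while if $w>b$ then $M^{c-1}\mu=w^{c-1}b=b^{c-1}w\,(w/b)^{c-2}\le b^{c-1}w$, since $w/b>1$ and $c-2\le 0$. Hence $\Phi(b,w)\le c\,2^{c-1}b^{c-1}w$, and we take $k=1$, $t_1=1$, $f_1(b)=c\,2^{c-1}b^{c-1}$; here $t_1=1\in[1,c)$ because $c>1$, and $f_1$ has order $c-1\in[0,c)$. For $c>2$ I instead use the trivial bound $M^{c-1}\mu\le b^{c-1}w+b\,w^{c-1}$ (the left side is exactly one of the two summands), so $\Phi(b,w)\le c\,2^{c-1}\bigl(b^{c-1}w+b\,w^{c-1}\bigr)$; take $k=2$, $t_1=1<t_2=c-1<c$ (the strict inequality $1<c-1$ uses $c>2$), with $f_1(b)=c\,2^{c-1}b^{c-1}$ of order $c-1$ and $f_2(b)=c\,2^{c-1}b$ of order $1$, both orders lying in $[0,c)$. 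The degenerate cases $w=0$ (both sides vanish) and $b=0$ with $c<2$ (where $b^{c-1}w=0$ still dominates $\Phi(0,w)=0$) are checked directly.

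The main obstacle — and the reason the hypothesis insists on $t_i\ge 1$ rather than merely $t_i>0$ — is that the naive estimate produces a term $b\,w^{c-1}$ whose $w$-exponent $c-1$ is $<1$ when $c<2$, which is inadmissible; the resolution is precisely the observation above that for $c\le 2$ this term is pointwise dominated by $b^{c-1}w$ and can therefore be discarded. A secondary pitfall to avoid is symmetrizing the bound $(M+\mu)^{c-1}\le 2^{c-1}(b^{c-1}+w^{c-1})$ too early, which would reintroduce a forbidden $w^c$ contribution; keeping the $\max/\min$ split (bounding $(M+\mu)^{c-1}$ by $(2M)^{c-1}$) prevents this.
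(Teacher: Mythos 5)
Your proof is correct, and it takes a genuinely different route from the paper's. The paper first disposes of integer $c$ by the binomial expansion and then, for non-integer $c$, shows that $F_{b,w}(c)=(b+w)^c-b^c-w^c$ is increasing in $c$ whenever $\max\{b,w\}\ge 1$, so it can be compared with $F_{b,w}(\lceil c\rceil)$ and expanded binomially, while the region $\max\{b,w\}<1$ is handled by a mean value theorem bound giving the extra term $c2^{c-1}w$; the resulting right-hand side has about $\lceil c\rceil$ terms with integer $w$-exponents. You instead prove a single pointwise estimate $(M+\mu)^c-M^c\le c(M+\mu)^{c-1}\mu\le c2^{c-1}M^{c-1}\mu$ with $M=\max\{b,w\}$, $\mu=\min\{b,w\}$, and then convert $M^{c-1}\mu$ back to the variables $(b,w)$: for $c\in(1,2]$ it is dominated by $b^{c-1}w$ (your exponent computation $w^{c-1}b=b^{c-1}w(w/b)^{c-2}\le b^{c-1}w$ is correct, and the $b=0$, $w=0$ degeneracies are handled), while for $c>2$ it is dominated by $b^{c-1}w+bw^{c-1}$, giving $k\le 2$ with exponents $t_1=1$ and $t_2=c-1$ and coefficient orders $c-1$ and $1$, all within the required ranges $[1,c)$ and $[0,c)$. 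Your argument is shorter, needs no case split on whether $c$ is an integer, and yields a sharper conclusion (at most two terms, a single term when $c\le 2$); the paper's version produces more terms with integer $w$-exponents but is equally adequate for the downstream use in Lemmas C.2 and C.3, which only need $t_i\in[1,c)$, coefficient orders in $[0,c)$, and boundedness of $f_i(|\partial_i u^*|)$ on compact sets — all of which your construction provides. Your closing remarks about why $t_i\ge 1$ forces discarding the $bw^{c-1}$ term when $c<2$, and about not symmetrizing $(M+\mu)^{c-1}$ prematurely, are accurate observations rather than gaps.
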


%\textit{Proof.}
\begin{proof}
Obviously, the inequality holds when $c\in \mathbb{N}$ because of the binomial expansion. We will only consider the case when $c\notin\mathbb{N}$.

\textit{Step 1.}We prove that for any $b,w\geq 0$ such that $\max\{b,x\}\geq 1$,
$\ F_{b,w}(c):=(b+w)^c-b^c-w^c $ is monotone increasing.

Without loss of generality, suppose $b\geq w$. Then
\begin{align}
    F'(c)&=(b+w)^c\ln(b+w)-b^c\ln b-w^c\ln w\\
    &=(b+\eta)^{c-1}(1+c\ \ln(b+\eta))w-w^c\ln w
\end{align}
holds for an $\eta\in(0,w)$ (mean value theorem).

Thus
\begin{align}
    F'(c)\geq w^c(1+c\ \ln(b+\eta))-w^c\ln w>w^c(1+\ln(b+\eta)-\ln w)>w^c>0.
\end{align}
The inequalities rely on the assumption $b\geq 1$, which means $\ln(b+\eta)>0$, and the first inequality comes from $b\geq w$.

This completes the proof in this step.

\textit{Step 2.} We then construct $k,\{t_i\}_{i=1}^k,\{f_i\}_{i=1}^k$ stated in the lemma.

Set $n=\lceil c\rceil$. By virtue of the increasing property of $F(c)$ when max$\{b,w\}\geq 1$, we get
\begin{align}
    F_{b,w}(c)\leq F_{b,w}(n)=\sum_{i=1}^{n-1} \binom{n}{i}b^{n-i}\cdot w^i.
\end{align}

When $b,w$ satisfies max$\{b,w\}< 1$,
\begin{align}
    F_{b,w}(c)\leq (b+w)^c-b^c=c(b+\eta)^{c-1}w<c 2^{c-1}w
\end{align}
holds for an $\eta\in(0,w)$.
The equality is an application of mean value theorem, and the second inequality comes from the fact that $\eta\in(0,w)$.

To conclude, $(b+w)^c-b^c-w^c\leq c 2^{c-1}w+\sum_{i=1}^{n-1}\binom{n}{i}b^{n-i}\cdot w^i,\ \forall b,w\geq 0.$
Since $c\notin\mathbb{N}$, which means $n-1<c$, this completes the proof.
\end{proof}

\begin{lemma}
\label{Lemma C.2.}
Suppose $u^{*}$ is the exact solution to
$$
\begin{cases}
\tilde{\mathcal{L}}_{\mathrm{HJB}}u=h\quad (x,t)\in \mathbb{R}^n\times[0,T]\\
\tilde{\mathcal{B}}_{\mathrm{HJB}}u=g
\end{cases}.
$$
Fix a bounded open set $\Omega\subset\mathbb{R}^n$.
Suppose $u_1$ % is the approximation for $u^{*}$.
%Assume
satisfies
$\tilde{\mathcal{B}}_{\mathrm{HJB}}u^{*}=\tilde{\mathcal{B}}_{\mathrm{HJB}}u_1$ and that $\mathrm{supp}(u_1-u^{*})\subset Q_T(\Omega)$. % is compact in $\mathbb{R}^n\times\mathbb{R}_{\geq 0}$.
%Define $f(x,t):=\tilde{\mathcal{L}}_{\mathrm{HJB}}u_1-h$. 
Recall $c_i$ are parameters in the operator $\tilde{\mathcal{L}}_{\mathrm{HJB}}$.
Let $p\in [2,\infty)$.

If $p\geq n\cdot \mathop{\max}\limits_{i\in[n]}\frac{c_i-1}{c_i}=(1-{\bar{c}}^{-1})n$, then there exists $\delta_0>0$ such that, when $\|\tilde{\mathcal{L}}_{\mathrm{HJB}}u_1-h\|_p<\delta_0$, we have $\|u^{*}-u_1\|_{2,p}\leq C\|\tilde{\mathcal{L}}_{\mathrm{HJB}}u_1-h\|_p$ for a constant $C$ independent of $u_1$.
\end{lemma}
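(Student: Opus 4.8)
\emph{Set-up and linearization.} The plan is to estimate $w:=u_1-u^{*}$ directly. Since $\mathrm{supp}(u_1-u^{*})$ is a compact subset of $Q_T(\Omega)$ and $\tilde{\mathcal{B}}_{\mathrm{HJB}}u_1=\tilde{\mathcal{B}}_{\mathrm{HJB}}u^{*}$, the function $w$ vanishes on the parabolic boundary $\partial_p Q_T(\Omega)$ (and is smooth enough that $w\in W^{2,2}(Q_T(\Omega))$, which is the relevant regularity for the neural-network solutions of interest). Write $R:=\tilde{\mathcal{L}}_{\mathrm{HJB}}u_1-h$ for the residual; the goal is $\|w\|_{2,p}\le C\|R\|_p$ once $\|R\|_p<\delta_0$. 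Subtracting the two equations and writing $\phi_i(t):=|t|^{c_i}$, one has $R=\mathcal{L}_0 w+\sum_i A_i(\phi_i(\partial_i u^{*}+\partial_i w)-\phi_i(\partial_i u^{*}))$. I would split off the part linear in $w$: set $\mathcal{L}_{\mathrm{lin}}w:=\mathcal{L}_0 w+\sum_i A_i\,\phi_i'(\partial_i u^{*})\,\partial_i w$, so that $\mathcal{L}_{\mathrm{lin}}w=R-\sum_i A_i N_i(w)$ with $N_i(w):=\phi_i(\partial_i u^{*}+\partial_i w)-\phi_i(\partial_i u^{*})-\phi_i'(\partial_i u^{*})\,\partial_i w$ the first-order Taylor remainder.

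\emph{Linear estimate.} Because $u^{*}$ is the (locally smooth) exact solution, $\partial_i u^{*}\in L^{\infty}(Q_T(\Omega))$, and since $c_i>1$ the map $\phi_i'(t)=c_i|t|^{c_i-2}t$ is continuous with $\phi_i'(0)=0$; hence $\mathcal{L}_{\mathrm{lin}}$ is uniformly parabolic with bounded coefficients. I would then combine Lemma \ref{Lemma A.4} applied to $\mathcal{L}_0$, an interpolation inequality (Lemma \ref{Lemma A.3}) to absorb the first-order terms $\phi_i'(\partial_i u^{*})\partial_i w$ into the second-order norm up to a small constant plus a multiple of $\|w\|_{L^p}$, and a standard $L^p$ energy/Grönwall estimate using $w(\cdot,0)=0$ to control $\|w\|_{L^p}$, obtaining $\|w\|_{2,p}\le C_{\mathrm{lin}}\|\mathcal{L}_{\mathrm{lin}}w\|_p$ for every $w$ vanishing on $\partial_p Q_T(\Omega)$. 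Consequently $\|w\|_{2,p}\le C_{\mathrm{lin}}\|R\|_p+C_{\mathrm{lin}}\sum_i A_i\|N_i(w)\|_p$.

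\emph{Nonlinear remainder and the role of the hypothesis.} It remains to bound $\|N_i(w)\|_p$ by a superlinear power of $\|w\|_{2,p}$. By the mean value theorem $N_i(w)=(\phi_i'(\xi_i)-\phi_i'(\partial_i u^{*}))\partial_i w$ for an intermediate value $\xi_i$: for $c_i\in(1,2)$ the function $t\mapsto|t|^{c_i-2}t$ is globally $(c_i-1)$-Hölder, giving $|N_i(w)|\le C|\partial_i w|^{c_i}$; for $c_i\ge 2$ one has $\phi_i\in C^2$, giving $|N_i(w)|\le C(|\partial_i u^{*}|+|\partial_i w|)^{c_i-2}|\partial_i w|^{2}\le C'(|\partial_i u^{*}|^{c_i-2}|\partial_i w|^{2}+|\partial_i w|^{c_i})$. (The algebraic inequality behind this case split — controlling $|t|^{c_i}$ near the origin by nonnegative lower-order powers — is exactly of the type established in Lemma \ref{Lemma C.1.}.) Taking $L^p$-norms, using Hölder and the local smoothness of $u^{*}$ to place the fixed powers of $\partial_i u^{*}$ in a high Lebesgue space, yields $\|N_i(w)\|_p\le C(\|\partial_i w\|_{c_i p}^{c_i}+\|\partial_i w\|_{c_i p}^{\min(c_i,2)})$. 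Here the hypothesis $p\ge(1-\bar c^{-1})n$ is precisely the condition under which Sobolev embedding (Lemma \ref{Lemma A.2}) gives $W^{2,p}(\Omega)\hookrightarrow W^{1,c_i p}(\Omega)$ for every $i$, so that $\|\partial_i w\|_{c_i p}\le C_S\|w\|_{2,p}$; together with $p\ge 2$ (needed for Lemma \ref{Lemma A.4}) this gives $\|N_i(w)\|_p\le C\,\|w\|_{2,p}^{\gamma_0}$ with $\gamma_0=\min_i\min(c_i,2)>1$ whenever $\|w\|_{2,p}\le 1$.

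\emph{Closing the argument — the main obstacle.} Combining the two estimates gives $\|w\|_{2,p}\le C_{\mathrm{lin}}\|R\|_p+C\|w\|_{2,p}^{\gamma_0}$ with $\gamma_0>1$, from which a routine absorption argument yields $\|w\|_{2,p}\le 2C_{\mathrm{lin}}\|R\|_p$ \emph{provided one already knows $\|w\|_{2,p}$ lies below the threshold where $C\|w\|_{2,p}^{\gamma_0-1}\le\tfrac12$.} Supplying this a priori smallness is the delicate point, since the inequality by itself does not rule out a large $\|w\|_{2,p}$. I would resolve it via the inverse function theorem in Banach spaces (Lemma \ref{Lemma A.6}): the map $\mathcal{G}(v):=\tilde{\mathcal{L}}_{\mathrm{HJB}}(u^{*}+v)-h$ is $C^1$ from a neighbourhood of $0$ in $W_0^{2,p}(Q_T(\Omega))$ into $L^p(Q_T(\Omega))$ — the only subtlety being $C^1$-regularity of the Nemytskii-type nonlinearity $v\mapsto|\partial_i u^{*}+\partial_i v|^{c_i}$ when $c_i<2$, which again follows from the Hölder continuity of $t\mapsto|t|^{c_i-2}t$ together with the embedding $W^{2,p}\hookrightarrow W^{1,c_i p}$ — while $\mathcal{G}'(0)=\mathcal{L}_{\mathrm{lin}}$ is invertible by the linear estimate above; Lemma \ref{Lemma A.6} then produces a $C^1$ (hence locally Lipschitz) inverse on a ball $B_{\delta_0}(0)\subset L^p$, and uniqueness for the (perturbed) HJB initial–boundary value problem identifies $u_1-u^{*}$ with $\mathcal{G}^{-1}(R)$, giving $\|u_1-u^{*}\|_{2,p}\le C\|R\|_p$. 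I expect the variable-coefficient linear maximal-regularity estimate (in particular the absorption of $\|w\|_{L^p}$) and the remainder bound in the non-smooth regime $c_i\in(1,2)$ to be the steps demanding the most care.
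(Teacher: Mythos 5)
Your proposal is correct and follows the same overall strategy as the paper's proof: reduce to an estimate of $w=u_1-u^{*}$ via the constant-coefficient $W^{2,p}$ maximal-regularity estimate (Lemma~\ref{Lemma A.4}), control the nonlinearity by Sobolev embedding (Lemma~\ref{Lemma A.2}) under the hypothesis $p\geq(1-\bar{c}^{-1})n$ to arrive at a superlinear inequality for $M:=\|w\|_{2,p}$, and then — you correctly identified this as the essential closing move — invoke the inverse function theorem (Lemma~\ref{Lemma A.6}) and injectivity of $\tilde{\mathcal{L}}_{\mathrm{HJB}}$ near $u^{*}$ in $W_0^{2,p}(Q_T(\Omega))$ to exclude the spurious large-$M$ branch. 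The one genuine difference is that you linearize $\tilde{\mathcal{L}}_{\mathrm{HJB}}$ about $u^{*}$, moving the drift $\sum_i A_i\,\phi_i'(\partial_i u^{*})\,\partial_i w$ to the left-hand side and bounding the strictly superlinear Taylor remainder $N_i(w)$, whereas the paper keeps only $\mathcal{L}_0$ on the left and controls the full nonlinear difference $\sum_i A_i\bigl(|\partial_i u^{*}+\partial_i w|^{c_i}-|\partial_i u^{*}|^{c_i}\bigr)$ directly using the algebraic inequality of Lemma~\ref{Lemma C.1.}. Your variant buys a cleaner polynomial inequality — the remainder is genuinely of order $M^{\gamma_0}$ with $\gamma_0>1$, so $M\leq C\|R\|_p+CM^{\gamma_0}$ carries no linear term competing with the left side — but at the cost of needing a maximal-regularity estimate $\|w\|_{2,p}\leq C_{\mathrm{lin}}\|\mathcal{L}_{\mathrm{lin}}w\|_p$ for a \emph{variable-coefficient} operator, which you only sketch: the proposed ``interpolation plus Grönwall'' absorption of $\|Dw\|_p$ and $\|w\|_p$ is not as routine as it reads (combining Ehrling with Poincaré on a fixed domain threatens to be circular), and what you really want here is the standard Solonnikov/Lieberman $L^p$ theory for linear parabolic operators with continuous coefficients. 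You also rightly flag a point the paper passes over silently when it appeals to Lemma~\ref{Lemma A.6}: the $C^1$ (Fréchet) differentiability of the Nemytskii map $v\mapsto|\partial_i u^{*}+\partial_i v|^{c_i}$ from $W_0^{2,p}$ to $L^p$ is not immediate when $c_i<2$, and the argument you sketch via Hölder continuity of $t\mapsto|t|^{c_i-2}t$ together with $W^{2,p}\hookrightarrow W^{1,c_ip}$ is exactly what is needed.
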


%\textit{Proof.}
\begin{proof}
Define $w=w_{u_1}:=u_1-u^{*}$, then $\mathrm{supp}(w)$ is compact and $w(x,0)=0$. 
We further define $f=f_{u_1}:=\tilde{\mathcal{L}}_{\mathrm{HJB}}u_1-h$.
Since $u_1=u^{*}$ in $\mathbb{R}^n\times[0,T]\backslash Q_T(\Omega)$ and $f=\tilde{\mathcal{L}}_{\mathrm{HJB}}u_1-\tilde{\mathcal{L}}_{\mathrm{HJB}}u^{*}$, we have $\mathrm{supp}(f)\subset Q_T(\Omega)$. %Let $\Omega\subset\mathbb{R}^n$ be a bounded open set such that $\mathrm{supp}(w)\subset Q_T(\Omega)$. 
The $W^{m,p}$ and $L^p$ norm in the rest of the proof is defined on the domain $Q_T(\Omega).$%\times[0,T]$.

Compute
\begin{align}
    f=\tilde{\mathcal{L}}_{\mathrm{HJB}}u_1-\tilde{\mathcal{L}}_{\mathrm{HJB}}u^{*}=\mathcal{L}_0 w+\sum_{i=1}^n A_i|\partial_i (u^{*}+w)|^{c_i}-
    A_i|\partial_i u^{*}|^{c_i}
\end{align}

Thus, for any $(x,t)$,
\begin{align}
    |\mathcal{L}_0w(x,t)|&=|f-\sum_{i=1}^n (A_i|\partial_i (u^{*}+w)|^{c_i}-
    A_i|\partial_i u^{*}|^{c_i})|\bigg|_{(x,t)}\\
    &\leq |f(x,t)|+\sum_{i=1}^n A_i\|\partial_i (u^{*}+w)|^{c_i}-
    |\partial_i u^{*}|^{c_i}|\bigg|_{(x,t)}\\
   & \leq |f(x,t)|+\sum_{i=1}^n A_i((|\partial_i u^{*}|+|\partial_i w|)^{c_i}-
    |\partial_i u^{*}|^{c_i})\bigg|_{(x,t)}
\end{align}
where the second inequality could be derived from the fact that $(a+b)^c-a^c\geq a^c-(a-b)^c$ for $a\geq b\geq 0$ and $c\geq 1$.

For $i\in[n]$, apply Lemma \ref{Lemma C.1.} for $c=c_i$ and we obtain $k_i$ and
$\{t_{ij}\}_{j=1}^{k_i}$, $\{f_{ij}\}_{j=1}^{k_i}$ satisfying corresponding properties. %We write $f_{ij}(x)$ as $a_{ij}x^{q_{ij}}$ for future convenience.

%Using Lemma C.1, we obtain $k_i\in\mathbb{N}$ for $i\in[n]$, $\{l_{ij}\}$ satisfying $0< t_{i1} <t_{i2}<...<t_{ik}<c_i$,and power functions $\{f_{ij}\}$ for $j\in[k_i]$, such that

We have
\begin{align}
    |\mathcal{L}_0w(x,t)|&\leq|f(x,t)|+\sum_{i=1}^{n}A_i(|\partial_iw|^{c_i}+\sum_{j=1}^{k_i}f_{ij}
    (|\partial_i u^{*}|)|\partial_i w|^{t_{ij}})\bigg|_{(x,t)}.
\end{align}

With Lemma \ref{Lemma A.4} and triangle inequality, we obtain
\begin{align}
\label{32}
    \|w\|_{2,p}\leq C \|\mathcal{L}_0 w\|_p
    \leq C( \|f\|_p+\sum_{i=1}^{n}A_i(\||\partial_iw|^{c_i}\|_p+\sum_{j=1}^{k_i}\|f_{ij} (|\partial_i u^{*}|)|\partial_i w|^{t_{ij}}\|_p)).
\end{align}
We will handle each term respectively.

Using Lemma \ref{Lemma A.2}, we have
\begin{align}
\label{33}
    \||\partial_iw|^{c_i}\|_p=\|\partial_iw\|_{c_ip}^{c_i}
    \leq \|w\|_{1,c_ip}^{c_i}\leq \hat{ C_i}\|w\|_{2,\frac{nc_ip}{n+c_ip}}^{c_i}.
\end{align}

for constants $\hat{ C_i}$.

Using Lemma \ref{Lemma A.2} and Hölder inequality, we have
\begin{align}
    \||f_{ij} (|\partial_i u^{*}|)|\partial_i w|^{t_{ij}}\|_p
    &\leq \|f_{ij}(|\partial_i u^{*}|)\|_{\infty}\||\partial_i w|^{t_{ij}}\|_p\\
    &=
    \|f_{ij}(|\partial_i u^{*}|)\|_{\infty}\|\partial_i w\|_{t_{ij}p}^{t_{ij}}
    \leq \Tilde{C}_{ij}\|w\|_{2,\frac{nt_{ij}p}{n+t_{ij}p}}^{t_{ij}}.\label{34}
\end{align}
for constants $\Tilde{C}_{ij}$ (Since $\overline{Q_T(\Omega)}$ is compact, we can tell that $ \|f_{ij}(|\partial_i u^{*}|)\|_{\infty}<\infty$ and thus $\tilde{C}_{ij}$ are well-defined).

When $p\geq n\cdot \mathop{\max}\limits_{i\in[n]}\frac{c_i-1}{c_i}$, because of $t_{ij}<c_i$, we have $\frac{nc_ip}{n+c_ip}\leq p,\ \frac{nt_{ij}p}{n+t_{ij}p}\leq p$ for all $i,j$.

Note that $\Omega$ is bounded, so for $1\leq q<p$,there is a constant $C$ such that $\|v\|_{L^q(\Omega)}\leq\|v\|_{L^p(\Omega)}$ for all $v\in L^p(\Omega)$.

As a consequence, we can derive from Eq. (\ref{32},\ref{33},\ref{34}) that $M:=\|w\|_{2,p}$ satisfies the inequality
\begin{align}
\label{(35)}
    K_0 M-\sum_{i=1}^{n}(K_i M^{c_i}+\sum_{1\leq j\leq k_i, t_{ij}>1} K_{ij}M^{t_{ij}})\leq \|f\|_p,
\end{align}
where all $K_i$ and $K_{ij}$ are positive constants depending only on $p,n,u^{*}$ and $\Omega$.

For clarity, We define  the L.H.S. of (\ref{(35)}) as a function $F$ with variable $M$.%, and we denote as $0=M_0<M_1<M_2<...<M_k$ for some $k\in\mathbb{N}$ and $k<\infty$ the non-negative zeros of $F$.

With the observations (i)$ F(0)=0$, (ii) $F'(0)>0, $ (iii)$F''(M)<0$, (iv)$F(+\infty)=-\infty$, we could tell that %for $C$ sufficiently small, solving $F(M)\leq C(M\geq 0)$ derives $M\in [0,a_0]\cap [a_1,a_2],...,[a_{k'},+\infty]$ with
$F(M)$ has a unique zero $m_0$ in $\mathbb{R}^{+}$. We could further tell that for any non-negative number $C\leq \max\limits_{M\in[0,m_0]}F(M)$, solving $F(M)\leq C\ (M\geq 0)$ derives $M\in[0,a]\cap[b,\infty]$ for some $0<a<b$ depending on $C$ and that $a\to 0,b\to m_0$ monotonously as $C$ decreases to $ 0$. %(we treat $C$ as a parameter here).
Note that there exists $\delta>0$ such that $a\leq\frac {2}{K_0}C$ for $\forall C\in[0,\delta]$. % sufficiently small.
In order to prove $\|w\|_{2,p}=O(\|f\|_p)$, it suffices to show that $\|w\|_{2,p}$ (i.e. $M$ in the discussion above) would not fall in the second interval providing $C$ (or $\|f\|_p$, correspondingly) is sufficiently small. We will prove by contradiction.

Note that $\tilde{\mathcal{L}}_{\mathrm{HJB}}$ is a continuous injection from $W_0^{2,p}(Q_T(\Omega))$ to $L^{p}(Q_T(\Omega))$%\tilde{\mathcal{L}}_{\mathrm{HJB}}\mathfrak{L}$
, and that there exists $r_0^*>0$ such that $\tilde{\mathcal{L}}_{\mathrm{HJB}}$ is a differmorphism from $B_{r_0}(u^{*})$ (in $W^{2,p}_0(Q_T(\Omega))$) to $\tilde{\mathcal{L}}_{\mathrm{HJB}}(B_{r_0}(u^{*}))\supset B_{r_1}(h)$ (in $L^p(Q_T(\Omega))$) for any $r_0\in (0,r_0^*)$ and any  $r_1\in(0,r_1^*) $ with $r_1^*$ depending on $r_0$ (this comes from an application of Lemma \ref{Lemma A.6}).

%\sd{ABOVE:CHECKED}

Select $r_0<\frac {m_0}{2}$ and determine the corresponding $r_1^*$.
By the property of $b$, there exists $\delta_0\in(0,\min\{\frac {r_1^*}2,\delta\})$ such that for any 
%Suppose $C$ is sufficiently small such that 
$C<\delta_0$, the corresponding $b$ is larger than $\frac 3 4 m_0$. If there exists %$u_1$ such that 
$w\in W_0^{m,p}(Q_T(\Omega))$ satisfying $\|f\|_p=\|\tilde{\mathcal{L}}_{\mathrm{HJB}}(u^{*}+w)-\tilde{\mathcal{L}}_{\mathrm{HJB}}u^{*}\|_{p}:= C<\delta_0 $ while $\|w\|_{2,p}>b$ ($b$ depends on $C$), %(recall $f=\tilde{\mathcal{L}}_{\mathrm{HJB}}(u^{*}+w)-\tilde{\mathcal{L}}_{\mathrm{HJB}}u^{*}$ here),
then there will also be a $w'\in B_{r_0}(0)$ such that $\tilde{\mathcal{L}}_{\mathrm{HJB}}(u^{*}+w')-\tilde{\mathcal{L}}_{\mathrm{HJB}}u^{*}=f$. We could tell from the difference between their norm that $w\neq w'$. This contradicts the property of injection.

The proof is completed.
\end{proof}
%\qed

\begin{lemma}
\label{Lemma C.3.}
Suppose $u^{*}$ follows Lemma \ref{Lemma C.2.}, and $\Omega$ is a fixed bounded open set in $\mathbb{R}^n$.
Suppose $u_1$ % is the approximation for $u^{*}$.
%Assume
satisfies
$\tilde{\mathcal{L}}_{\mathrm{HJB}}u^{*}=\tilde{\mathcal{L}}_{\mathrm{HJB}}u_1$ and that $\mathrm{supp}(u_1-u^{*})\subset Q_T(\Omega)$. % is compact in $\mathbb{R}^n\times\mathbb{R}_{\geq 0}$.
%Define $p(x):=\tilde{\mathcal{B}}_{\mathrm{HJB}}u_1-\tilde{\mathcal{B}}_{\mathrm{HJB}}u^{*}$. Recall $c_i$ are parameters in the operator $\tilde{\mathcal{L}}_{\mathrm{HJB}}$ and denote $|c|:=\mathop{\max}\limits_{i\in[n]}c_i$.
Let $q\in [1,\infty)$.

If $\bar{c}\leq 2$ and $q> %\frac{|c|-1}{|c|}\frac{n^2}{n+2|c|}
\frac {(\bar{c}-1)n^2}{(2-\bar{c})n+2}
$, there exists $\delta_0>0$ such that when $\|\tilde{\mathcal{B}}_{\mathrm{HJB}}u_1-\tilde{\mathcal{B}}_{\mathrm{HJB}}u^{*}\|_q<\delta_0$, 
%and $\|p\|_G$ is sufficiently small, 
we have $\|u^{*}-u_1\|_{1,r}\leq C\|\tilde{\mathcal{B}}_{\mathrm{HJB}}u_1-\tilde{\mathcal{B}}_{\mathrm{HJB}}u^{*}\|_q$ for a constant $C$ independent of $u_1$, where
$r<\frac{(n+2)q}{n+q}$.%(\frac {(\bar{c}-1)n^2}{(2-\bar{c})n+2}$.
\end{lemma}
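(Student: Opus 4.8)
Write $w:=u_1-u^{*}$, $\phi:=\tilde{\mathcal{B}}_{\mathrm{HJB}}u_1-\tilde{\mathcal{B}}_{\mathrm{HJB}}u^{*}=w(\cdot,0)$ and $r_0:=\frac{(n+2)q}{n+q}$. Subtracting the two equations and using $\tilde{\mathcal{L}}_{\mathrm{HJB}}u_1=\tilde{\mathcal{L}}_{\mathrm{HJB}}u^{*}$ gives, on all of $\mathbb{R}^n\times[0,T]$,
\[
\mathcal{L}_0 w=-\sum_{i=1}^n A_i\bigl(|\partial_i(u^{*}+w)|^{c_i}-|\partial_i u^{*}|^{c_i}\bigr)=:F ,\qquad w(\cdot,0)=\phi ,
\]
and since $w\equiv 0$ off $Q_T(\Omega)$ we have $\mathrm{supp}\,F\subset\overline{Q_T(\Omega)}$ and $\mathrm{supp}\,\phi\subset\overline{\Omega}$. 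Because $w$ is supported in the bounded set $Q_T(\Omega)$, it suffices to prove $\|w\|_{1,r}\le C\|\phi\|_q$ with $C$ independent of $u_1$, once $\|\phi\|_q$ is small, and only for one fixed $r$ chosen arbitrarily close to $r_0$. The plan is to follow the proof of Lemma~\ref{Lemma C.2.} almost verbatim, with a single extra preliminary step that absorbs the nonzero initial datum into a heat extension.

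First I would split $w=v+z$, where $v$ solves $\mathcal{L}_0 v=0$, $v(\cdot,0)=\phi$, and $z:=w-v$ solves $\mathcal{L}_0 z=F$, $z(\cdot,0)=0$, so that $z(x,t)=\int_0^t\bigl(K(\cdot,t-s)*F(\cdot,s)\bigr)(x)\,\mathrm{d}s$ with $K$ the heat kernel. The term $v$ is handled directly: Lemma~\ref{Lemma A.5}(i) applied with $p=q$ and $Q=\overline{Q_T(\Omega)}$ gives $\|v\|_{W^{1,r}(Q_T(\Omega))}\le C\|\phi\|_{L^q}$ for every $r<r_0$. This is exactly the step in which the quantity $r_0=\frac{(n+2)q}{n+q}$, hence the hypothesis relating $r$ and $q$, enters.

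The remaining and substantial task is to bound $z$. As in the proof of Lemma~\ref{Lemma C.2.}, I would estimate $F$ pointwise using $(a+b)^{c}-a^{c}\le(|\partial_i u^{*}|+|\partial_i w|)^{c_i}-|\partial_i u^{*}|^{c_i}$ and Lemma~\ref{Lemma C.1.} to get $|F|\le C\sum_{i}\bigl(|\partial_i w|^{c_i}+\sum_{j}f_{ij}(|\partial_i u^{*}|)\,|\partial_i w|^{t_{ij}}\bigr)$ on $Q_T(\Omega)$, with $1\le t_{ij}<c_i\le\bar{c}$ and $\|f_{ij}(|\partial_i u^{*}|)\|_{L^{\infty}(Q_T(\Omega))}<\infty$ by compactness of $\overline{Q_T(\Omega)}$. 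Then, combining a parabolic regularity estimate for $z$ (Lemma~\ref{Lemma A.4}, or equivalently the heat-kernel and Young's-inequality bounds that underlie the proof of Lemma~\ref{Lemma A.5}) with the Sobolev embedding Lemma~\ref{Lemma A.2} and Hölder's inequality — carrying out the same exponent bookkeeping as in the proof of Lemma~\ref{Lemma C.2.}, but now gaining only one spatial derivative of $z$ rather than two — I expect to obtain, for a suitable exponent $p$ and every $r$ close enough to $r_0$,
\[
\|z\|_{W^{1,r}(Q_T(\Omega))}\ \le\ C\|F\|_{L^{p}(Q_T(\Omega))}\ \le\ C\Bigl(M^{\bar{c}}+\!\!\sum_{1<\beta<\bar{c}}\!\!M^{\beta}+\varepsilon M\Bigr),\qquad M:=\|w\|_{W^{1,r}(Q_T(\Omega))},
\]
the at-most-linear contributions from the nonlinearity being reabsorbed either into the reference parabolic operator (whose coefficients are bounded on $\overline{Q_T(\Omega)}$) or by an interpolation inequality, so that $\varepsilon$ is as small as we like. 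The admissibility of all exponents in this chain is precisely what forces $\bar{c}\le 2$ and $q>\frac{(\bar{c}-1)n^2}{(2-\bar{c})n+2}$, equivalently $r_0>(\bar{c}-1)n$: with only one order of parabolic smoothing available — because the heat extension $v$ of an $L^q$ datum has no $W^{2,p}$ regularity near $t=0$ — the power nonlinearity $|\partial_i u|^{c_i}$ can be reabsorbed in $W^{1,r}$ only when $c_i$ is small relative to the smoothing exponent $r_0$. I regard pinning down $p$ and checking that these embeddings close as the main obstacle of the whole argument.

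Combining the $v$- and $z$-estimates through $w=v+z$ then yields a scalar inequality $(1-C\varepsilon)M-\sum_i\bigl(K_iM^{c_i}+\sum_{1<t_{ij}<\bar{c}}K_{ij}M^{t_{ij}}\bigr)\le C\|\phi\|_q$ with all exponents $>1$ and all constants positive (after $\varepsilon$ is fixed small), the exact analogue of the scalar inequality near the end of the proof of Lemma~\ref{Lemma C.2.}. Its left-hand side is a function $\mathcal F(M)$ with $\mathcal F(0)=0$, $\mathcal F'(0)>0$, $\mathcal F''<0$, $\mathcal F(+\infty)=-\infty$, hence with a unique positive zero $m_0$; so for $\|\phi\|_q$ small the set $\{M\ge 0:\mathcal F(M)\le C\|\phi\|_q\}$ equals $[0,a]\cup[b,\infty)$ with $a\le C'\|\phi\|_q$ and $b\to m_0$. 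One then excludes the far branch $M\ge b$ exactly as at the end of the proof of Lemma~\ref{Lemma C.2.}: by Lemma~\ref{Lemma A.6} the map $w\mapsto\bigl(\tilde{\mathcal{L}}_{\mathrm{HJB}}(u^{*}+w)-\tilde{\mathcal{L}}_{\mathrm{HJB}}u^{*},\,w(\cdot,0)\bigr)$ is a local diffeomorphism near $w=0$ between the relevant Banach spaces, so for $\|\phi\|_q$ small a $w$ with $\|w\|$ bounded away from $0$ cannot be mapped to $(0,\phi)$; therefore $M\le a\le C'\|\phi\|_q$. Since $u^{*}-u_1=-w$ is supported in $Q_T(\Omega)$, this gives $\|u^{*}-u_1\|_{1,r}\le C'\|\phi\|_q$ for all $r<\frac{(n+2)q}{n+q}$, which is the assertion.
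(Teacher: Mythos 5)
Your proposal follows the same route as the paper's proof, and the structure is correct: you split $w=u_1-u^{*}$ into a caloric part solving the homogeneous heat equation with datum $\phi$ and a Duhamel part solving the inhomogeneous problem with zero initial data, bound the former with Lemma~\ref{Lemma A.5}, bound the latter with parabolic $L^p$ regularity (Lemma~\ref{Lemma A.4}) plus Sobolev embedding (Lemma~\ref{Lemma A.2}) and the pointwise estimate on the nonlinearity from Lemma~\ref{Lemma C.1.}, and then close via the same scalar inequality and local-diffeomorphism (Lemma~\ref{Lemma A.6}) argument used in Lemma~\ref{Lemma C.2.}. The paper's decomposition $w = w_1 - v$ with $w_1$ the heat extension of $f$ and $v:=w_1-w$ is exactly your $w = v+z$ up to sign.

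The one point you explicitly leave open — ``pinning down $p$ and checking that these embeddings close'' — is the part the paper does make precise, and it is worth seeing how. The paper fixes an auxiliary exponent $r'\in [(\bar{c}-1)n,\ r_0)\cap[r,\infty)$, which is nonempty precisely because of the hypothesis $q>\frac{(\bar{c}-1)n^2}{(2-\bar{c})n+2}$ and $\bar{c}\le 2$. It then sets $p=\frac{nr'}{n+r'}$, so that $W^{2,p}\hookrightarrow W^{1,r'}$ by one-order Sobolev embedding, and bounds $\|v\|_{1,r'}\le C\|v\|_{2,p}\le C'\|\mathcal L_0 v\|_{p}$ via Lemma~\ref{Lemma A.4}. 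In the ensuing Hölder estimate, the $i$-th nonlinear term contributes $\|\partial_i w\|^{c_i}_{c_i p}$ with $c_i p=\frac{n c_i r'}{n+r'}\le r'$ exactly when $r'\ge (c_i-1)n$; this is the role of the lower endpoint $(\bar{c}-1)n$ in the choice of $r'$, and it is what lets every term be dominated by $\|w\|_{1,r'}$ on the bounded domain. Your sketch correctly locates this as the crux, but does not supply the choice $p=\frac{nr'}{n+r'}$, $r'\in[(\bar{c}-1)n,r_0)$, and the resulting inequality $c_i p\le r'$. Without it the argument does not yet ``close''; with it, your write-up matches the paper's proof essentially line for line, so no further changes to the high-level plan are needed.
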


\begin{proof}
Define $w=w_{u_1}:=u_1-u^{*}$ and $f=f_{u_1}:=\tilde{\mathcal{B}}_{\mathrm{HJB}}u_1-\tilde{\mathcal{B}}_{\mathrm{HJB}}u^{*}$. Let $w_1$ be the solution to
$$
\begin{cases}
\mathcal{L}_0 u=0, \quad (x,t)\in \mathbb{R}^n \times [0,T]\\
\tilde{\mathcal{B}}_{\mathrm{HJB}}u=f.
\end{cases}
$$
%Let $\Omega\subset\mathbb{R}^n$ be a bounded open set such that $\mathrm{supp}(w)\subset Q_T(\Omega)$. 
The $W^{m,p}$ and $L^p$ norm in the rest of the proof is defined on the domain $Q_T(\Omega)$.

Since the conditions $\bar{c}\leq 2$ and $q> %\frac{|c|-1}{|c|}\frac{n^2}{n+2|c|}
\frac {(\bar{c}-1)n^2}{(2-\bar{c})n+2}
$ hold, we have $[(\bar{c}-1)n,\frac{(n+2)q}{n+q})\neq \varnothing$. Thus we could choose $r'\in [(\bar{c}-1)n,\frac{(n+2)q}{n+q})\cap[r,\infty)$. Since $Q_T(\Omega)$ is bounded, it suffices to bound $\|u_1-u^*\|_{r'}$.

To start with, from Lemma \ref{Lemma A.5}, we have $\|w_1\|_{1,r'}\leq C\|f\|_q$.

Then we bound the difference between $w_1$ and $w$. Define $v=w_1-w$, then $v$ satisfies
$$\begin{cases}
\mathcal{L}_0 v= \sum_{i=1}^n A_i|\partial_i (u^{*}+w)|^{c_i}-
    A_i|\partial_i u^{*}|^{c_i} \\
\tilde{\mathcal{B}}_{\mathrm{HJB}}v=0.
\end{cases}
$$
By Lemma \ref{Lemma A.2} and Lemma \ref{Lemma A.4} , we get $\|v\|_{1,r'}\leq C \|v\|_{2,\frac{nr'}{n+r'}}\leq C' \|\mathcal{L}_0 v\|_{\frac{nr'}{n+r'}}$.

Therefore we have $\|w\|_{1,r'}\leq \|w_1\|_{1,r'}+\|v\|_{1,r'}\leq C\|f\|_q+C'\|\mathcal{L}_0v\|_{\frac{nr'}{n+r'}}$.

Next, we give an estimation for $\|\mathcal{L}_0v\|_{\frac{nr'}{n+r'}}$.

%Similar to the proof in Lemma \ref{Lemma C.2.}, 
Following from the proof in Lemma \ref{Lemma C.2.}, 
 we obtain $\{k_i\}_{i=1}^n\subset\mathbb{N}$, $\{t_{ij}\}_{1\leq i\leq n,1\leq j\leq k_i}\subset \mathbb{R}$, and power functions $\{f_{ij}\}_{1\leq i\leq n,1\leq j\leq k_i}$.%, which follow from Lemma \ref{Lemma C.2.}, such that

With similar computations, we have

\begin{align}
\label{70}
    \|\mathcal{L}_0 v\|_{\frac{nr'}{n+r'}}
    \leq & \sum_{i=1}^{n}A_i(\||\partial_iw|^{c_i}\|_{\frac{nr'}{n+r'}}+\sum_{j=1}^{k_i}\|f_{ij} (|\partial_i u^{*}|)|\partial_i w|^{t_{ij}}\|_{\frac{nr'}{n+r'}})\\
    \leq & 
    \sum_{i=1}^{n}A_i(\|\partial_i w\|^{c_i}_{\frac{c_inr'}{n+r'}}+\sum_{j=1}^{k_i}\|f_{ij} (|\partial_i u^{*}|)\|_{\infty}\|\partial_i w\|^{t_{ij}}_{\frac{nt_{ij}r'}{n+r'}}).
\end{align}
Because of $r'\geq (\bar{c}-1)n$ and $t_{ij}<c_i$, we have $\frac{nc_ir'}{n+r'}\leq r',\ \frac{nt_{ij}r'}{n+r'}\leq r'$ for all $i,j$.

Thus, due to the fact that $Q_T(\Omega)$ is bounded, all $\|\partial_i w\|_{\frac{c_inr'}{n+r'}}$ and $\|\partial_i w\|_{\frac{nt_{ij}r'}{n+r'}}$ could be bounded by $C_{i,j}\|w\|_{1,r'}$, where $C_{i,j}$ are constants.

With similar methods applied in Lemma \ref{Lemma C.2.}, we could prove this lemma.
\end{proof}

\begin{lemma}

%\end{lemma}
\label{Theorem C.4.}
We denote by $u^{*}$ the exact solution to
$$
\begin{cases}
\tilde{\mathcal{L}}_{\mathrm{HJB}}u(x,t)=h(x,t)&(x,t)\in\mathbb{R}^n\times[0,T],\\
\tilde{\mathcal{B}}_{\mathrm{HJB}}u(x,t)=g(x)& x\in\mathbb{R}^n.
\end{cases}$$
Fix $\Omega$, which is an arbitrary bounded open set in $\mathbb{R}^n$. 
For two functions $\hat{f}_1(x,t),\ \hat{f}_2(x)$, denote by $u_1$ the solution to
$$
\begin{cases}
\tilde{\mathcal{L}}_{\mathrm{HJB}}u(x,t)=h(x,t)+\hat{f}_1(x,t),\ in\ \mathbb{R}^n\times[0,T]\\
\tilde{\mathcal{B}}_{\mathrm{HJB}}u(x,t)=g(x)+\hat{f}_2(x),\ in\ \mathbb{R}^n.
\end{cases}$$
%where $f$ and $p$ are small perturbations.% with small $L_p$ norm for some p.

For $p,q\geq 1$, let $r_0=\frac{(n+2)q}{n+q}$. Assume the following inequalities hold for $p,q$ and $r_0$:
\begin{equation}
\label{eq:main-thm-cond2}
    p\geq \max\left\{2, \left(1-\frac{1}{\bar{c}}\right)n\right\};~
    q> \frac{(\bar{c}-1)n^2}{(2-\bar{c})n+2};~
    \frac{1}{r_0}\geq \frac{1}{p}-\frac{1}{n},
\end{equation}
Further assume that $\bar{c}\leq 2$ and $\mathrm{supp}(u_1-u^*)\subset Q_T(\Omega)$.

Then for $\forall r\in[1,r_0)$, there exists $\delta_0>0$ such that, when $\|\hat{f}_1\|_p<\delta_0$ and $\|\hat{f}_2\|_q<\delta_0$, $\|u_1-u^*\|_{1,r}\leq C(\|\hat{f}_1\|_p+\|\hat{f}_2\|_q)$ for a constant $C$ independent of $u_1$.

% Let $F,G,r$ follow the conditions described in Lemma \ref{Lemma C.2.} and Lemma \ref{Lemma C.3.}, further assume that $\frac 1 r\geq \frac 1 F-\frac 1 n$, which is often satisfied. Suppose $\mathrm{supp}(u_1-u^{*})$ is compact in $\mathbb{R}^n\times\mathbb{R}_{\geq 0}$.

% Then it holds that when $\|f\|_{F}$ and $\|p\|_{G}$ are sufficiently small, $\|u_1-u^{*}\|_{1,r}=O(\|f\|_{F}+\|p\|_{G})$.
\end{lemma}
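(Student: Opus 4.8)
The plan is to split the perturbation $(\hat{f}_1,\hat{f}_2)$ into a pure PDE‑residual perturbation and a pure initial‑data perturbation, control each piece by Lemma \ref{Lemma C.2.} and Lemma \ref{Lemma C.3.} respectively, and recombine by the triangle inequality. Concretely, I would introduce an intermediate function $u_2$ whose PDE data agrees with that of $u_1$ but whose initial data agrees with that of $u^{*}$, i.e. a function with $\mathrm{supp}(u_2-u^{*})\subset Q_T(\Omega)$, $\tilde{\mathcal{L}}_{\mathrm{HJB}}u_2=h+\hat{f}_1$ and $\tilde{\mathcal{B}}_{\mathrm{HJB}}u_2=g$. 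For $\|\hat{f}_1\|_p$ below the threshold appearing in Lemma \ref{Lemma C.2.}, such a $u_2$ exists and is unique with $u_2-u^{*}\in W_0^{2,p}(Q_T(\Omega))$: this is precisely the local‑diffeomorphism property of $\tilde{\mathcal{L}}_{\mathrm{HJB}}$ on $u^{*}+W_0^{2,p}(Q_T(\Omega))$ already proved (via the inverse function theorem, Lemma \ref{Lemma A.6}) inside the proof of Lemma \ref{Lemma C.2.}. Since $u_1-u_2=(u_1-u^{*})-(u_2-u^{*})$, we also get $\mathrm{supp}(u_1-u_2)\subset Q_T(\Omega)$.

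Next I would apply Lemma \ref{Lemma C.2.} to the pair $(u^{*},u_2)$: its hypotheses hold because $\tilde{\mathcal{B}}_{\mathrm{HJB}}u_2=\tilde{\mathcal{B}}_{\mathrm{HJB}}u^{*}$, $\mathrm{supp}(u_2-u^{*})\subset Q_T(\Omega)$, and the first inequality in Eq. (\ref{eq:main-thm-cond2}) gives $p\geq\max\{2,(1-\tfrac{1}{\bar{c}})n\}$. This yields, for $\|\hat{f}_1\|_p$ small, $\|u_2-u^{*}\|_{2,p}\leq C\|\tilde{\mathcal{L}}_{\mathrm{HJB}}u_2-h\|_p=C\|\hat{f}_1\|_p$. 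To descend from $W^{2,p}$ to $W^{1,r}$ I would invoke Sobolev embedding (Lemma \ref{Lemma A.2}) in the spatial variable: the third inequality in Eq. (\ref{eq:main-thm-cond2}), $\tfrac{1}{r_0}\geq\tfrac{1}{p}-\tfrac{1}{n}$, together with $r<r_0$ forces $\tfrac{1}{r}>\tfrac{1}{p}-\tfrac{1}{n}$, so $W^{2,p}(\Omega)\hookrightarrow W^{1,r}(\Omega)$; combined with the boundedness of $Q_T(\Omega)$ (and the parabolic regularity of $u_2-u^{*}$, which puts its right‑hand side in $L^p$) this upgrades the previous estimate to $\|u_2-u^{*}\|_{1,r}\leq C\|\hat{f}_1\|_p$.

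It remains to compare $u_1$ with $u_2$, which is the setting of Lemma \ref{Lemma C.3.} applied with base function $u_2$ (that this is legitimate is addressed below): indeed $\tilde{\mathcal{L}}_{\mathrm{HJB}}u_1=h+\hat{f}_1=\tilde{\mathcal{L}}_{\mathrm{HJB}}u_2$, $\mathrm{supp}(u_1-u_2)\subset Q_T(\Omega)$, $\bar{c}\leq 2$, the second inequality in Eq. (\ref{eq:main-thm-cond2}) is $q>\frac{(\bar{c}-1)n^2}{(2-\bar{c})n+2}$, and $\tilde{\mathcal{B}}_{\mathrm{HJB}}u_1-\tilde{\mathcal{B}}_{\mathrm{HJB}}u_2=(g+\hat{f}_2)-g=\hat{f}_2$. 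Hence, for $\|\hat{f}_2\|_q$ small and every $r<r_0=\frac{(n+2)q}{n+q}$, we get $\|u_1-u_2\|_{1,r}\leq C\|\hat{f}_2\|_q$. Choosing $\delta_0$ to be the smaller of the two thresholds above and using the triangle inequality, $\|u_1-u^{*}\|_{1,r}\leq\|u_1-u_2\|_{1,r}+\|u_2-u^{*}\|_{1,r}\leq C(\|\hat{f}_1\|_p+\|\hat{f}_2\|_q)$ whenever $\|\hat{f}_1\|_p<\delta_0$ and $\|\hat{f}_2\|_q<\delta_0$, which is the assertion.

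The step I expect to be the main obstacle is the legitimacy of using $u_2$ as the base function in Lemma \ref{Lemma C.3.}: that lemma is formulated with the smooth exact solution as base, and its proof controls the relevant constants through norms of the base's gradient, whereas $u_2$ is only guaranteed to be a small $W^{2,p}$‑perturbation of $u^{*}$. One therefore has to verify that those constants stay finite for $u_2$ — and, since $u_2-u^{*}$ ranges over a $W^{2,p}$‑ball whose radius tends to $0$ with $\delta_0$, that they can be chosen uniformly in $u_2$ — which is exactly where the precise arithmetic of the constraints on $p$ and $q$, through the Sobolev embeddings applied to $u_2-u^{*}$, is needed. A secondary point requiring care is the slicewise Sobolev step, where one must be attentive to the space-time structure of the $W^{m,p}(Q_T(\Omega))$ norm (derivatives only in $x$, integration over $(x,t)$), and the convention that ``$\mathrm{supp}(\,\cdot\,)\subset Q_T(\Omega)$'' is to be read consistently (in the $W_0$ sense) across Lemmas \ref{Lemma C.2.}, \ref{Lemma C.3.} and the present statement.
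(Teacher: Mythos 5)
Your proposal is essentially identical to the paper's own proof: both introduce the intermediate function $u_2$ solving $\tilde{\mathcal{L}}_{\mathrm{HJB}}u_2=h+\hat f_1$ with unperturbed boundary data $\tilde{\mathcal{B}}_{\mathrm{HJB}}u_2=g$, apply Lemma~\ref{Lemma C.2.} to get $\|u^{*}-u_2\|_{2,p}\lesssim\|\hat f_1\|_p$, apply Lemma~\ref{Lemma C.3.} to the pair $(u_2,u_1)$ to get $\|u_2-u_1\|_{1,r}\lesssim\|\hat f_2\|_q$, descend from $W^{2,p}$ to $W^{1,r}$ on the bounded domain via Lemma~\ref{Lemma A.2} using $\tfrac1r>\tfrac1{r_0}\geq\tfrac1p-\tfrac1n$, and conclude by the triangle inequality with $\delta_0$ taken as the smaller of the two thresholds. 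The subtlety you flag at the end — that Lemma~\ref{Lemma C.3.} is stated with the (smooth) exact solution $u^*$ as base, yet is invoked here with $u_2$, which is only controlled as a small $W^{2,p}$ perturbation of $u^*$, so the constants $\|f_{ij}(|\partial_i u_2|)\|_\infty$ need a separate justification of finiteness and uniformity — is a legitimate point; the paper applies Lemma~\ref{Lemma C.3.} in exactly this way without commenting on it, so you have not created a gap, merely noticed one the paper leaves implicit.
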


\begin{proof}
It is straight-forward to define $u_2$ as the solution to
$$
\begin{cases}
\tilde{\mathcal{L}}_{\mathrm{HJB}}u(x,t)=h(x,t)+\hat{f}_1(x,t)&(x,t) \in\mathbb{R}^n\times[0,T]\\
\tilde{\mathcal{B}}_{\mathrm{HJB}}u(x,t)=g(x)& x\in\mathbb{R}^n
\end{cases}$$

 and bound $\|u^{*}-u_2\|_{1,r},\ \|u_2-u_1\|_{1,r}$ respectively.

From Lemma \ref{Lemma C.2.}, there exists $\delta_1>0, C_1>0$ such that $\|\hat{f}_1\|_p<\delta_1$ implies $\|u^{*}-u_2\|_{2,p}\leq C_1\|\hat{f}_1\|_p$. And from Lemma \ref{Lemma C.3.}, there exists $\delta_2>0, C_2>0$ such that $\|\hat{f}_2\|_q<\delta_2$ implies $\|u_2-u_1\|_{1,r}\leq C_2\|\hat{f}_2\|_q.$ By virtue of the condition $\frac 1 r>\frac 1 {r_0}\geq \frac 1 p-\frac 1 n$, with Lemma \ref{Lemma A.2} and the fact that we are considering $\|u^{*}-u_2\|_{1,r},\ \|u_2-u_1\|_{1,r}$ on a compact domain, providing $\|\hat{f}_1\|<\delta_1$ and $\|\hat{f}_2\|<\delta_2$, we derive 
\begin{align}
 \|u^{*}-u_1\|_{1,r}&\leq\|u^{*}-u_2\|_{1,r}+\|u_2-u_1\|_{1,r}\\
 &\leq
C\|u^{*}-u_2\|_{2,p}+\|u_2-u_1\|_{1,r}\\
&\leq CC_1\|\hat{f}_1\|_{p}+C_2\|\hat{f}_2\|_{q},  
\end{align}
where $C$ is a constant.

This concludes the proof.
\end{proof}

Finally, we give the proof of an equivalent statement of Theorem \ref{thm:stb0}. 

% We make the assumption that $\mathrm{supp}(u_1-u^{*})$ is compact here for the convenience of proof. We believe that a more sophisticated proof will allow us to relax this assumption, but we omit to do that for the readability of our paper.
% Moreover, according to remark \ref{2rkHJB}, the following theorem shows that it is enough for applications.

% Finally, we give the proof of an equivalent statement of Theorem \ref{thm:stb0}. 

\begin{theorem}
\label{thm_stb}
Let $\hat{f}_1,\hat{f}_2,u^{*}$ and $u_1$ follow from Lemma \ref{Theorem C.4.}. Let $p,q,r_0$ satisfy the conditions in Lemma \ref{Theorem C.4.}. Assume $\bar{c}\leq 2$. For any bounded open set $Q\subset \mathbb{R}^n\times[0,T]$, it holds that 
for any $r\in[1,r_0)$, %Eq. (\ref{eqC}) is $(L^p(\mathbb{R}^n\times[0,T]),L^q(\mathbb{R}^n),W^{1,r}(Q))-$stable.
there exists $\delta>0$ and a constant $C$ independent of $u_1,\ \hat{f}_1$ and $\hat{f}_2$, such that $\max\{\|\hat{f}_1\|_{L^p(\mathbb{R}^n\times[0,T])},\|\hat{f}_2\|_{L^q(\mathbb{R}^n)}\}<\delta$ implies $\|u_1-u^*\|_{W^{1,r}(Q)}\leq C(\|\hat{f}_1\|_{L^p(\mathbb{R}^n\times[0,T])}+\|\hat{f}_2\|_{L^q(\mathbb{R}^n)})$.

%when $\|f\|_{L^p(\mathbb{R}^n\times[0,T])}$ and $\|p\|_{L^q(\mathbb{R}^n)}$ are sufficiently small, $\|u_1-u^{*}\|_{W^{1,r}(Q)}=O(\|f\|_{L^p(\mathbb{R}^n\times[0,T])}+\|p\|_{L^q(\mathbb{R}^n)})$.
\end{theorem}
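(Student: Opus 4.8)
The plan is to derive the statement directly from Lemma~\ref{Theorem C.4.} via one short localization step. All of the genuine analytic content is already assembled upstream: the pointwise expansion of $(b+w)^{c}-b^{c}-w^{c}$ in Lemma~\ref{Lemma C.1.}, the interior parabolic $L^{p}$ estimate of Lemma~\ref{Lemma A.4}, the two one-sided stability bounds of Lemmas~\ref{Lemma C.2.} and~\ref{Lemma C.3.} — which treat the $\hat f_{1}$- and $\hat f_{2}$-perturbations separately and absorb the nonlinear term $\sum_{i}A_{i}|\partial_{i}u|^{c_{i}}$ through Sobolev and Gagliardo--Nirenberg inequalities together with an inverse-function-theorem argument — and their combination in Lemma~\ref{Theorem C.4.}. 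That lemma already produces $\|u_{1}-u^{*}\|_{W^{1,r}(Q_{T}(\Omega))}\le C_{0}\bigl(\|\hat f_{1}\|_{L^{p}}+\|\hat f_{2}\|_{L^{q}}\bigr)$ for small perturbations, but with the Sobolev norm taken over the fixed cylinder $Q_{T}(\Omega)=\Omega\times[0,T]$ attached to $\mathrm{supp}(u_{1}-u^{*})$; the only thing left is to replace $Q_{T}(\Omega)$ by an arbitrary bounded open $Q\subset\mathbb{R}^{n}\times[0,T]$.

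Concretely, first I would invoke Lemma~\ref{Theorem C.4.}: the hypotheses inherited here — the inequalities~(\ref{eq:main-thm-cond2}) on $p,q,r_{0}$, the bound $\bar{c}\le 2$, and $\mathrm{supp}(u_{1}-u^{*})\subset Q_{T}(\Omega)$ — are exactly those required, so for each fixed $r\in[1,r_{0})$ there exist $\delta_{0}>0$ and $C_{0}>0$, independent of $u_{1},\hat f_{1},\hat f_{2}$, with $\|u_{1}-u^{*}\|_{W^{1,r}(Q_{T}(\Omega))}\le C_{0}\bigl(\|\hat f_{1}\|_{L^{p}(\mathbb{R}^{n}\times[0,T])}+\|\hat f_{2}\|_{L^{q}(\mathbb{R}^{n})}\bigr)$ whenever $\|\hat f_{1}\|_{L^{p}(\mathbb{R}^{n}\times[0,T])}<\delta_{0}$ and $\|\hat f_{2}\|_{L^{q}(\mathbb{R}^{n})}<\delta_{0}$. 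Next, set $w:=u_{1}-u^{*}$; as recorded in the proof of Lemma~\ref{Lemma C.2.}, the zero-extension of $w$ lies in $W^{2,p}(\mathbb{R}^{n}\times[0,T])$ and is supported in $\overline{Q_{T}(\Omega)}$, so $w$ and every weak spatial derivative $\partial_{x_{i}}w$ vanish almost everywhere off $\mathrm{supp}(w)$. Hence, for any bounded open $Q\subset\mathbb{R}^{n}\times[0,T]$, the integrals defining $\|w\|_{W^{1,r}(Q)}^{r}=\int_{Q}\bigl(|w|^{r}+\sum_{i=1}^{n}|\partial_{x_{i}}w|^{r}\bigr)$ are carried by $Q\cap\mathrm{supp}(w)\subseteq Q_{T}(\Omega)$, which gives $\|w\|_{W^{1,r}(Q)}\le\|w\|_{W^{1,r}(Q_{T}(\Omega))}$ (using also the compatibility of weak spatial derivatives under restriction between $Q$, $Q_{T}(\Omega)$, and the global extension of $w$). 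Chaining this with the bound from Lemma~\ref{Theorem C.4.} and taking $\delta:=\delta_{0}$, $C:=C_{0}$ proves the estimate for this $Q$; since $Q$ was arbitrary, we are done. The equivalence with Theorem~\ref{thm:stb0} announced at the start of the section is then recovered by undoing the time-reversal $v(x,t)=u(x,T-t)$ of Remark~\ref{1rk} and reading $\hat f_{1}=\mathcal{L}_{\mathrm{HJB}}u-\varphi$, $\hat f_{2}=\mathcal{B}_{\mathrm{HJB}}u-g$.

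There is no real obstacle at this final stage; the weight of the whole argument sits in Lemmas~\ref{Lemma C.2.} and~\ref{Lemma C.3.}, and in particular in excluding the ``far branch'' of the implicit inequality $K_{0}M-\sum_{i}(K_{i}M^{c_{i}}+\cdots)\le\|f\|_{p}$ by means of the inverse function theorem in $W_{0}^{2,p}(Q_{T}(\Omega))$. The single point to watch in the reduction above is that the $W^{1,r}(Q)$ norm used in this paper involves spatial derivatives only, so one should confirm that restriction to $Q$ and zero-extension of $w$ are compatible with the weak spatial gradient — which holds because $w$ is globally of class $W^{2,p}$.
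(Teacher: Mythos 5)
Your reduction presupposes that $\mathrm{supp}(u_1-u^*)\subset Q_T(\Omega)$ is an inherited hypothesis of Theorem~\ref{thm_stb}. That reading would make the theorem an immediate restriction of Lemma~\ref{Theorem C.4.}, but it is not the intended one: Theorem~\ref{thm_stb} is stated as ``an equivalent statement of Theorem~\ref{thm:stb0}'', and $(L^p,L^q,W^{1,r}(Q))$-stability per Definition~\ref{def_stb} must hold ``for any function $u$'' --- no support restriction. (Indeed $\Omega$ does not even appear in the statement of Theorem~\ref{thm_stb}, and a learned network $u_1$ will generically differ from $u^*$ everywhere.) If you instead tried to apply Lemma~\ref{Theorem C.4.} directly by choosing $\Omega$ large enough to contain the support, the constants $\delta_0,C_0$ you extract would depend on that $\Omega$ and hence on $u_1$ itself, violating the requirement that $C$ and $\delta$ be independent of $u_1,\hat f_1,\hat f_2$.

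This is precisely where the paper's proof does nontrivial work that your proposal omits. The paper fixes $R$ so that $Q\subset Q(R)$ (a choice depending only on $Q$), takes the restriction $\hat u_1$ of $u_1$ to $Q(R)$, and \emph{constructs} an extension $v$ of $\hat u_1$ to all of $\mathbb{R}^n\times\mathbb{R}_{\ge 0}$ with $v=u^*$ off $Q(2R)$ and with the induced perturbations $\tilde f_1:=\tilde{\mathcal L}_{\mathrm{HJB}}v-h$ and $\tilde f_2:=\tilde{\mathcal B}_{\mathrm{HJB}}v-g$ compactly supported in $\overline{Q(2R)}$, $B_{2R}(0)$ and bounded by $C'\|\hat f_1\|_p$, $C'\|\hat f_2\|_q$ with $C'$ depending only on $n,R,p,q,Q$. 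Lemma~\ref{Theorem C.4.} is then applied to the truncated function $v$ (which \emph{does} satisfy its compact-support hypothesis with a fixed $\Omega=B_{2R}(0)$), and the identity $u_1\equiv v$ on $Q$ transfers the bound to $\|u_1-u^*\|_{W^{1,r}(Q)}$. Your proposal skips this cutoff/extension step entirely, so it does not establish the theorem in the generality it is meant to have; if you want to keep your restriction argument, you must first construct $v$ as above so that Lemma~\ref{Theorem C.4.} is applicable with constants independent of $u_1$.
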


\begin{proof}
%For $R>0$, we define $Q(R)$ as $\{(x,t)\in\mathbb{R}^n\times\mathbb{R}_{\geq 0}:|x|^2<R,t<R^2\}$.

%We use abbreviated notations for Sobolev norm mentioned in appendix\ref{sblv}.
Since $Q$ is bounded, there exists $R>0$ such that $Q\subset Q(R)$. %(por the definition of $Q(R)$, see \ref{defPARO}). 

Let $\hat{u}_1$ be the constraint of $u_1$ in $Q(R)$. Construct an extension $v$ of $\hat{u}_1$ to $\mathbb{R}^n\times\mathbb{R}_{\geq0}$ such that
\begin{enumerate}[(i)]
    \item  $v=u^{*}$ in $(\mathbb{R}^n\times\mathbb{R}_{\geq0})\backslash Q(2R)$,
    \item $\|\tilde{f}_1\|_p%{L^p(\mathbb{R}^n\times\mathbb{R}_{\geq0})}
 \leq C' \|\hat{f}_1\|_{L^p(Q)}$ and $
 \|\tilde{f}_2\|_%{L^p(\mathbb{R}^n)}
 \leq C' \|\hat{f}_2\|_{L^q(B_R(0))}$ for a constant $C'$ depending only on $n,R,p,q,Q$, where $\tilde{f}_1:=\tilde{\mathcal{L}}_{\mathrm{HJB}}v-\hat{f}_1,\ \tilde{f}_2:=\tilde{\mathcal{B}}_{\mathrm{HJB}}v-\hat{f}_2$.
\end{enumerate}
%  (i) $v=u^{*}$ in $(\mathbb{R}^n\times\mathbb{R}_{\geq0})\textbackslash Q(2R) $, and
%  (ii) $\|\hat{f}\|_p%{L^p(\mathbb{R}^n\times\mathbb{R}_{\geq0})}
%  \leq C' \|\hat{\varphi}\|_{L^p(Q)}$ and $
%  \|\hat{p}\|_%{L^p(\mathbb{R}^n)}
%  \leq C' \|\hat{\psi}\|_{L^q(B_R(0))}$ for a constant $C'$ depending only on $n,R,p,q,Q$, where $\hat{f}:=\tilde{\mathcal{L}}_{\mathrm{HJB}}v-\varphi,\ \hat{p}:=\tilde{\mathcal{B}}_{\mathrm{HJB}}v-\psi$.
 Note that $\mathrm{supp}(\tilde{f}_1)\subset \overline {Q(2R)}$ and $\mathrm{supp}(\tilde{f}_2)\subset B_{2R}(0)$, the existence of $v$ is obvious. 
 
 From Lemma \ref{Theorem C.4.}, there exists $C>0$ and $\delta>0$ such that  $\|\tilde{f}_1\|_p<\delta$ and $\|\tilde{f}_2\|_q<\delta$ imply $\|v-u^{*}\|_{1,r}\leq C(\|\tilde{f}_1\|_p+\|\tilde{f}_2\|_q)$. % when are sufficiently small. Thus
 Thus
 \begin{align}
 \|u_1-u^{*}\|_{W^{1,r}(Q)}=\|v-u^{*}\|_{W^{1,r}(Q)}\leq \|v-u^{*}\|_{1,r}\leq C(\|\tilde{f}_1\|_p+\|\tilde{f}_2\|_q)\\
 \leq
% CC'(\|f\|_{L^p(\math)})
 CC'(\|\hat{f}_1\|_{L^p(Q)}+\|\hat{f}_2\|_{L^q(B_R(0))})\leq CC'(\|\hat{f}_1\|_p+\|\hat{f}_2\|_q).
 \end{align}
The proof is completed.
\end{proof}

\section{Proof of Theorem \ref{thm:lower-bound}}
\label{app:proof-lower-bound}
In this section, we give the proof of Theorem \ref{thm:lower-bound}. 

Based on remark \ref{1rk}, it is equivalent to consider Eq. (\ref{eqC}). 
We will show that the following equation satisfies the properties stated in Theorem \ref{thm:lower-bound},
\begin{equation}
\label{d.eq}
\begin{cases}
 \partial_t u-\Delta u+ |Du|^2=0\ \  in\ \mathbb{R}^n\times[0,T]\\
 u(x,0)=g(x).
\end{cases}    
\end{equation}

This equation is a special case for Eq. (\ref{eqC}) with $A_i=1,\ c_i=2,\ \forall i\in[n]$ and $h(x,t)\equiv0$.

%Although Theorem C.4 provides a stability result for a class of HJB equations, we cannot ignore the fact that rigorous regularity requirements for the loss term are needed to guarantee the stability stated in the theorem, namely, the smallest viable $F$ and $G$ grow linearly with $n$, the dimension of the domain. In this section, we will show that the rigorous requirement of regularities could not be relaxed in general(Theorem \ref{Theorem D.3.}). In fact, this phenomenon is quite common in the study of nonlinear PDEs. To simplify the exposition, we will focus on the equivalent form (see remark \ref{1rk} in section B) of a prototype of HJB equation,
% $$
% \begin{cases}
%  \partial_t u-\Delta u+ |Du|^2=0\ \  in\ \mathbb{R}^n\times[0,T]\\
%  u(x,0)=g(x).
% \end{cases}
% $$

Denote by $u^{*}$ the exact solution to the equation above. The notations $\mathcal{L}_0,\tilde{\mathcal{L}}_{\mathrm{HJB}},\tilde{\mathcal{B}}_{\mathrm{HJB}}$ in the following discussion have the same meaning as in section \ref{app:proof-upper-bound}.

We prove some auxiliary results first.

\begin{lemma}
\label{Lemma D.1.}
For $p\in [2,2n),$ and an open set or a parabolic region $\mathfrak{A}$, we denote the function space $W^{2,p}(\mathfrak{A})$ as $X$ and $L^{\frac{np}{2n-p}}(\mathfrak{A})$ as $Y$.
 For any $u\in X$, we have $\|u-u'\|_X\geq A\sqrt{\|\tilde{\mathcal{L}}_{\mathrm{HJB}}u-\tilde{\mathcal{L}}_{\mathrm{HJB}}u'\|_Y+B}-C$ holds for $\forall u'\in X$, where $A,B,C$ are positive constants depending on $u$.
\end{lemma}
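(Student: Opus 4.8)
The plan is to reduce the asserted lower bound on $\|u-u'\|_X$ to a \emph{quadratic} upper bound on the $Y$-norm of the residual difference. Because $\|u-u'\|_X+C>0$, the claimed inequality $\|u-u'\|_X\ge A\sqrt{\|\tilde{\mathcal{L}}_{\mathrm{HJB}}u-\tilde{\mathcal{L}}_{\mathrm{HJB}}u'\|_Y+B}-C$ is equivalent, after moving $C$ to the left and squaring the two nonnegative sides, to
\[
\|\tilde{\mathcal{L}}_{\mathrm{HJB}}u-\tilde{\mathcal{L}}_{\mathrm{HJB}}u'\|_Y\ \le\ A^{-2}\bigl(\|u-u'\|_X+C\bigr)^{2}-B .
\]
So it suffices to exhibit constants $a_1,a_2>0$ depending only on $u,n,p,\mathfrak{A}$ with
\[
\|\tilde{\mathcal{L}}_{\mathrm{HJB}}u-\tilde{\mathcal{L}}_{\mathrm{HJB}}u'\|_Y\ \le\ a_2\,\|u-u'\|_X^{2}+a_1\,\|u-u'\|_X\qquad\text{for all }u'\in X,
\]
and then take $A:=a_2^{-1/2}$, $C:=a_1/(2a_2)$, $B:=a_1^{2}/(4a_2)$: for these choices the right-hand side of the quadratic estimate equals identically $A^{-2}(\|u-u'\|_X+C)^{2}-B$, and $A,B,C>0$ depend only on $u$ (and $n,p,\mathfrak{A}$), so taking square roots recovers the statement.

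For the quadratic estimate I would put $w:=u-u'$, so $Du'=Du-Dw$. Since Eq.~(\ref{d.eq}) is the special case $A_i=1$, $c_i=2$, a direct computation gives
\[
\tilde{\mathcal{L}}_{\mathrm{HJB}}u-\tilde{\mathcal{L}}_{\mathrm{HJB}}u'=\mathcal{L}_0 w+|Du|^{2}-|Du-Dw|^{2}=\mathcal{L}_0 w+2\,Du\cdot Dw-|Dw|^{2},
\]
hence by the triangle inequality $\|\tilde{\mathcal{L}}_{\mathrm{HJB}}u-\tilde{\mathcal{L}}_{\mathrm{HJB}}u'\|_Y\le\|\mathcal{L}_0 w\|_Y+2\|Du\cdot Dw\|_Y+\||Dw|^{2}\|_Y$, and I would bound the three terms separately.

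Write $s_0:=\tfrac{np}{2n-p}$ (so $\tfrac1{s_0}=\tfrac2p-\tfrac1n$ and $Y=L^{s_0}(\mathfrak{A})$). The exponent defining $Y$ is chosen precisely so that $\||Dw|^{2}\|_Y=\|Dw\|_{L^{2s_0}}^{2}$ with $\tfrac1{2s_0}=\tfrac1p-\tfrac1{2n}\ge\tfrac1p-\tfrac1n$; since $\mathfrak{A}$ is bounded this gives $L^{np/(n-p)}(\mathfrak{A})\subset L^{2s_0}(\mathfrak{A})$, and the Sobolev embedding $W^{2,p}\hookrightarrow W^{1,np/(n-p)}$ (Lemma~\ref{Lemma A.2}(i) when $p<n$, part (ii) when $p\ge n$) then yields $\|Dw\|_{L^{2s_0}}\le C\|w\|_X$, whence $\||Dw|^{2}\|_Y\le C^{2}\|w\|_X^{2}$. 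For the cross term, Hölder with $\tfrac1{s_0}=\tfrac1{2s_0}+\tfrac1{2s_0}$ and the same embedding give $\|Du\cdot Dw\|_Y\le\|Du\|_{L^{2s_0}}\|Dw\|_{L^{2s_0}}\le(C\|u\|_X)(C\|w\|_X)$, where $\|Du\|_{L^{2s_0}}<\infty$ precisely because $u\in W^{2,p}(\mathfrak{A})$ — this is the source of the dependence of the constants on $u$. Finally $\|\mathcal{L}_0 w\|_Y\le C\|w\|_X$ directly from the definition of the $X$-norm together with $L^p(\mathfrak{A})\hookrightarrow Y$. Collecting the three bounds gives the quadratic estimate with $a_2=C^{2}$ and $a_1=C+2C^{2}\|u\|_X>0$.

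The technical heart is this last step: tracking the Lebesgue and Sobolev exponents and verifying that each of $\mathcal{L}_0 w$, $Du\cdot Dw$ and $|Dw|^{2}$ really lands in $Y=L^{np/(2n-p)}$ with the indicated power of $\|w\|_X$. The hypothesis $p\in[2,2n)$ is exactly what makes $2s_0=2np/(2n-p)$ finite and $\le np/(n-p)$, and makes $s_0\ge 1$ so that $Y$ is a genuine Banach space, while boundedness of $\mathfrak{A}$ is used repeatedly to pass from larger to smaller exponents. The only point beyond routine bookkeeping is verifying $\|Du\|_{L^{2s_0}(\mathfrak{A})}<\infty$, which follows from $W^{2,p}(\mathfrak{A})\hookrightarrow W^{1,2s_0}(\mathfrak{A})$ and explains why $A,B,C$ are allowed to depend on $u$; once the quadratic bound is in hand, completing the square and taking square roots (as in the first paragraph) finishes the proof.
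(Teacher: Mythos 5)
Your proof is correct and reaches the same quadratic bound as the paper, but by a genuinely different and more elementary route. The paper's proof proceeds in two steps: it first establishes that $\tilde{\mathcal{L}}_{\mathrm{HJB}}\colon X\to Y$ is Fr\'echet-differentiable with Lipschitz-continuous derivative $A(u)v=\mathcal{L}_0v+2Du\cdot Dv$ (bounding $\|A(u')-A(u)\|_{\mathscr{L}(X,Y)}\le C_0\|u'-u\|_X$ via $W^{1,p/2}\hookrightarrow L^{np/(2n-p)}$ and Cauchy–Schwarz), and then runs a compactness/covering argument along the segment $u_\eta=(1-\eta)u+\eta u'$ to pass a discrete mean-value estimate to the integral bound $\|\tilde{\mathcal{L}}_{\mathrm{HJB}}u-\tilde{\mathcal{L}}_{\mathrm{HJB}}u'\|_Y\le\|u-u'\|_X\bigl(C_0(\|u\|_X+\|u-u'\|_X)+C_1\bigr)$. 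You instead exploit that the nonlinearity is exactly quadratic and write $\tilde{\mathcal{L}}_{\mathrm{HJB}}u-\tilde{\mathcal{L}}_{\mathrm{HJB}}u'=\mathcal{L}_0w+2Du\cdot Dw-|Dw|^2$ outright, bounding the three terms with one application of H\"older and one Sobolev embedding ($W^{2,p}\hookrightarrow W^{1,2s_0}$, the exponent $s_0=np/(2n-p)$ being tuned so $\||Dw|^2\|_Y=\|Dw\|_{2s_0}^2$). This skips the Fr\'echet-calculus machinery and the covering argument entirely and arrives at the same bound $a_2\|w\|_X^2+a_1\|w\|_X$, after which your completing-the-square step (which the paper leaves implicit after equations (\ref{eqn:4}) and (\ref{44})) is carried out explicitly and is correct. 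The trade-off is that the paper's mean-value-theorem argument would generalize to non-polynomial nonlinearities, whereas your direct expansion relies on $c_i=2$; for the purposes of this lemma, which only concerns the instance (\ref{d.eq}), your route is simpler.

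One caveat worth flagging, though it is inherited from the paper rather than introduced by you: both your bound $\|\mathcal{L}_0w\|_Y\le C\|w\|_X$ (via $L^p\hookrightarrow Y$) and the paper's implicit claim $C_1:=\|\mathcal{L}_0\|_{\mathscr{L}(X,Y)}<\infty$ require $\mathcal{L}_0$ to map $X=W^{2,p}$ boundedly into $Y=L^{np/(2n-p)}$. With the paper's own definition of $W^{2,p}$ on spatio-temporal domains (only spatial derivatives enter the norm), $\|\partial_t w\|$ is not controlled by $\|w\|_X$, and when $p>n$ one additionally has $np/(2n-p)>p$, so $L^p(\mathfrak{A})\not\hookrightarrow Y$ even on a bounded $\mathfrak{A}$. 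This is a gap in the paper's Step~1 as well, so it does not distinguish your argument from theirs, but if you wanted a fully self-contained proof you would need either to enlarge $X$ to a parabolic Sobolev space controlling $\partial_t$ and $D^2$ in a sufficiently high Lebesgue exponent, or to restrict to $p\le n$.
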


%fill in

\begin{proof}
%Formally, for any
%For clarity,
We divide the proof into two steps.

\textit{Step 1.}
We check that $\tilde{\mathcal{L}}_{\mathrm{HJB}}$ as an operator mapping from $X$ to $Y$ is Fréchet-differentiable.

For any $u,v\in X,\ t\in \mathbb{R}$, since $v\in X$, which means $|Dv|^2\in W^{1,\frac p 2}\subset Y$, we have
\begin{align}
\|\tilde{\mathcal{L}}_{\mathrm{HJB}}(u+tv)-\tilde{\mathcal{L}}_{\mathrm{HJB}}u-t\cdot (\mathcal{L}_0v+2Du\cdot Dv )\|_Y=\|t^2|Dv|^2\|_Y=o(t)\ \ (t\to 0).
\end{align}
Therefore, $d\tilde{\mathcal{L}}_{\mathrm{HJB}}(u,v)= \mathcal{L}_0v+2Du\cdot Dv $ by definition.

Define operator $A(u)\in\mathscr{L}(X,Y)$ as $A(u)v=dL(u,v)$.
For $u,u'\in X$ and any $v\in X$, note that
\begin{align}
\|A(u')v-A(u)v\|_Y=2\|D(u'-u)\cdot Dv\|_Y\leq C_0\|D(u'-u)\cdot Dv\|_{1,\frac p 2}\\
\leq C_0\|D(u'-u)\|_{1,p}\|Dv\|_{1,p}\leq C_0 \|u'-u\|_X\|v\|_X
\end{align}
for a constant $C_0$, where the second inequality comes from Cauchy-Schwarz inequality.
Thus we have $\|A(u)-A(u')\|_{\mathscr{L}(X,Y)}\leq C_0\|u-u'\|_X$, which means $A$ is continuous with regard to $u$.
As a result, $\tilde{\mathcal{L}}_{\mathrm{HJB}}$ is Fréchet-differentiable and $\tilde{\mathcal{L}}_{\mathrm{HJB}}'(u)=A(u)$.
Moreover, we derive $\|L'(u)\|\leq\|L'(0)\|+\|L'(0)-L'(u)\|\leq C_0\|u\|_X+C_1$.\\

%Since $v\in X$, we have $|Dv|^2\in W^{1,\frac p 2}\subset Y$.
\textit{Step 2.}
For any $u,u'\in X$, let $y=\tilde{\mathcal{L}}_{\mathrm{HJB}}u,\ y'=\tilde{\mathcal{L}}_{\mathrm{HJB}}u'$.

Define $u_\eta=(1-\eta)u+\eta u'$ and $y_\eta=\tilde{\mathcal{L}}_{\mathrm{HJB}}u_\eta$ for $\eta\in[0,1]$. Fix a number $m\in\mathbb{N}$.

From the property proved in step 1, for any $\eta \in[0,1]$, there exists $r_\eta \in(0,\frac 1 m)$ such that
\begin{align}
\|\tilde{\mathcal{L}}_{\mathrm{HJB}}v-\tilde{\mathcal{L}}_{\mathrm{HJB}}u_{\eta}\|\leq2\|L'(u_{\eta})\|\cdot\|u_{\eta}-v\|\leq 2(C_0\|u_{\eta}\|+C_1)\|v-u_{\eta}\|,\ \ \forall v\in B_{r_\eta}(u_{\eta}).
\end{align}
Note that $\{B_{\frac {r_\eta}2 }(u_{\eta}):\eta \in[0,1]\}$ is an open cover of $\{u_{\eta}:\eta \in[0,1]\}$. Because of the compactness of $\{u_{\eta}:\eta \in[0,1]\}$, we obtain an increasing finite sequence $\{\eta _i\}_{i=0}^{N}$ with $\eta _0=0,\eta _N=1$ such that either $u_{\eta _i}\in B_{r_{\eta _{i-1}}}(u_{\eta _{i-1}})$ or $u_{\eta _{i-1}}\in B_{r_{\eta _{i}}}(u_{\eta _{i}})$ for $\forall i\in [N]$. This means that
\begin{align}
\|y_{\eta _{i}}-y_{\eta _{i-1}}\|\leq 2(C_0\|u_{\eta _j}\|+C_1)\|u_{\eta _{i}}-u_{\eta _{i-1}}\|,\ j\in\{i-1,i\}
\end{align}
holds for $\forall i\in[N]$.

Therefore
\begin{align}
\label{eqn:4}
\|y'-y\|\leq \sum_{i=1}^{N} \|y_{\eta _i}-y_{\eta _{i-1}}\|
\leq \sum_{i=1}^N 2(C_0\|u_{\eta _j}\|+C_1)\|u_{\eta _{i}}-u_{\eta _{i-1}}\|
\end{align}
Note that this inequality holds for every $m$.
As $m\to\infty$, R.H.S. of (\ref{eqn:4}) converges to
\begin{align}
&\|u-u'\|\int_{0}^1 2(C_0\|u+s(u'-u)\|+C_1)ds.\\
=&\|u-u'\|(C_0\|u+\theta(u'-u)\|+C_1),\ \ (\theta\in (0,1))\\
\label{44}
\leq &\|u-u'\|(C_0(\|u\|+\|u'-u\|)+C_1).
\end{align}
The equality comes from mean value theorem for integral, and the inequality comes from triangular inequality.

%%______________numbers__________
Combining Eq. (\ref{eqn:4}) and (\ref{44}) together completes the proof.
\end{proof}%\qed

\begin{lemma}
\label{Lemma D.2.}
Let $u_1$ % is the approximation for $u^{*}$.
%Assume
satisfies
$\tilde{\mathcal{B}}_{\mathrm{HJB}}u^{*}=\tilde{\mathcal{B}}_{\mathrm{HJB}}u_1$ and that $\mathrm{supp}(u_1-u^{*})$ is compact in $\mathbb{R}^n\times\mathbb{R}^+$.
%Let u_1 be the solution to
Define $f(x,t):=\tilde{\mathcal{L}}_{\mathrm{HJB}}u_1$.
Let $p\in [2,\infty),m\in\mathbb{N}$.
If $p\geq \frac n 2$ then there exists $\delta_0>0$ such that $\|f\|_{m,p}<\delta_0$ implies %for a sufficiently small value $\delta_0$, we have
$\|u^{*}-u_1\|_{m+2,p}=O(\|f\|_{m,p})$.
\end{lemma}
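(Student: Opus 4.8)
The plan is to follow the architecture of the proof of Lemma~\ref{Lemma C.2.} almost verbatim, upgrading it from the pair $(W^{2,p},L^p)$ to $(W^{m+2,p},W^{m,p})$ and exploiting two features of the present special case (Eq.~(\ref{d.eq})): $A_i\equiv1$, $c_i\equiv2$, so the only nonlinear term is the genuinely quadratic $|Dw|^2$; and $\bar{c}=2$, so the threshold $p\ge(1-\bar{c}^{-1})n$ of Lemma~\ref{Lemma C.2.} is exactly the hypothesis $p\ge n/2$. I would argue by induction on $m$, the base case $m=0$ being nothing but Lemma~\ref{Lemma C.2.} applied with $c_i\equiv2$, $A_i\equiv1$, $h\equiv0$, whose conclusion reads $\|u^*-u_1\|_{2,p}\le C\|f\|_p$, i.e.\ the claim for $m=0$.

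For the inductive step I would set $w:=u_1-u^*$; since $\mathrm{supp}(w)$ is compact, fix a bounded open $\Omega\subset\mathbb{R}^n$ with $\mathrm{supp}(w)\subset Q_T(\Omega)$, so $w$ and all its spatial derivatives vanish on the parabolic boundary $\partial_pQ_T(\Omega)$. Using $\tilde{\mathcal{L}}_{\mathrm{HJB}}u^*=0$ together with $A_i\equiv1$, $c_i\equiv2$,
\[
 f=\tilde{\mathcal{L}}_{\mathrm{HJB}}u_1=\mathcal{L}_0w+2\,Du^*\!\cdot Dw+|Dw|^2 ,
\]
hence $\mathcal{L}_0w=f-2\,Du^*\!\cdot Dw-|Dw|^2$. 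Applying Lemma~\ref{Lemma A.4} to each $D^\alpha w$ with $|\alpha|\le m$ (each of which solves $\mathcal{L}_0(D^\alpha w)=D^\alpha(\mathcal{L}_0w)$ with zero parabolic-boundary data) and summing gives $\|w\|_{m+2,p}\le C\|\mathcal{L}_0w\|_{m,p}$ on $Q_T(\Omega)$, whence
\[
 \|w\|_{m+2,p}\le C\|f\|_{m,p}+C\,\|2\,Du^*\!\cdot Dw\|_{m,p}+C\,\||Dw|^2\|_{m,p}.
\]
Since $u^*\in C^\infty(\overline{Q_T(\Omega)})$, Leibniz's rule gives $\|2\,Du^*\!\cdot Dw\|_{m,p}\le C\|w\|_{m+1,p}$, and the inductive hypothesis (used at order $m-1$, which is legitimate because $\|f\|_{m-1,p}\le\|f\|_{m,p}$) bounds this by $C\|f\|_{m,p}$. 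For the quadratic term I would establish $\||Dw|^2\|_{m,p}\le C\|w\|_{m+2,p}^2$: for $m=0$ this is $\|Dw\|_{2p}^2\le C\|w\|_{2,p}^2$, which holds on the bounded domain precisely because $p\ge n/2$, via $W^{2,p}\hookrightarrow W^{1,np/(n-p)}\hookrightarrow L^{2p}$ (Lemma~\ref{Lemma A.2}, with the easier variant when $p\ge n$); for general $m$ one distributes the $m$ derivatives over the two factors by Leibniz and controls each factor with Lemma~\ref{Lemma A.2} and the Gagliardo--Nirenberg inequality (Lemma~\ref{Lemma A.3}), the same condition $p\ge n/2$ guaranteeing the embeddings needed. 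With $M:=\|w\|_{m+2,p}$ this yields an inequality $M\le C_0\|f\|_{m,p}+C_1M^2$.

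To close, I would note that $\Phi(M)=M-C_1M^2$ is concave on $[0,\infty)$, vanishes at $0$ with slope $1$, attains its maximum $1/(4C_1)$ at $M=1/(2C_1)$, and decreases to $-\infty$; hence for $\|f\|_{m,p}$ small enough that $C_0\|f\|_{m,p}<1/(4C_1)$ the sublevel set $\{M\ge0:\Phi(M)\le C_0\|f\|_{m,p}\}$ is $[0,a]\cup[b,\infty)$ with $a\le 2C_0\|f\|_{m,p}$ and $b$ bounded away from $0$ uniformly as $\|f\|_{m,p}\to0$. Excluding the large branch $M\ge b$ is done exactly as in Lemma~\ref{Lemma C.2.}: $\tilde{\mathcal{L}}_{\mathrm{HJB}}$ is $C^1$ from $W_0^{m+2,p}(Q_T(\Omega))$ to $W^{m,p}(Q_T(\Omega))$, its Fréchet derivative at $u^*$ is the linear parabolic operator $v\mapsto\mathcal{L}_0v+2\,Du^*\!\cdot Dv$, which is invertible by the (higher-order analogue of the) estimate above together with parabolic solvability, so by the inverse function theorem (Lemma~\ref{Lemma A.6}) it is a diffeomorphism from a small ball $B_{r_0}(u^*)$ onto a neighbourhood of $\tilde{\mathcal{L}}_{\mathrm{HJB}}u^*=0$, while being globally injective on functions with the prescribed initial datum by uniqueness for the HJB equation. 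Choosing $\delta_0$ small enough that $\|f\|_{m,p}<\delta_0$ both keeps $f$ in that neighbourhood and forces $b>r_0$, one gets $w\in B_{r_0}(0)$, hence $M\le a=O(\|f\|_{m,p})$, closing the induction and the proof. The hard part will be the tame estimate $\||Dw|^2\|_{m,p}\le C\|w\|_{m+2,p}^2$ at the \emph{sharp} exponent $p=n/2$, where the relevant Sobolev and product (Moser-type) inequalities are borderline and one must split the $m$ derivatives between the two factors so as to avoid an embedding that fails on the nose; a secondary technical point is that rerunning the inverse-function-theorem dichotomy of Lemma~\ref{Lemma C.2.} in the $W^{m+2,p}/W^{m,p}$ topology tacitly uses higher-order parabolic $L^p$ regularity for the drift operator $\mathcal{L}_0+2\,Du^*\!\cdot D$, which goes slightly beyond the version of Lemma~\ref{Lemma A.4} stated here, though it is proved along the same lines.
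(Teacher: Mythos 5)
Your proposal is correct and follows essentially the same route as the paper: base case from Lemma~\ref{Lemma C.2.}, differentiate the equation $D^\alpha$ times, apply the linear parabolic estimate of Lemma~\ref{Lemma A.4}, bound the cross term and the quadratic term by Sobolev embedding using $p\geq n/2$, and close via the same dichotomy/inverse-function-theorem argument. The paper compresses all of the inductive step into the phrase ``with similar methods used in Lemma~\ref{Lemma C.2.}''; you have filled in precisely those details (the explicit induction on $m$ to absorb $\|2\,Du^*\!\cdot Dw\|_{m,p}$, and the verification that the product estimate for $\||Dw|^2\|_{m,p}$ is valid at the endpoint $p=n/2$), and the two caveats you flag are genuine but resolvable by standard higher-order parabolic $L^p$ regularity.
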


\begin{proof}
%\textit{Proof sketch.}
When $m=0$, this statement is a direct consequence of Lemma \ref{Lemma C.2.}.

When $m> 0$, for every multi-index $\alpha$ with $|\alpha|\leq m$, operate $D^{\alpha}$
on both sides of $\tilde{\mathcal{L}}_{\mathrm{HJB}}u_1=f$ and $\tilde{\mathcal{L}}_{\mathrm{HJB}}u^{*}=0$. We then obtain
\begin{align}
    \label{46}
    \mathcal{L}_0 D^{\alpha}u_1+ D^{\alpha}|Du_1|^2&=D^{\alpha}f\\
    \label{47}
    \mathcal{L}_0 D^{\alpha}u^{*}+ D^{\alpha}|Du^{*}|^2&=0.
\end{align}

Define $w:=u_1-u^{*}$, and compute the difference between (\ref{46}) and (\ref{47}), we get
\begin{align}
\mathcal{L}_0D^{\alpha}w=D^{\alpha}f-\sum_{i=1}^{n}(2D^{\alpha}(\partial_i u^{*}\partial_i w)+D^{\alpha}(\partial_iw)^2).
\end{align}
With similar methods used in Lemma \ref{Lemma C.2.}, we could bound $\|D^{\alpha}w\|_{2,p}$ with $\|D^{\alpha}f\|_p$, based on which we complete the proof.
\end{proof}
% \subsection{THEOREM D3}

Finally, we show that Eq. (\ref{d.eq}) satisfies the properties stated in Theorem \ref{thm:lower-bound}, which will conclude the proof for Theorem \ref{thm:lower-bound}.

\begin{theorem}
\label{thm:lower-bound_apdx}
For any $\varepsilon>0,A>0,r\geq 1,m\in\mathbb{N}$ and $p\in\left[1,\frac n 4\right]$, there exists a function $u\in C^{\infty}(\mathbb{R}^n\times(0,T])$ which satisfies the following conditions:
\begin{itemize}
    \item $\|\tilde{\mathcal{L}}_{\mathrm{HJB}}u\|_{L^p(\sR^n\times[0,T])}<\varepsilon$,  $\tilde{\mathcal{B}}_{\mathrm{HJB}}u=\tilde{\mathcal{B}}_{\mathrm{HJB}}u^{*}$, and $\mathrm{supp}(u-u^{*})$ is compact.%, where $\mathcal{L}_{\mathrm{HJB}}$ and $\mathcal{B}_{\mathrm{HJB}}$ are defined in Eq. (\ref{eq:hjb-study}).
    \item $\|u-u^{*}\|_{W^{m,r}(\sR^n\times[0,T])}>A$.
\end{itemize}
\end{theorem}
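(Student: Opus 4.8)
By Remark~\ref{1rk} it is enough to treat Eq.~(\ref{d.eq}); I would fix a convenient instance (one may even take $g\equiv0$, so that $u^*\equiv0$ and the cross term below vanishes, but this is not essential). Given $\varepsilon,A>0$, the plan is to exhibit a single $\phi\in C^\infty$, compactly supported and vanishing near $\{t=0\}$, with $\|\phi\|_{W^{m,r}}>A$ while the HJB residual of $u:=u^*+\phi$ is smaller than $\varepsilon$ in $L^p$; then $u\in C^\infty(\mathbb{R}^n\times(0,T])$, $\mathcal{B}_{\mathrm{HJB}}u=\mathcal{B}_{\mathrm{HJB}}u^*$ and $\mathrm{supp}(u-u^*)$ compact all come for free. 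Since
\begin{align}
\tilde{\mathcal{L}}_{\mathrm{HJB}}(u^*+\phi)-\tilde{\mathcal{L}}_{\mathrm{HJB}}u^*=\mathcal{L}_0\phi+2\nabla u^*\cdot\nabla\phi+|\nabla\phi|^2
\end{align}
is again compactly supported, the whole problem is to make $\|\phi\|_{W^{m,r}}$ large while keeping this expression small in $L^p$.

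The point I would stress is that the quadratic term $|\nabla\phi|^2$ lies in $L^p$ as soon as $\nabla\phi\in L^{2p}$, i.e.\ $\phi\in W^{1,2p}$, which in high dimension is strictly weaker than the $W^{2,p}$-type regularity needed to put the linear part $\mathcal{L}_0\phi$ in $L^p$. To cash this in I would apply Lemma~\ref{Lemma D.1.} with its free parameter set equal to $q:=\tfrac{2np}{n+p}$, the value for which its target space $L^{nq/(2n-q)}$ is exactly $L^p$; with $\tilde{\mathcal{L}}_{\mathrm{HJB}}u^*=0$ its inequality rearranges (using the relation its proof establishes between the constants) to an upper bound on $\|\tilde{\mathcal{L}}_{\mathrm{HJB}}(u^*+\phi)-\tilde{\mathcal{L}}_{\mathrm{HJB}}u^*\|_{L^p}$ that is $o(1)$ as $\|\phi\|_{W^{2,q}}\to0$. (For the narrow range $p<\tfrac n{n-1}$ one has $q<2$ and Lemma~\ref{Lemma D.1.} does not apply verbatim, but the same computation — bounding $|\nabla\phi|^2$ through $\nabla\phi\in W^{1,q/2}$ and the heat estimate Lemma~\ref{Lemma A.4} — works directly; Lemma~\ref{Lemma D.2.}, the matching stability statement for $p\ge n/2$, is consistent with the remaining gap, the degree of integrability lost to $|\nabla\phi|^2$ being why $n/4$, not $n/2$, appears.) So the theorem reduces to constructing $\phi_k\in C_c^\infty(\mathbb{R}^n\times(0,T])$ with $\|\phi_k\|_{W^{2,q}}\to0$ and $\|\phi_k\|_{W^{m,r}}\to\infty$.

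Such $\phi_k$ exist because $p\le n/4$ forces $q\le\tfrac{2n}{5}<\tfrac n2$, so $W^{2,q}$ cannot control $W^{m,r}$. Concretely I would take $\phi_k$ to be a superposition of $M_k$ disjointly supported rescaled copies of a fixed bump $\Phi\in C_c^\infty$ of spatial scale $\delta_k$ — with a parabolic time scale $\sim\delta_k^2$ in some regimes and a wide spatial profile paired with a fixed time profile in others (the split being roughly $r\ge p$ versus $r<p$ and small versus large $m$) — and read everything off from $\|D^j(\text{copy})\|_{L^\rho}\sim(\text{height})\,\delta_k^{n/\rho-j}$, $\big\|\sum_i\Phi_i\big\|_{W^{s,\rho}}\sim M_k^{1/\rho}\|\Phi_1\|_{W^{s,\rho}}$, and $\big\||\nabla\phi_k|^2\big\|_{L^p}=\|\nabla\phi_k\|_{L^{2p}}^2$; the constraints $p\le n/4$ and $r\ge1$ leave just enough freedom in $(\delta_k,M_k)$ and the heights to send the $W^{2,q}$-norm to $0$ and the $W^{m,r}$-norm to $\infty$. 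Fixing $k$ large and setting $u=u^*+\phi_k$ then finishes the proof.

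I expect the last step to be the main obstacle: the dimension-counting must come out right \emph{uniformly} over all $m\ge1$, $r\ge1$ and $p\le n/4$, which is exactly where the threshold $n/4$ is consumed, and in the borderline $(m,r)$ ranges a single scale of bumps does not suffice so a multi-scale family is needed. By contrast the contribution of $u^*$ is benign — $u^*$ is smooth, hence $\nabla u^*$ is bounded on the fixed bounded sets where the $\phi_k$ live and the cross term $2\nabla u^*\cdot\nabla\phi_k$ is harmless — so no PDE input beyond Lemmas~\ref{Lemma A.4} and~\ref{Lemma D.1.} enters.
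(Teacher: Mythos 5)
Your route is genuinely different from the paper's, and the difference matters: the paper never tries to write down $u-u^{*}$ directly. It constructs the \emph{residual} from a singular profile $c|x|^{-0.7}$ (smoothed, and lifted to a time-dependent $\hat f_{a,c}$), chosen so that $\|F_{a,c}\|_{L^p}$ and $\|F_{a,c}\|_{W^{1,p_0}}$ stay $O(c)$ while $\|F_{a,c}\|_{L^{p_2}}\to\infty$ as $a\to 0$ (with $p_0=\frac59 n$, $p_1=\frac{11}{9}n$, $p_2=\frac{11}{7}n$); it then applies Lemma~\ref{Lemma D.1.} in the \emph{forward} (lower-bound) direction, $\|w\|_{W^{2,p_1}}\geq A\sqrt{\|F_{a,c}\|_{L^{p_2}}+B}-C$, so the blow-up of the high $L^{p_2}$ norm of the residual forces $\|w\|_{W^{2,p_1}}\to\infty$, and finally transfers this to $\|w\|_{L^1}\to\infty$ via Gagliardo--Nirenberg (Lemma~\ref{Lemma A.3}), using the boundedness of $\|w\|_{W^{3,p_0}}$ from Lemma~\ref{Lemma D.2.}. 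You instead invoke Lemma~\ref{Lemma D.1.} in the \emph{reverse} (upper-bound) direction to say the residual is $o(1)$ once $\|\phi\|_{W^{2,q}}\to 0$, and then try to build $\phi$ by bump superposition.

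The gap is in the very last step and it is not a matter of dimension counting. After the reduction the paper also makes (``$L^1$ is the weakest $W^{m,r}$ topology, so it suffices to treat $m=0,r=1$''), you need compactly supported $\phi_k$ with $\|\phi_k\|_{W^{2,q}}\to 0$ and $\|\phi_k\|_{L^1}\to\infty$. On any fixed bounded $Q$ containing the supports this is impossible: for $q\geq 1$ one always has $\|\phi\|_{L^1(Q)}\leq |Q|^{1-1/q}\|\phi\|_{L^q(Q)}\leq |Q|^{1-1/q}\|\phi\|_{W^{2,q}(Q)}$, so the two quantities cannot diverge in opposite directions. The sentence ``$q<\frac n2$ so $W^{2,q}$ cannot control $W^{m,r}$'' is the opposite of true for the relevant cases $m\in\{0,1,2\}$ and small $r$; it is only for $m\geq 3$ (or $m=2$ with $r$ large) that $W^{2,q}$ fails to dominate $W^{m,r}$, and those are exactly the \emph{easy} cases, not the hard ones. (Working out the bump exponents confirms this: with $M$ disjoint parabolic bumps of scale $\delta$ and height $h$ one needs $q(\nu+2)<n+2-\mu<r(\nu+m)$ with $M\sim\delta^{-\mu},h\sim\delta^{-\nu}$, and for $m\leq 2$, $r=1$ this system is empty.) Letting the supports grow does not rescue the argument uniformly in $p$: even with $u^{*}\equiv 0$, a single wide bump gives $\|\partial_t\phi\|_{L^p}\sim h\delta^{n/p}$ against $\|\phi\|_{L^1}\sim h\delta^{n}$, so the scheme only separates these for $p>1$; at $p=1$, where in fact $\|\phi\|_{L^1}\leq T\|\partial_t\phi\|_{L^1}$ for any $\phi$ vanishing at $t=0$, it fails outright, yet the theorem is claimed for $p\in[1,n/4]$. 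In short, the upper-bound use of Lemma~\ref{Lemma D.1.} is a trap here: smallness of $\|\phi\|_{W^{2,q}}$ is too strong a hypothesis, as it already kills the $L^1$ norm you want to make large, and the right mechanism (and the place where $n/4$ actually enters) is the paper's lower-bound use of Lemma~\ref{Lemma D.1.} combined with the interpolation between $W^{3,p_0}$, $W^{2,p_1}$, and $L^1$.
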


% \begin{theorem}
% \label{Theorem D.3.}
% For any $\epsilon>0,\ A>0,\ r\geq 1,\ m\in\mathbb{N}\ $ and $q\in[1,\frac n 4]$, there exists a function $u$ %\in C^{\infty}(\mathbb{R}^n\times[0,T])$
% which has the same regularity as $u^{*}$
% such that
% $\|\tilde{\mathcal{L}}_{\mathrm{HJB}}u\|_q<\epsilon$, $\tilde{\mathcal{B}}_{\mathrm{HJB}}u=\tilde{\mathcal{B}}_{\mathrm{HJB}}u^{*}$ and $\mathrm{supp}(u-u^{*})$ is compact, while $\|u-u^{*}\|_{m,r}>A$.
% \end{theorem}

%\sd{HERE}

\begin{proof}
Since $L^1(\Omega)$ has the weakest topology in function spaces $W^{m,r}(\Omega)$ when $\Omega$ is bounded, it is enough to consider the case for $r=1,\ m=0.$

Set ${p_0}=\frac 5 9 n,\ p_1=\frac{11}9 n$ and $p_2=\frac{11} 7 n$.\\ %$\hat{p}=\frac {11}9 n,\ p'=\frac {11}7 n$ and ${p_0}=\frac 5 9 n$.

\textit{Step 1.}
We construct two families of functions $\{v_{a,c}\},\{F_{a,c}\}$ %\subset  C^{\infty}(\mathbb{R}^n\times[0,T])  $ %with particular properties
as the basis of proof.

For any $a,c>0$, define $f_{a,c}(x)=c|x|^{-0.7},x\in \overline{B_1(0)}\backslash B_{a}(0)$
in $\mathbb{R}^n$. We could extend it to a $C^{\infty}$ function $\Tilde{f}_{a,c}(x)$ defined on $\mathbb{R}^n$ such that (i)$\|\Tilde{f}_{a,c}\|_{\infty}<\|{f}_{a,c}\|_{\infty}+\min\{1,c\}$ and (ii)$\mathrm{supp}(\Tilde{f}_{a,c})\subset B_{1.1}(0)$. 

We could further construct a $C^{\infty}$ function $\hat{f}_{a,c}(x,t)$ such that
\begin{enumerate}[(i)]
    \item $\mathrm{supp}(\hat{f}_{a,c})\subset B_{1.1}(0)\times (0,T]$,
    \item $\hat{f}_{a,c}(x,t)=\Tilde{f}_{a,c}(x),\forall t\in[\frac T 2,T]$, 
    \item $\|\hat{f}_{a,c}(x,t)\|_{L^{\infty}(\mathbb{R}^n)\times[0,T]}\leq \|\Tilde{f}_{a,c}\|_{L^{\infty}(\mathbb{R}^n)}$.
\end{enumerate}

Define $u_{a,c}$ as the solution to
$$
\begin{cases}
\tilde{\mathcal{L}}_{\mathrm{HJB}}u=\hat{f}_{a,c} \ \ in\ \mathbb{R}^n\times[0,T]\\
\tilde{\mathcal{B}}_{\mathrm{HJB}}u=g.
\end{cases}
$$
Select a function $w_{a,c}\in C^{\infty}(\mathbb{R}^{n}\times\mathbb{R})$ with compact support $\mathrm{supp}(w_{a,c})\subset\mathbb{R}^{n}\times(0,T]$, such that $\|u_{a,c}-u^{*}-w_{a,c}\|<\epsilon_c$ in  $W^{3,4{p_0}}(\mathbb{R}^n\times [0,T]),W^{2,4p_1}(\mathbb{R}^n\times [0,T])$ and $W^{2,4p}(\mathbb{R}^n\times [0,T])$,
where $\epsilon_c$ is an small value depending on $c$ and is to be decided later.
%(more precisely, it is smaller than $\epsilon_c$ which is a small value depending on $c$).

We define $v_{a,c}=u^{*}+w_{a,c}$ and $F_{a,c}=\tilde{\mathcal{L}}_{\mathrm{HJB}}v_{a,c}$.\\

\textit{Step 2.}
We show that% some properties of
 $\{v_{a,c}\}$ and $\{F_{a,c}\}$ have following properties:
% \begin{itemized}
% \item(i)
% \item(ii)
% \end{itemized}

\begin{enumerate}[(i)]
    \item $\mathrm{supp}(v_{a,c}-u^{*})$ is compact in $\mathbb{R}^{n}\times(0,T]$.
    \item $\tilde{\mathcal{B}}_{\mathrm{HJB}}v_{a,c}=\tilde{\mathcal{B}}_{\mathrm{HJB}}u^{*}$.
    \item $\mathrm{supp}(F_{a,c})$ is compact in $\mathbb{R}^{n}\times(0,T]$.
    \item There exists a constant $M<\infty$ such that $\|F_{a,c}\|_q<cM$ and $\|F_{a,c}\|_{1,{p_0}}<cM$.
    \item For any $c>0$, $\|F_{a,c}\|_{p_2}\to\infty$ as $a\to 0$.
\end{enumerate}

(i) and (ii) comes directly from the construction of $v_{a,c}$.

Because $\mathrm{supp}(v_{a,c}-u^{*})$ is close, for any $(x,t)\in (\mathbb{R}^{n}\times[0,T])\backslash \mathrm{supp}(v_{a,c}-u^{*})$, there exists $r>0$ such that $(B_{r}(x,t)\cap (\mathbb{R}^{n}\times[0,T]))\subset (\mathbb{R}^{n}\times[0,T])\backslash \mathrm{supp}(v_{a,c}-u^{*})$
, which means $v_{a,c}=u^{*}$ in $B_{r}(x,t)\cap (\mathbb{R}^{n}\times[0,T])$ and thus $\tilde{\mathcal{L}}_{\mathrm{HJB}}v_{a,c}=\tilde{\mathcal{L}}_{\mathrm{HJB}}u^{*}=0$. This gives (iii).

Due to the fact that the function $|x|^{-0.7}\in L^p(B_2(0))\cap W^{1,{p_0}}(B_2(0))$, there exists a constant $M<\infty$ such that $\|\hat{f}_{a,1}\|_p<M-1$ and $\|\hat{f}_{a,1}\|_{1,{p_0}}<M-1$ holds for any $a$ and any construction of $\hat{f}_{a,1}$ based on $f_{a,1}$.
Due to the linearity of norms, we derive  $\|\hat{f}_{a,c}\|_p<c(M-1)$ and $\|\hat{f}_{a,c}\|_{1,{p_0}}<c(M-1)$.

%It is clear that $\mathrm{supp}(F_{a,c})\subset \mathrm{supp}(w_{a,c})$, which means $\mathrm{supp}(F_{a,c})$ is compact, too.
It is easy to check that $\tilde{\mathcal{L}}_{\mathrm{HJB}}$ is a continuous mapping from $W^{3,4{p_0}}(\Omega)$ to $W^{1,{p_0}}(\Omega)$, from $W^{2,4p_1}(\Omega)$ to $L^{p_2}(\Omega)$ and from $W^{2,4p}(\Omega)$ to $L^p(\Omega)$ for any compact set $\Omega\subset\mathbb{R}^n\times [0,T]$. Therefore, %$F_{a,c}$ and $\hat{f}_{a,c}$
%are close
$\|F_{a,c}-\hat{f}_{a,c}\|$ is small
in $W^{1,{p_0}}(\mathbb{R}^n\times [0,T]),\ L^{p_2}(\mathbb{R}^n\times [0,T]),\ $ and $L^p(\mathbb{R}^n\times [0,T])$.

Since $\||x|^{-0.7}\|_{L^{p_2}(B_1(0))}=+\infty$, by the construction of $\hat{f}_{a,c}$ we have $\|\hat{f}_{a,c}\|_{p_2}\to +\infty$ as $a \to 0$.

As a result of the continuity of $\tilde{\mathcal{L}}_{\mathrm{HJB}}$, we could guarantee $\|F_{a,c}\|_p<cM$, $\|F_{a,c}\|_{1,{p_0}}<cM$ and $\|F_{a,c}\|_{p_2}>\frac 1 2 \|\hat{f}_{a,c}\|_{p_2}$ by choosing $\epsilon_c$ sufficiently small previously. This gives (iv) and (v).\\

\textit{Step 3.} We give an estimation for $\|v_{a,c}-u^{*}\|_1$, i.e., $\ \|w_{a,c}\|_1$).

We mention at the beginning of this part that all $C_i$ appeared below are positive constants.

For any $\epsilon>0$,
set $c$ to $\frac 1 {2M} \min\{\epsilon,\delta_0\}$, where $\delta_0$ follows from an application of Lemma \ref{Lemma D.2.} for the case $p=p_0$.
Then for any $a>0$, $\|F_{a,c}\|_{p}<\epsilon$ and we obtain $\|w_{a,c}\|_{3,{p_0}}\leq C_0 \|F_{a,c}\|_{1,{p_0}}$.

In Lemma \ref{Lemma A.3}, we choose $j=2,\ k=3,\ \theta=\frac{n+\frac{13}{11}}{n+\frac 6 5},\  r={p_0},\ q=1,\ p=\frac{11}9 n$ and derive
$ \|\nabla^2 w_{a,c}\|_{p_1}\leq C_1\|\nabla^3 w_{a,c}\|_{p_0}^{\theta}\|w_{a,c}\|_1^{1-\theta}.$
Since $\|\nabla^3 w_{a,c}\|_{p_0}\leq \|w_{a,c}\|_{3,{p_0}}$, we get
\begin{align}
    \|w_{a,c}\|_1\geq\left(\frac{\|\nabla^2 w_{a,c}\|_{p_1}}{C_1 \|w_{a,c}\|_{3,{p_0}}^{\theta}}\right)^{\frac 1 {1-\theta}}
    \geq C_2 \left(\frac{\|\nabla^2 w_{a,c}\|_{p_1}}{ \|F_{a,c}\|_{1,{p_0}}^{\theta}}\right)^{\frac 1 {1-\theta}}
    \geq\frac{C_2}{(C_3M)^{\frac{1-\theta}{\theta}}}\|\nabla^2w_{a,c}\|_{p_1}^{\frac 1 {1-\theta}},
\end{align}
where the last inequality comes from property (iv) in Step 2.

By virtue of property (i) in Step 2, we have $\|\nabla^2w_{a,c}\|_{p_1}\geq C_4 \|w_{a,c}\|_{2,p_1}$, which is an application of Poincaré inequality. Together with Lemma \ref{Lemma D.1.},
\begin{align}
   \|w_{a,c}\|_1 \geq\frac{C_2}{(C_3M)^{\frac{1-\theta}{\theta}}}\|\nabla^2w_{a,c}\|_{p_1}^{\frac 1 {1-\theta}}
   \geq C_5 \|w_{a,c}\|_{2,p_1}^{\frac 1 {1-\theta}}
   \geq C_5 \left(C_6\sqrt{\|F_{a,c}\|_{p_2}}-C_7\right)^{\frac 1 {1-\theta}}.
\end{align}

Since R.H.S. above goes to $+\infty$ as $a\to 0$ due to property (v), for any $A>0$, there exists $a_0>0$ such that $\|w_{a_0,c}\|_1>A$.

Setting $u=v_{a_0,c}=u^*+w_{a_0,c}$ completes the proof.
\end{proof}
% \begin{lemma}
% fwndvkmepvm
% \end{lemma}
% \begin{definition}
% vdbdfbfdb
% \end{definition}

\section{Improved Theorem \ref{thm:stb0}}
\label{new-thm}
In this section, we give the stability result for Eq. (\ref{eqC}) (Theorem \ref{thm_stb_new}). Different from Theorem \ref{thm_stb}, the constraint $\bar{c}\leq 2$ is released here.

 The notations $u^*,\ \bar{c},\  \mathcal{L}_0,\tilde{\mathcal{L}}_{\mathrm{HJB}},\tilde{\mathcal{B}}_{\mathrm{HJB}}$ in the following discussion come from \ref{app:proof-upper-bound}.

The proof of Theorem \ref{thm_stb_new} is quite similar to that of Theorem \ref{thm_stb}.

We begin with some auxiliary results.

\begin{lemma}
\label{lemmaE}
Suppose $\Omega$ is a fixed bounded open set in $\mathbb{R}^n$.
Suppose $u_1$ % is the approximation for $u^{*}$.
%Assume
satisfies
$\tilde{\mathcal{L}}_{\mathrm{HJB}}u^{*}=\tilde{\mathcal{L}}_{\mathrm{HJB}}u_1$ and that $\mathrm{supp}(u_1-u^{*})\subset Q_T(\Omega)$.
Let $q\in [1,\infty)$.

If $q>\frac{(\bar{c}-1)n^2}{n+2\bar{c}} %{(\bar{c}-1)n^2}{(2-\bar{c})n+2}
$, then there exists $\delta_0>0$ such that when $\|\tilde{\mathcal{B}}_{\mathrm{HJB}}u_1-\tilde{\mathcal{B}}_{\mathrm{HJB}}u^{*}\|_{1,q}<\delta_0$, 
%and $\|p\|_G$ is sufficiently small, 
we have $\|u^{*}-u_1\|_{2,r}\leq C\|\tilde{\mathcal{B}}_{\mathrm{HJB}}u_1-\tilde{\mathcal{B}}_{\mathrm{HJB}}u^{*}\|_{1,q}$ for a constant $C$ independent of $u_1$, where
$r<\frac{(n+2)q}{n+q}$.
\end{lemma}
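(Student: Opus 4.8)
The plan is to imitate the proof of Lemma~\ref{Lemma C.3.} essentially line by line, with the one structural change that the \emph{second} part of Lemma~\ref{Lemma A.5} (heat solution in $W^{2,r'}$ controlled by $W^{1,q}$ initial data) is used in place of the first part; this both produces the stronger $W^{2,r}$ conclusion and, as explained below, lets one drop the hypothesis $\bar c\le 2$. Concretely, I would write $w:=u_1-u^*$ and $f:=\tilde{\mathcal B}_{\mathrm{HJB}}u_1-\tilde{\mathcal B}_{\mathrm{HJB}}u^*=w(\cdot,0)\in W^{1,q}(\mathbb R^n)$, note that $\mathrm{supp}(w)\subset Q_T(\Omega)$ forces $\mathrm{supp}(f)$ compact, and — using the hypothesis $\tilde{\mathcal L}_{\mathrm{HJB}}u_1=\tilde{\mathcal L}_{\mathrm{HJB}}u^*$ — record that $\mathcal L_0 w=-\sum_{i=1}^n A_i\big(|\partial_i(u^*+w)|^{c_i}-|\partial_i u^*|^{c_i}\big)$. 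Letting $w_1$ solve $\mathcal L_0 w_1=0$ with $w_1(\cdot,0)=f$ and setting $v:=w_1-w$, the function $v$ has zero initial data and source $\mathcal L_0 v=\sum_{i=1}^n A_i\big(|\partial_i(u^*+w)|^{c_i}-|\partial_i u^*|^{c_i}\big)$, supported in $Q_T(\Omega)$. All norms below are taken on the bounded set $Q_T(\Omega)$, so it suffices to bound $\|w\|_{2,r'}$ for one exponent $r'\ge r$.

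The choice of $r'$ is the only genuinely new ingredient. I would take $r'$ with $(1-\bar c^{-1})n\le r'<\frac{(n+2)q}{n+q}$ and also $r'\ge\max\{2,r\}$. The relevant nonemptiness is $(1-\bar c^{-1})n<\frac{(n+2)q}{n+q}$, and clearing denominators turns this into $(\bar c-1)n^2<(n+2\bar c)q$, i.e.\ exactly $q>\frac{(\bar c-1)n^2}{n+2\bar c}$; crucially $n+2\bar c>0$ for all $\bar c>1$, so unlike in Lemma~\ref{Lemma C.3.} — where the analogue forces $(2-\bar c)n+2>0$ — no upper bound on $\bar c$ is needed. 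The lower threshold is $(1-\bar c^{-1})n$ rather than $(\bar c-1)n$ precisely because we will estimate the nonlinearity in $L^{r'}$ (to obtain $W^{2,r'}$ control of $v$) instead of in $L^{nr'/(n+r')}$.

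For the linear piece, Lemma~\ref{Lemma A.5}(ii) gives $\|w_1\|_{W^{2,r'}(Q_T(\Omega))}\le C\|f\|_{W^{1,q}(\mathbb R^n)}$ since $r'<\frac{(n+2)q}{n+q}$. For the nonlinear piece, since $v$ vanishes at $t=0$ and $r'\ge 2$, Lemma~\ref{Lemma A.4} gives $\|v\|_{2,r'}\le C\|\mathcal L_0 v\|_{r'}$; then, expanding each $|\partial_i(u^*+w)|^{c_i}-|\partial_i u^*|^{c_i}$ via the pointwise inequality of Lemma~\ref{Lemma C.1.} exactly as in the proof of Lemma~\ref{Lemma C.2.} into $|\partial_i w|^{c_i}$ plus lower-order terms $f_{ij}(|\partial_i u^*|)|\partial_i w|^{t_{ij}}$ with $1\le t_{ij}<c_i$ and $\|f_{ij}(|\partial_i u^*|)\|_\infty<\infty$ on the compact set $\overline{Q_T(\Omega)}$, I would bound each factor $\|\partial_i w\|_{L^{c_i r'}}$, $\|\partial_i w\|_{L^{t_{ij}r'}}$ by $\|w\|_{W^{2,r'}}$ using the Sobolev embedding $W^{2,r'}\hookrightarrow W^{1,s}$ (Lemma~\ref{Lemma A.2}) together with the bounded-domain inclusion $L^p\subset L^s$ for $s\le p$; here $r'\ge(1-\bar c^{-1})n$ is exactly what keeps the exponents admissible. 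This gives $\|\mathcal L_0 v\|_{r'}\le\sum_i\big(K_i\|w\|_{2,r'}^{c_i}+\sum_j K_{ij}\|w\|_{2,r'}^{t_{ij}}\big)$ with every $c_i>1$.

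Adding the two estimates yields $\|w\|_{2,r'}\le\|w_1\|_{2,r'}+\|v\|_{2,r'}\le C\|f\|_{1,q}+\sum_i\big(K_i\|w\|_{2,r'}^{c_i}+\sum_j K_{ij}\|w\|_{2,r'}^{t_{ij}}\big)$, an inequality of the same shape as the one solved in Lemma~\ref{Lemma C.2.}: with $M=\|w\|_{2,r'}$, the map $M\mapsto K_0M-\sum_i(K_iM^{c_i}+\sum_{t_{ij}>1}K_{ij}M^{t_{ij}})$ is concave, vanishes at $0$ with positive slope, so its sublevel set is a small interval at $0$ together with an interval near the positive root, and the inverse-function-theorem argument of Lemma~\ref{Lemma A.6}/Lemma~\ref{Lemma C.2.} (using that $\tilde{\mathcal L}_{\mathrm{HJB}}$ is a local $C^1$-diffeomorphism into $L^{r'}$ near $u^*$) shows $w$ cannot sit in the far interval once $\|f\|_{1,q}$ is small. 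Hence $\|w\|_{2,r'}\le C\|f\|_{1,q}$, and restricting to the exponent $r\le r'$ on the bounded domain gives the claim. I expect the main obstacle to be the exponent bookkeeping in the nonlinear estimate — verifying that every exponent created by the embedding and Hölder steps is $\le r'$, which is exactly where the sharpened condition on $q$ (and the removal of $\bar c\le 2$) is forced — and, secondarily, transcribing with care the non-elementary ``solve the concave inequality via the inverse function theorem'' step of Lemma~\ref{Lemma C.2.} to the present $W^{2,r'}$-and-$W^{1,q}$ setting.
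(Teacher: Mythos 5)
Your proposal is correct and follows essentially the same route as the paper's own proof of this lemma: the same decomposition $w=u_1-u^*$, $w_1$ solving the free heat equation with initial data $f$, $v=w_1-w$; the same use of Lemma~\ref{Lemma A.5}(ii) for $\|w_1\|_{2,r'}\le C\|f\|_{1,q}$ and Lemma~\ref{Lemma A.4} for $\|v\|_{2,r'}\le C\|\mathcal L_0 v\|_{r'}$; the same choice of $r'\in[(1-\bar c^{-1})n,\,(n+2)q/(n+q))\cap[r,\infty)$ with the nonemptiness check reducing to $q>(\bar c-1)n^2/(n+2\bar c)$; the same Lemma~\ref{Lemma C.1.}/H\"older/Sobolev-embedding bookkeeping for the nonlinearity in $L^{r'}$ followed by the concave-inequality-plus-inverse-function-theorem closing argument of Lemma~\ref{Lemma C.2.}. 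You also correctly identify the two conceptual changes relative to Lemma~\ref{Lemma C.3.}: estimating the nonlinearity directly in $L^{r'}$ (rather than $L^{nr'/(n+r')}$) moves the admissibility threshold from $(\bar c-1)n$ down to $(1-\bar c^{-1})n$, and that threshold being strictly below $(n+2)q/(n+q)$ under the weaker hypothesis on $q$ is exactly what removes the need for $\bar c\le 2$.

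One point worth flagging, though it is equally unaddressed in the paper: you add the constraint $r'\ge 2$ so that Lemma~\ref{Lemma A.4} applies, but you do not check compatibility with $r'<(n+2)q/(n+q)$ — for $q\le 2$ the interval $[\max\{2,r\},(n+2)q/(n+q))$ is empty. This is a latent gap in the paper's proof as well (it silently uses Lemma~\ref{Lemma A.4} after choosing $r'\in[(1-\bar c^{-1})n,\,(n+2)q/(n+q))$ with no $r'\ge 2$ requirement), so it is not a defect of your reconstruction, but it is the one place where the argument would need an extra word (e.g.\ noting that for $\bar c>1$ and $n$ of the size considered, $(1-\bar c^{-1})n\ge 2$ anyway, or restricting to $q>2$).
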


\begin{proof}
The proof is almost the same as that for Lemma \ref{Lemma C.3.}.

Following its proof,
we define $w,\ f,\ w_1$ and $v$ similarly.
And the $W^{m,p}$ and $L^p$ norm in the rest of the proof will also be defined on the domain $Q_T(\Omega)$.

Since $q>\frac{(\bar{c}-1)n^2}{n+2\bar{c}}$, we have $[(1-{\bar{c}}^{-1})n,\frac{(n+2)q}{n+q})\neq \varnothing$. 

Thus we could choose $r'\in [(1-{\bar{c}}^{-1})n,\frac{(n+2)q}{n+q})\cap[r,\infty)$.

Since $Q_T(\Omega)$ is bounded, it suffices to bound $\|u_1-u^*\|_{r'}$.

From Lemma \ref{Lemma A.5}, we have $\|w_1\|_{2,r'}\leq C\|f\|_{1,q}$.
And from Lemma \ref{Lemma A.4} , we get $\|v\|_{2,r'}\leq C' \|\mathcal{L}_0 v\|_{r'}$.

Therefore we have $\|w\|_{2,r'}\leq \|w_1\|_{2,r'}+\|v\|_{2,r'}\leq C\|f\|_{1,q}+C'\|\mathcal{L}_0v\|_{r'}$.

Next, we give an estimation for $\|\mathcal{L}_0v\|_{r'}$.

%Similar to the proof in Lemma \ref{Lemma C.2.}, 
Following from the proof in Lemma \ref{Lemma C.2.}, 
 we obtain $\{k_i\}_{i=1}^n\subset\mathbb{N}$, $\{t_{ij}\}_{1\leq i\leq n,1\leq j\leq k_i}\subset \mathbb{R}$, and power functions $\{f_{ij}\}_{1\leq i\leq n,1\leq j\leq k_i}$.%, which follow from Lemma \ref{Lemma C.2.}, such that

With similar computations, we have
\begin{align}
\label{70new}
    \|\mathcal{L}_0 v\|_{r'}
    \leq & \sum_{i=1}^{n}A_i(\||\partial_iw|^{c_i}\|_{r'}+\sum_{j=1}^{k_i}\|f_{ij} (|\partial_i u^{*}|)|\partial_i w|^{t_{ij}}\|_{r'})\\
    \leq & 
    \sum_{i=1}^{n}A_i(\|\partial_i w\|^{c_i}_{c_ir'}+\sum_{j=1}^{k_i}\|f_{ij} (|\partial_i u^{*}|)\|_{\infty}\|\partial_i w\|^{t_{ij}}_{t_{ij}r'}).
\end{align}
%Because of $t_{ij}<c_i\leq\bar{c}$, we have $c_ir'\leq \bar{c}r'\leq r',\ t_{ij}r'\leq r'$ for all $i,j$.

Due to the fact that $Q_T(\Omega)$ is bounded, all $\|\partial_i w\|_{c_ir'}$ and $\|\partial_i w\|_{t_{ij}r'}$ could be bounded by $C_{i,j}\|w\|_{1,\bar{c}r'}$, where $C_{i,j}$ are constants.

Moreover, since $r'\geq(1-{\bar{c}}^{-1})n$, we could tell from Lemma \ref{Lemma A.2} that
\begin{align}
   \|w\|_{1,\bar{c}r'}\leq \hat{C} \|w\|_{2,\frac{n\bar{c}r'}{n+\bar{c}r'}}\leq \hat{C}'\|w\|_{2,r'} ,
\end{align}
where $\hat{C}$ and $\hat{C}'$ are constants.

With similar methods applied in Lemma \ref{Lemma C.2.}, we could prove this lemma.
\end{proof}

\begin{lemma}
\label{Theorem C.4._new}

Fix $\Omega$, which is an arbitrary bounded open set in $\mathbb{R}^n$. 
For two functions $\hat{f}_1(x,t),\ \hat{f}_2(x)$, denote by $u_1$ the solution to
$$
\begin{cases}
\tilde{\mathcal{L}}_{\mathrm{HJB}}u(x,t)=h(x,t)+\hat{f}_1(x,t),\ in\ \mathbb{R}^n\times[0,T]\\
\tilde{\mathcal{B}}_{\mathrm{HJB}}u(x,t)=g(x)+\hat{f}_2(x),\ in\ \mathbb{R}^n.
\end{cases}$$
%where $f$ and $p$ are small perturbations.% with small $L_p$ norm for some p.

For $p,q\geq 1$, let $r_0=\frac{(n+2)q}{n+q}$. Assume the following inequalities hold for $p,q$ and $r_0$:
\begin{equation}
\label{eq:main-thm-cond3}
    p\geq \max\left\{2, \left(1-\frac{1}{\bar{c}}\right)n\right\};~
    q> \frac{(\bar{c}-1)n^2}{n+2\bar{c}}.
\end{equation}
Further assume $\mathrm{supp}(u_1-u^*)\subset Q_T(\Omega)$.

Then for $\forall r\in[1,\min\{r_0,p\})$, there exists $\delta_0>0$ such that, when $\|\hat{f}_1\|_p<\delta_0$ and $\|\hat{f}_2\|_q<\delta_0$, $\|u_1-u^*\|_{2,r}\leq C(\|\hat{f}_1\|_p+\|\hat{f}_2\|_{1,q})$ for a constant $C$ independent of $u_1$.
\end{lemma}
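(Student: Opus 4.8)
The plan is to follow the proof of Lemma~\ref{Theorem C.4.} almost verbatim, with two changes dictated by the sharper boundary estimate available: the role of Lemma~\ref{Lemma C.3.} is now played by Lemma~\ref{lemmaE}, and the Sobolev embedding step that lost one derivative is replaced by the trivial Lebesgue inclusion $L^p(Q_T(\Omega))\hookrightarrow L^r(Q_T(\Omega))$, which holds on the bounded domain $Q_T(\Omega)$ whenever $r\le p$. This is what lets the conclusion be stated in $W^{2,r}$ rather than $W^{1,r}$, and it is also what forces the restriction $r<\min\{r_0,p\}$ in place of the condition $\tfrac1{r_0}\ge\tfrac1p-\tfrac1n$ from Lemma~\ref{Theorem C.4.}.

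Concretely, I would introduce the intermediate function $u_2$ carrying the forcing perturbation but not the boundary perturbation, i.e.\ the solution of $\tilde{\mathcal{L}}_{\mathrm{HJB}}u_2=h+\hat f_1$ with $\tilde{\mathcal{B}}_{\mathrm{HJB}}u_2=g$, arranged (exactly as in Lemma~\ref{Theorem C.4.}) so that $\mathrm{supp}(u_2-u^{*})\subset Q_T(\Omega)$ and $\mathrm{supp}(u_1-u_2)\subset Q_T(\Omega)$. Since $\tilde{\mathcal{B}}_{\mathrm{HJB}}u_2=\tilde{\mathcal{B}}_{\mathrm{HJB}}u^{*}$, $\mathrm{supp}(u_2-u^{*})\subset Q_T(\Omega)$ and $p\ge\max\{2,(1-\bar{c}^{-1})n\}$, Lemma~\ref{Lemma C.2.} yields $\delta_1,C_1>0$ with
\[
\|\hat f_1\|_p<\delta_1\;\Longrightarrow\;\|u^{*}-u_2\|_{2,p}\le C_1\|\tilde{\mathcal{L}}_{\mathrm{HJB}}u_2-h\|_p=C_1\|\hat f_1\|_p .
\]
Likewise, with $u_2$ playing the role of the exact solution and $u_1$ the perturbed one, we have $\tilde{\mathcal{L}}_{\mathrm{HJB}}u_1=\tilde{\mathcal{L}}_{\mathrm{HJB}}u_2$, $\mathrm{supp}(u_1-u_2)\subset Q_T(\Omega)$ and $q>\frac{(\bar{c}-1)n^2}{n+2\bar{c}}$, so Lemma~\ref{lemmaE} yields $\delta_2,C_2>0$ such that for every $r<r_0=\frac{(n+2)q}{n+q}$,
\[
\|\hat f_2\|_{1,q}<\delta_2\;\Longrightarrow\;\|u_2-u_1\|_{2,r}\le C_2\|\tilde{\mathcal{B}}_{\mathrm{HJB}}u_1-\tilde{\mathcal{B}}_{\mathrm{HJB}}u_2\|_{1,q}=C_2\|\hat f_2\|_{1,q}.
\]

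To finish, fix $r\in[1,\min\{r_0,p\})$ and set $\delta_0=\min\{\delta_1,\delta_2\}$. Because $Q_T(\Omega)$ has finite measure and $r<p$, applying $L^p(Q_T(\Omega))\hookrightarrow L^r(Q_T(\Omega))$ to $u^{*}-u_2$ together with all its spatial derivatives up to order two gives $\|u^{*}-u_2\|_{2,r}\le C_3\|u^{*}-u_2\|_{2,p}$; the triangle inequality then yields
\[
\|u_1-u^{*}\|_{2,r}\le\|u^{*}-u_2\|_{2,r}+\|u_2-u_1\|_{2,r}\le C_3C_1\|\hat f_1\|_p+C_2\|\hat f_2\|_{1,q}\le C(\|\hat f_1\|_p+\|\hat f_2\|_{1,q}),
\]
which is the claim.

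The substantive work is already contained in Lemmas~\ref{Lemma C.2.} and~\ref{lemmaE} (the parabolic $L^p$ estimate of Lemma~\ref{Lemma A.4}, the heat-kernel convolution bounds of Lemma~\ref{Lemma A.5}, and the Gagliardo-Nirenberg and embedding chains of Lemmas~\ref{Lemma A.2}--\ref{Lemma A.3}), so within the present argument the only delicate point is the construction of $u_2$ keeping both difference supports inside $Q_T(\Omega)$ so that the two auxiliary lemmas apply verbatim — the same bookkeeping already carried out in the proof of Lemma~\ref{Theorem C.4.}. It is worth verifying explicitly that the hypothesis $\bar{c}\le2$, which was needed in Theorem~\ref{thm_stb}, is nowhere used: Lemma~\ref{Lemma C.2.} only requires $p\ge(1-\bar{c}^{-1})n$, and the threshold $\frac{(\bar{c}-1)n^2}{n+2\bar{c}}$ appearing in Lemma~\ref{lemmaE} is finite for every $\bar{c}>1$, so no upper bound on $\bar{c}$ enters — which is precisely the improvement over Theorem~\ref{thm_stb}.
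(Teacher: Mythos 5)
Your proof is correct and matches the paper's own (one-line) proof, which simply says to repeat the argument of Lemma~\ref{Theorem C.4.} with Lemma~\ref{lemmaE} in place of Lemma~\ref{Lemma C.3.}. You have correctly identified the two accompanying adjustments---the boundary estimate now landing directly in $W^{2,r}$, and the $W^{2,p}\hookrightarrow W^{1,r}$ Sobolev embedding being replaced by the trivial inclusion $L^p(Q_T(\Omega))\hookrightarrow L^r(Q_T(\Omega))$ valid on the bounded domain for $r<p$---which is exactly what produces the $W^{2,r}$ conclusion over the range $r\in[1,\min\{r_0,p\})$ and drops the constraint $\bar{c}\le 2$.
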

\begin{proof}
The proof follows as in Lemma \ref{Theorem C.4.} by replacing the use of Lemma \ref{Lemma C.3.} with Lemma \ref{lemmaE}.
\end{proof}

\begin{theorem}
\label{thm_stb_new}
Let $\hat{f}_1,\hat{f}_2$ and $u_1$ follow from Lemma \ref{Theorem C.4._new}. 
For $p,q,r\geq 1$, let $r_0=\frac{(n+2)q}{n+q}$. Assume the following inequalities hold for $p,q,r$ and $r_0$:
\begin{equation}
\label{eq:main-thm-cond1}
    p\geq \max\left\{2, \left(1-\frac{1}{\bar{c}}\right)n\right\};~
    q> \frac{(\bar{c}-1)n^2}{n+2\bar{c}};~
    \frac 1 r >\frac 1 {\min\{r_0,p\}}-\frac 1 n.
\end{equation}

Then for any bounded open set $Q\subset \mathbb{R}^n\times[0,T]$, 
there exists $\delta>0$ and a constant $C$ independent of $u_1,\ \hat{f}_1$ and $\hat{f}_2$, such that $\max\{\|\hat{f}_1\|_{L^p(\mathbb{R}^n\times[0,T])},\|\hat{f}_2\|_{W^{1,q}(\mathbb{R}^n)}\}<\delta$ implies $\|u_1-u^*\|_{W^{1,r}(Q)}\leq C(\|\hat{f}_1\|_{L^p(\mathbb{R}^n\times[0,T])}+\|\hat{f}_2\|_{W^{1,q}(\mathbb{R}^n)})$.

\end{theorem}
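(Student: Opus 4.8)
The plan is to mirror the proof of Theorem~\ref{thm_stb} almost verbatim, with the role of the auxiliary boundary estimate now played by the new Lemma~\ref{lemmaE} (which carries no $\bar{c}\le 2$ restriction) instead of Lemma~\ref{Lemma C.3.}. The three ingredients are: a localization/extension step reducing the bound on an arbitrary bounded $Q$ to a whole-space bound for a perturbation of compact support; the combined estimate Lemma~\ref{Theorem C.4._new}; and a Sobolev embedding to pass from the $W^{2,\cdot}$ control one naturally obtains to the claimed $W^{1,r}$ control.

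First I would carry out the localization exactly as in the proof of Theorem~\ref{thm_stb}: choose $R>0$ with $Q\subset Q(R)$, restrict $u_1$ to $Q(R)$, and extend it to a function $v$ on $\mathbb{R}^n\times\mathbb{R}_{\ge 0}$ that coincides with $u^*$ outside $Q(2R)$ and whose induced residuals $\tilde f_1:=\tilde{\mathcal{L}}_{\mathrm{HJB}}v-h$ and $\tilde f_2:=\tilde{\mathcal{B}}_{\mathrm{HJB}}v-g$ satisfy $\|\tilde f_1\|_p\le C'\|\hat f_1\|_{L^p(Q)}$ and $\|\tilde f_2\|_{1,q}\le C'\|\hat f_2\|_{W^{1,q}(B_R(0))}$ for a constant $C'=C'(n,R,p,q,Q)$; such an extension exists since the residuals are supported in $\overline{Q(2R)}$ and $\overline{B_{2R}(0)}$. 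Then $\mathrm{supp}(v-u^*)\subset Q(2R)$ is compact, so Lemma~\ref{Theorem C.4._new} gives, whenever $\|\tilde f_1\|_p,\|\tilde f_2\|_{1,q}<\delta_0$, the bound $\|v-u^*\|_{2,r}\le C(\|\tilde f_1\|_p+\|\tilde f_2\|_{1,q})$; since $W^{2,r}\hookrightarrow W^{1,r}$, restricting to $Q$ (where $v=u_1$) and chaining the two inequalities closes the reduction once $\delta$ is small enough that $C'\delta<\delta_0$.

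It remains to run Lemma~\ref{Theorem C.4._new}, which is proved as Lemma~\ref{Theorem C.4.}: introduce the intermediate solution $u_2$ with perturbed force $h+\hat f_1$ but original boundary data $g$; bound $\|u^*-u_2\|_{2,p}\le C\|\hat f_1\|_p$ via Lemma~\ref{Lemma C.2.} (whose hypothesis $p\ge\max\{2,(1-\bar{c}^{-1})n\}$ is assumed); bound $\|u_2-u_1\|_{2,r}\le C\|\hat f_2\|_{1,q}$ via Lemma~\ref{lemmaE} (using $q>\tfrac{(\bar{c}-1)n^2}{n+2\bar{c}}$ and $r<r_0$); and combine with the embedding $W^{2,p}\hookrightarrow W^{1,r}$ on the compact domain, which is exactly where $\tfrac1r>\tfrac1{\min\{r_0,p\}}-\tfrac1n$ is consumed.

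The heart of the matter, and the step I expect to be the main obstacle, is Lemma~\ref{lemmaE}. One writes $w=u_1-u^*=w_1+v$, with $w_1$ solving the homogeneous heat equation with boundary datum $f=\tilde{\mathcal{B}}_{\mathrm{HJB}}u_1-\tilde{\mathcal{B}}_{\mathrm{HJB}}u^*$, and $v=w-w_1$ solving $\mathcal{L}_0 v=\sum_i A_i(|\partial_i(u^*+w)|^{c_i}-|\partial_i u^*|^{c_i})$ with zero parabolic boundary data. The reason the $W^{1,q}$ norm of the boundary residual is needed (not just $L^q$) is that we control $w_1$ via part~(ii) of Lemma~\ref{Lemma A.5}, which yields $\|w_1\|_{2,r'}\le C\|f\|_{1,q}$, i.e. second-order control matching the $W^{2,r'}$ regularity we get for $v$ from Lemma~\ref{Lemma A.4}. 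Expanding the nonlinearity through Lemma~\ref{Lemma C.1.} and Hölder's inequality produces terms $\|\partial_i w\|_{c_i r'}^{c_i}$ plus lower-order powers, each factor dominated by $\|w\|_{1,\bar{c}r'}$; the new Sobolev step $\|w\|_{1,\bar{c}r'}\le \hat C\|w\|_{2,\,n\bar{c}r'/(n+\bar{c}r')}\le \hat C'\|w\|_{2,r'}$ needs precisely $r'\ge(1-\bar{c}^{-1})n$, and the lower bound $q>\tfrac{(\bar{c}-1)n^2}{n+2\bar{c}}$ is exactly what makes $[(1-\bar{c}^{-1})n,\,r_0)$ nonempty so such an $r'$ exists. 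One then closes the estimate by the same implicit-function/injectivity device as in Lemma~\ref{Lemma C.2.}: the resulting inequality $K_0 M-\sum(\text{superlinear terms in }M)\le\|f\|$ for $M=\|w\|_{2,r'}$ forces $M=O(\|f\|)$ once $\|f\|$ is small, since the spurious far branch of the graph cannot be reached by a compactly supported perturbation, by the local-diffeomorphism property of $\tilde{\mathcal{L}}_{\mathrm{HJB}}$ from Lemma~\ref{Lemma A.6}. The only genuine bookkeeping cost throughout is tracking exponents so that every intermediate $L^s$ norm stays $\le r'$ (letting boundedness of $Q_T(\Omega)$ absorb it) while $r'$ stays strictly below $(n+2)q/(n+q)$.
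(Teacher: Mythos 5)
Your proposal mirrors the paper's own proof: localize on $Q(R)$ and extend to a compactly-supported perturbation with controlled residuals (now in $L^p$ and $W^{1,q}$), apply Lemma~\ref{Theorem C.4._new} (obtained from Lemma~\ref{Theorem C.4.} by swapping Lemma~\ref{lemmaE} for Lemma~\ref{Lemma C.3.}), and close with a Sobolev embedding — and your sketch of Lemma~\ref{lemmaE}, including the use of part~(ii) of Lemma~\ref{Lemma A.5}, the interval $[(1-\bar{c}^{-1})n,\,r_0)$ being nonempty under $q>\tfrac{(\bar{c}-1)n^2}{n+2\bar{c}}$, and the injectivity closure, is exactly the paper's. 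The one bookkeeping point to tighten is the final embedding step: Lemma~\ref{Theorem C.4._new} yields a $W^{2,r'}$ bound for $r'\in[1,\min\{r_0,p\})$, and the relevant Sobolev embedding is $W^{2,r'}(Q)\hookrightarrow W^{1,r}(Q)$ with $r'$ taken arbitrarily close to $\min\{r_0,p\}$ — not merely $W^{2,p}\hookrightarrow W^{1,r}$ — which is precisely why the hypothesis reads $\tfrac1r>\tfrac1{\min\{r_0,p\}}-\tfrac1n$ and in particular remains binding via $r_0$ when $r_0<p$.
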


\begin{proof}
By replacing the use of Lemma \ref{Theorem C.4.} in the proof for Theorem \ref{thm_stb} with Lemma \ref{Theorem C.4._new}, we can bound $\|u_1-u^*\|_{W^{2,r'}(Q)}$ with %\leq 
$\|\hat{f}_1\|_{L^p(\mathbb{R}^n\times[0,T])}$ and $\|\hat{f}_2\|_{W^{1,q}(\mathbb{R}^n)}$ for any $r'\in [1,\min\{r_0,p\})$.

We could further bound $\|u_1-u^*\|_{W^{1,r}(Q)}$ with the help of Lemma \ref{Lemma A.2}. 

This concludes the proof.
\end{proof}

\section{Experimental Settings}
\label{app:exp-settings}
\paragraph{Hyperparameters.} The hyperparameters used in our experiment is described in Table \ref{tab:settings}.

\begin{table}[h]
    \caption{\textbf{Derailed experimental settings} of Section \ref{sec:exp}.}
    \label{tab:settings}
        \begin{center}
            \begin{tabular}{lcc}
                \toprule
                 & $n=100$ & $n=250$ \\
                \midrule
                \textit{Model Configuration} \\
                \midrule
                \textbf{Layers} & \multicolumn{2}{c}{4} \\
                \textbf{Hidden dimension} & \multicolumn{2}{c}{4096} \\
                \textbf{Activation} & \multicolumn{2}{c}{$\mathrm{tanh}$} \\
                \midrule
                \textit{Hyperparameters} \\
                \midrule
                \textbf{Toal iterations} & 5000 & 10000 \\
                \textbf{Domain Batch Size} $N_1$ & 100 & 50 \\
                \textbf{Boundary Batch Size} $N_2$ & 100 & 50 \\
                \textbf{Inner Loop Iterations $K$} & \multicolumn{2}{c}{20} \\
                \textbf{Inner Loop Step Size $\eta$} & \multicolumn{2}{c}{0.05} \\
                \textbf{Learning Rate} & \multicolumn{2}{c}{$7\mathrm{e}-4$} \\
                \textbf{Learning Rate Decay} & \multicolumn{2}{c}{Linear}\\
                \textbf{Adam $\varepsilon$} & \multicolumn{2}{c}{$1\mathrm{e}-8$} \\
                \textbf{Adam($\beta_1$, $\beta_2$)} & \multicolumn{2}{c}{(0.9, 0.999)} \\
                \bottomrule
            \end{tabular}
    \end{center}
\end{table}

\paragraph{Training data.} In all the experiments, the training data is sampled \textit{online}. Specifically, in each iteration, we sample $N_1$ i.i.d. data points, $(x^{(1)},t^{(1)}), \cdots, (x^{(N_1)},t^{(N_1)}) $, from the domain $\sR^n\times [0,T]$, and $N_2$ i.i.d. data points, $(\tilde x^{(1)},T), \cdots, (\tilde x^{(N_2)},T)$, from the boundary $\sR^n\times \{T\}$, where $(x^{(i)},t^{(i)}) \sim \mathcal{N}(\vzero,\mI_{n})\times \mathcal{U}(0,1)$ and $\tilde x^{(j)} \sim \mathcal{N}(\vzero,\mI_{n})$.

\paragraph{Evaluation metrics.} We use $L^1$, $L^2$, and $W^{1,1}$ relative errors to evaluate the quality of the learned solution. 

$L^1$ and $L^2$ relative errors are two popular evaluation metrics, which are defined as 
\begin{align}
 \frac{\sum_{j=1}^S|u^*(x_j)-u_\theta(x_j)|^p}{\sum_{j=1}^S |u^*(x_j)|^p},\ \ p=1,2,
\end{align}
where $u_\theta$ is the learned approximate solution, $u^*$ is the exact solution and $\{x_j\}_{j=1}^S$ are $S$ i.i.d. uniform samples from the domain $[0,1]^n\times[0,T]$.

Since the gradient of the solution to HJB equations plays an important role in applications, we also evaluate the solution using $W^{1,1}$ relative error, which is defined as
\begin{align}
\frac{\sum_{j=1}^S (|u^*(x_j)-u_\theta(x_j)|+\sum_{i=1}^n|\partial_{x_i}u^*(x_j)- \partial_{x_i}u_\theta(x_j)|)}{\sum_{j=1}^S(|u^*(x_j)|+\sum_{i=1}^n|\partial_{x_i}u^*(x_j) |)}.
\end{align}

\section{More experiments and visualizations}
\label{app:more-exp}

\subsection{More instance of HJB Equations}

\begin{figure*}[ht]
\begin{minipage}{0.3\linewidth}
    \centering
    \includegraphics[width=1\linewidth]{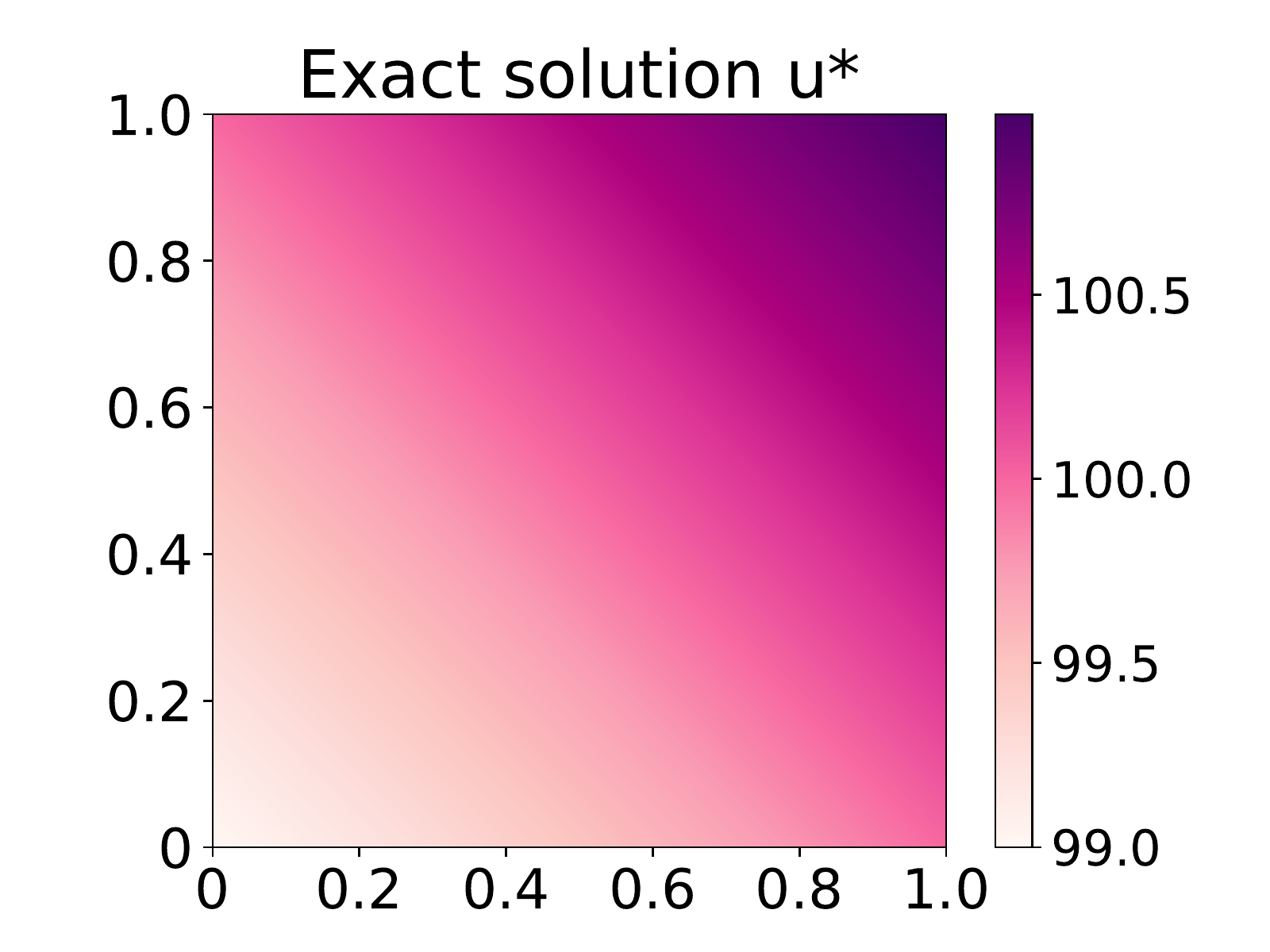}
    % \caption{}
\end{minipage}\hfill
\begin{minipage}{0.3\linewidth}
    \centering
    \includegraphics[width=1\linewidth]{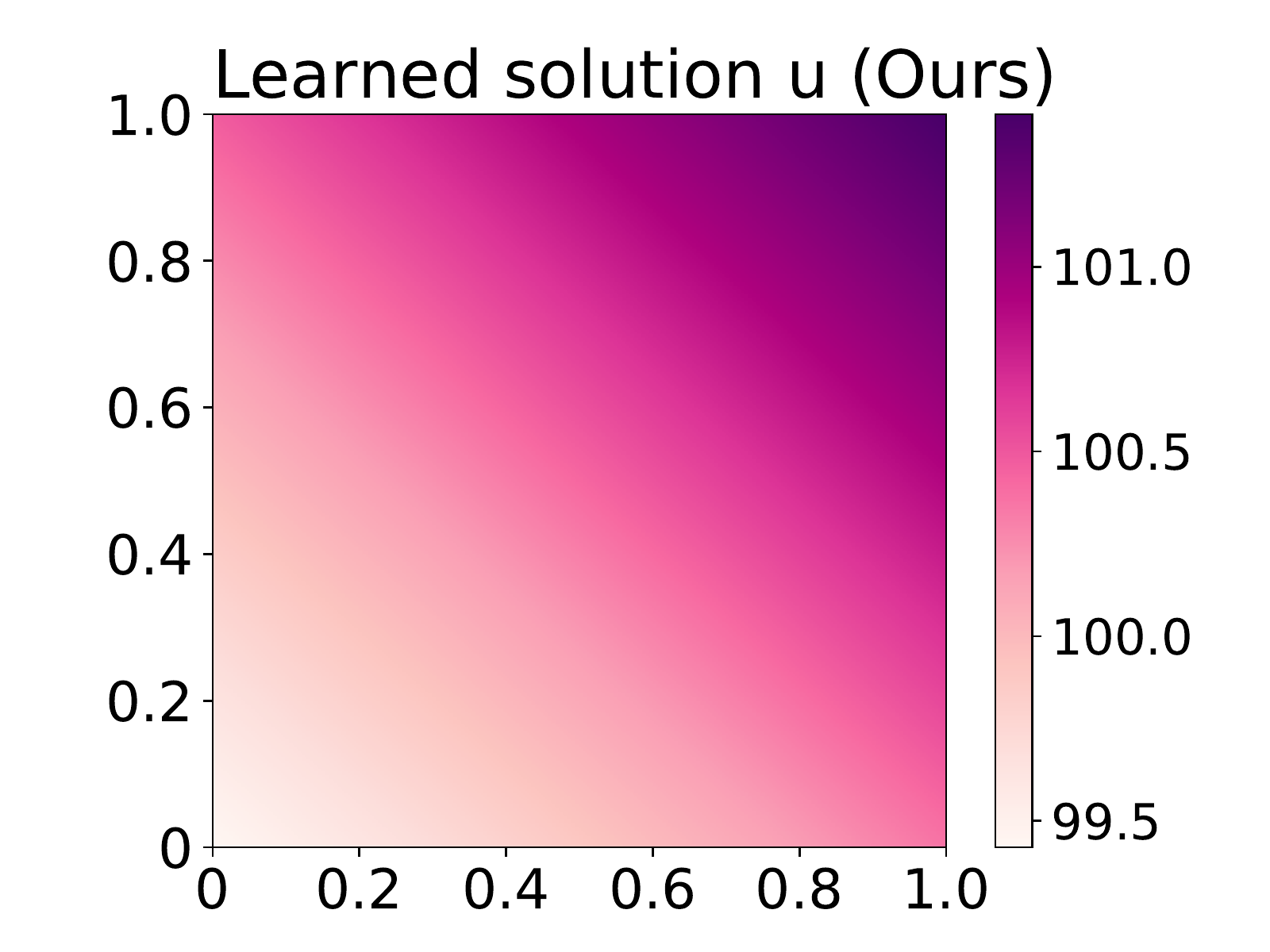}
    % \caption{}
\end{minipage}\hfill
\begin{minipage}{0.3\linewidth}
    \centering
    \includegraphics[width=1\linewidth]{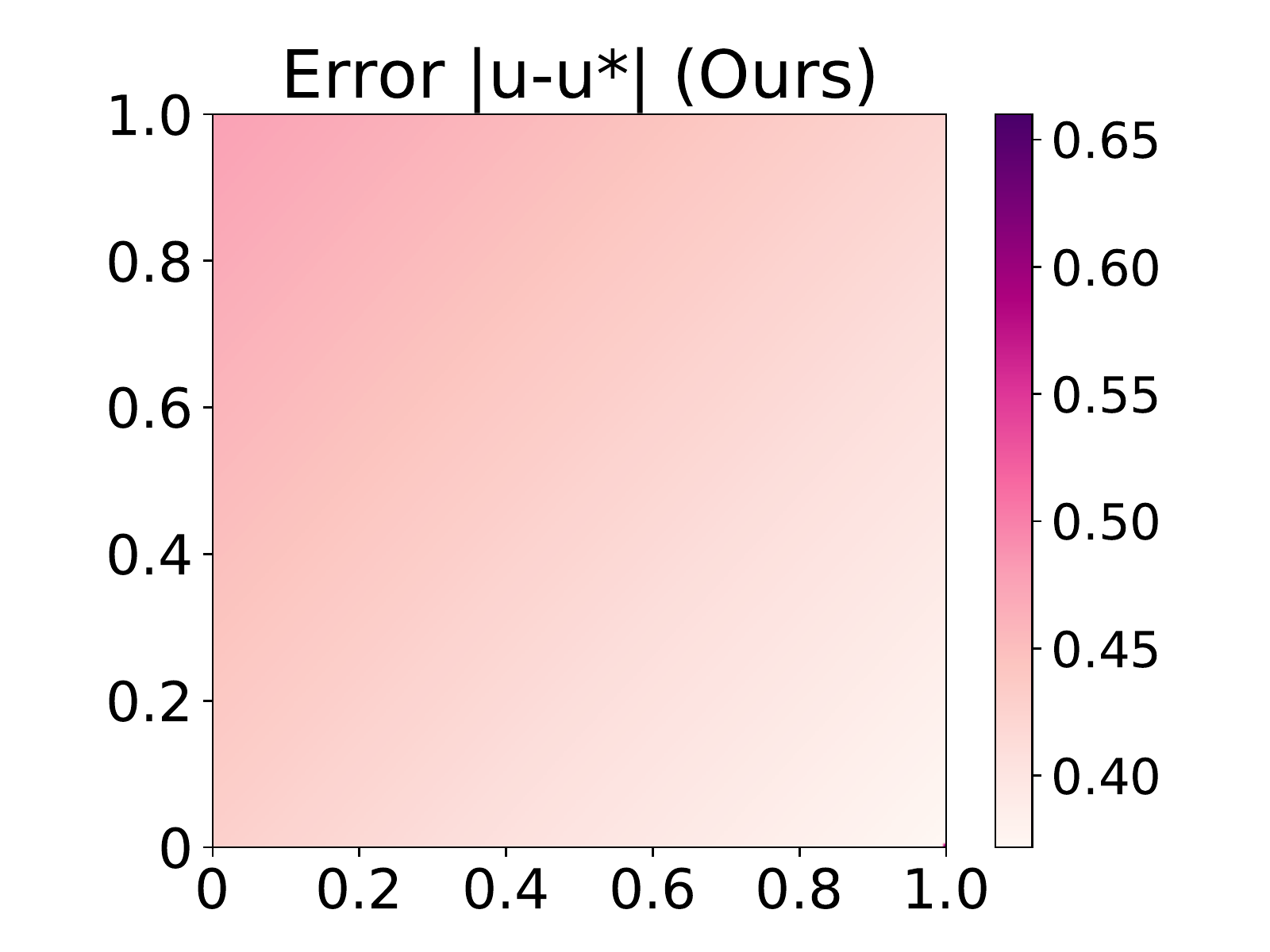}
    % \caption{}
\end{minipage}
\caption{\textbf{Visualization for the solution snapshot of Eq. (\ref{eq:hjb-exp2})}. $c$ is set to 1.25.}
\label{fig:1.25hjb}
\end{figure*}

\begin{figure*}[ht]
\begin{minipage}{0.3\linewidth}
    \centering
    \includegraphics[width=1\linewidth]{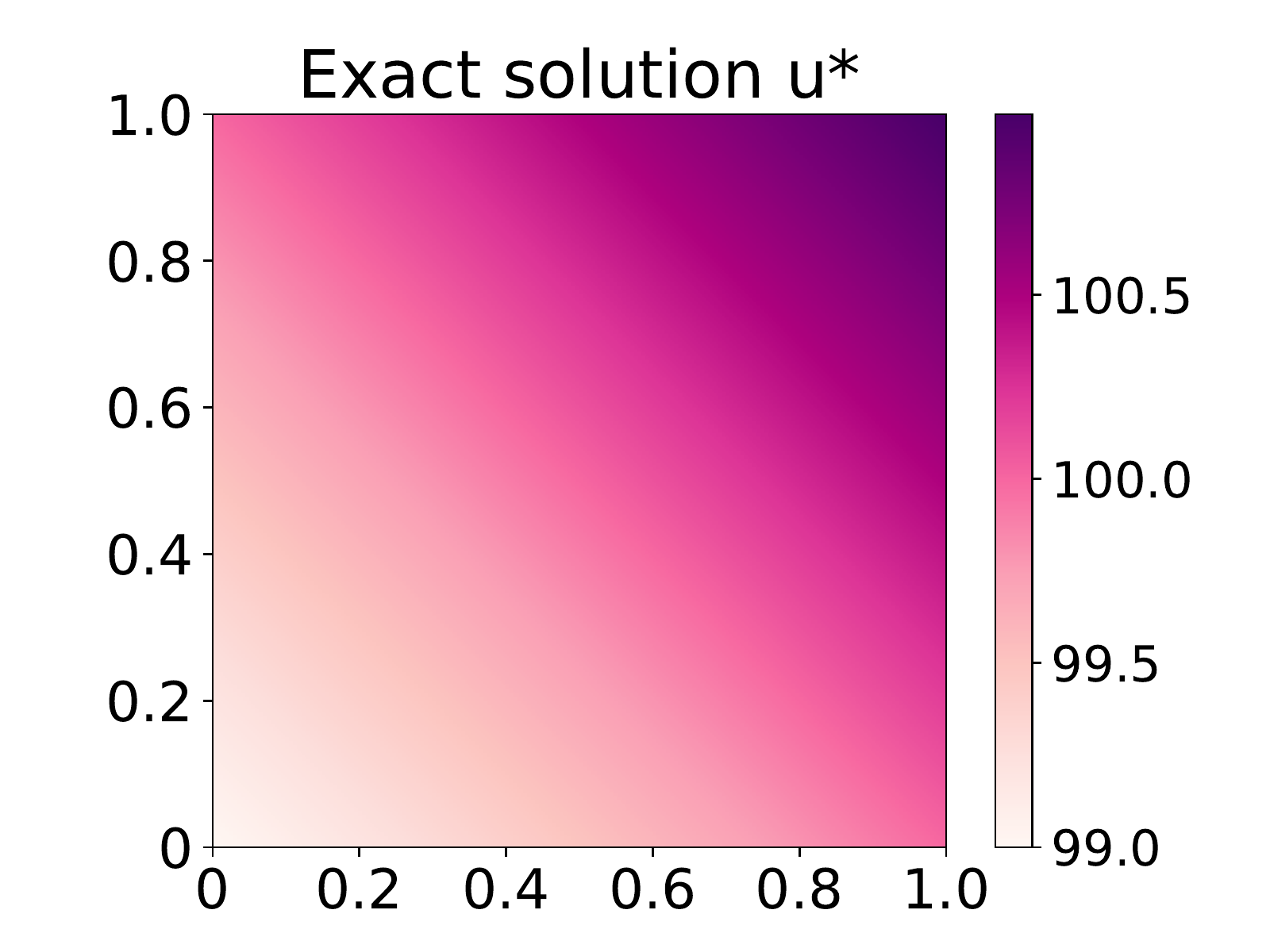}
    % \caption{}
\end{minipage}\hfill
\begin{minipage}{0.3\linewidth}
    \centering
    \includegraphics[width=1\linewidth]{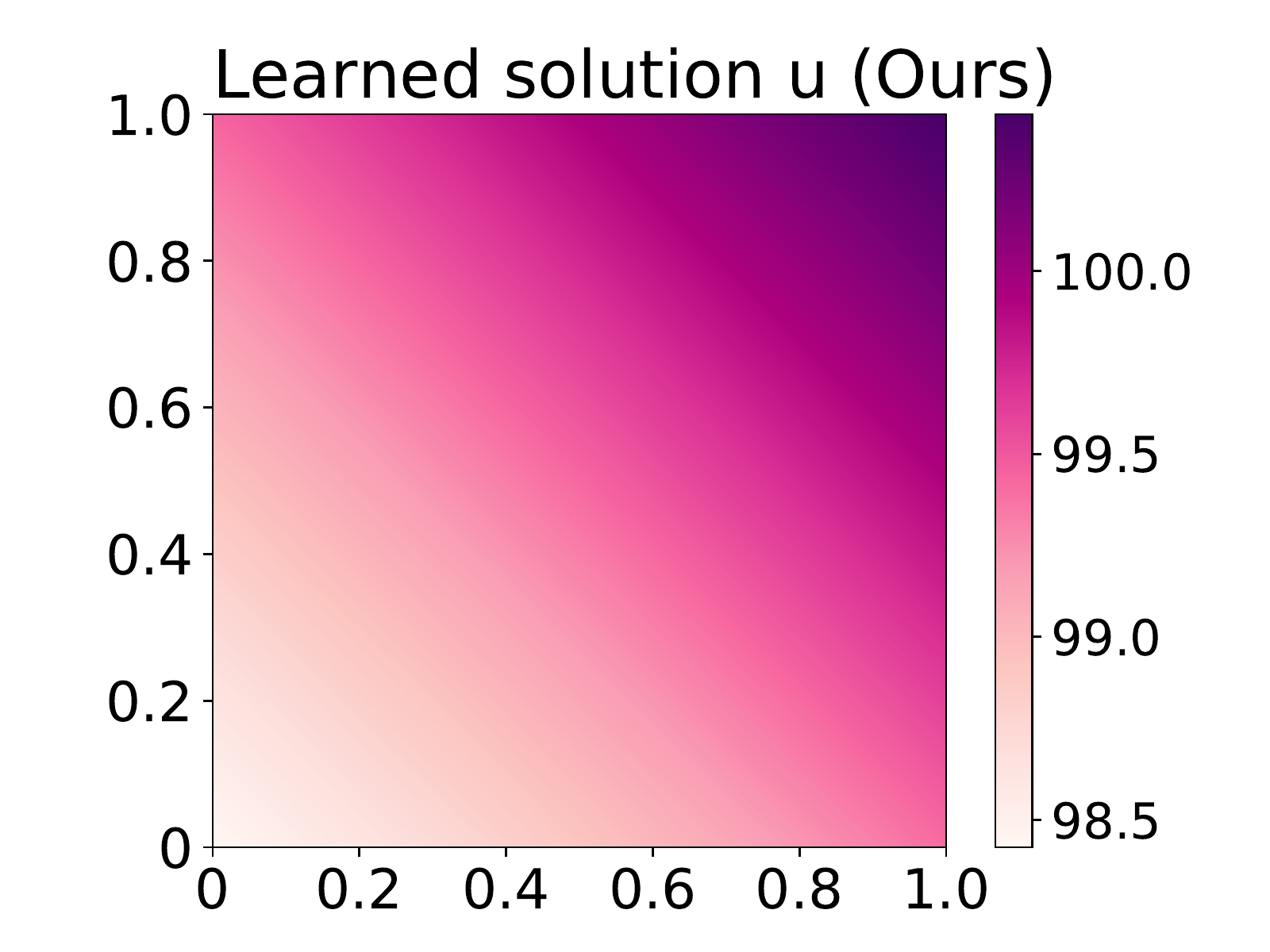}
    % \caption{}
\end{minipage}\hfill
\begin{minipage}{0.3\linewidth}
    \centering
    \includegraphics[width=1\linewidth]{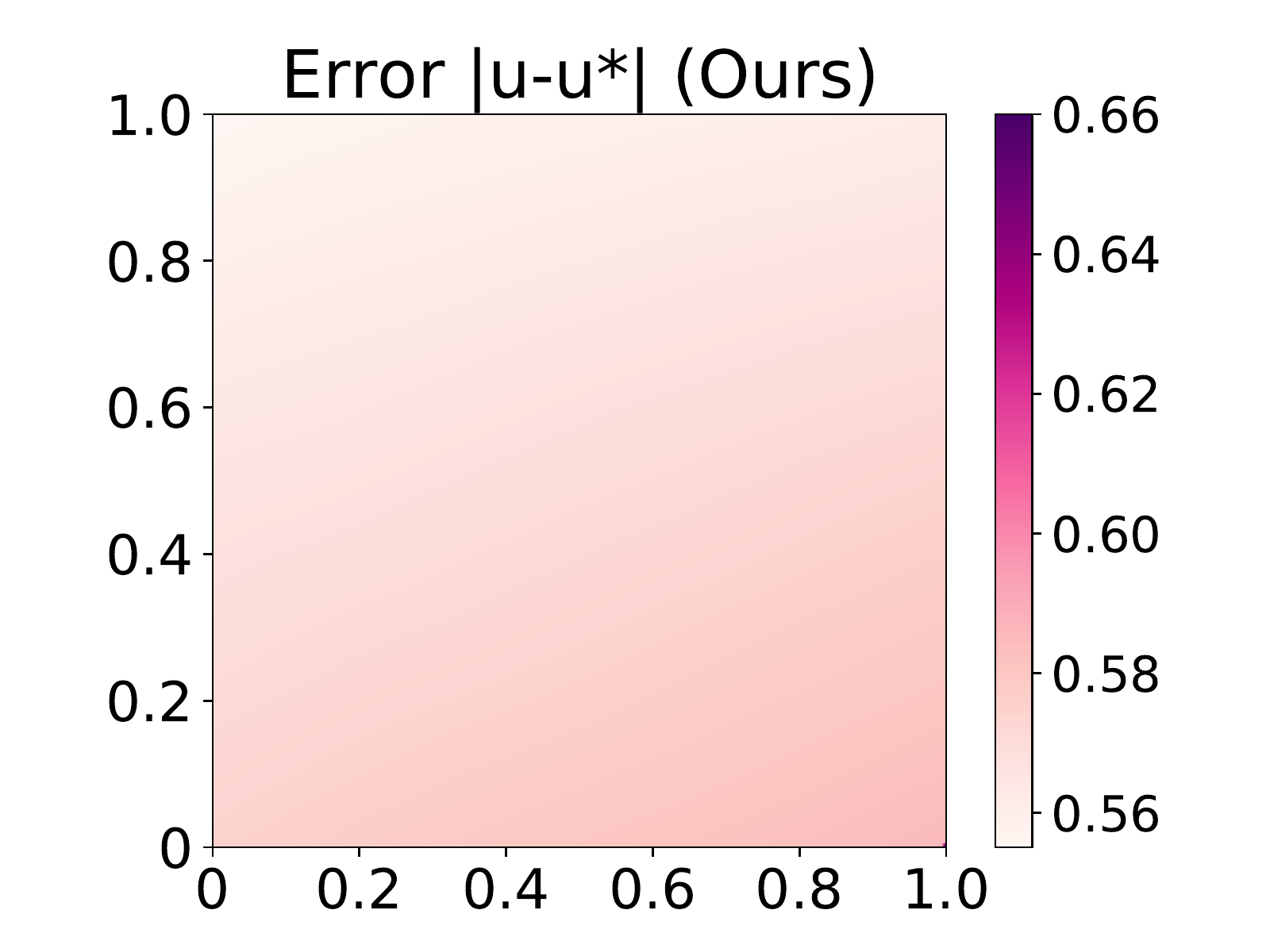}
    % \caption{}
\end{minipage}
\caption{\textbf{Visualization for the solution snapshot of Eq. (\ref{eq:hjb-exp2})}. $c$ is set to 1.5.}
\label{fig:1.5hjb}
\end{figure*}

\begin{figure*}[ht]
\begin{minipage}{0.3\linewidth}
    \centering
    \includegraphics[width=1\linewidth]{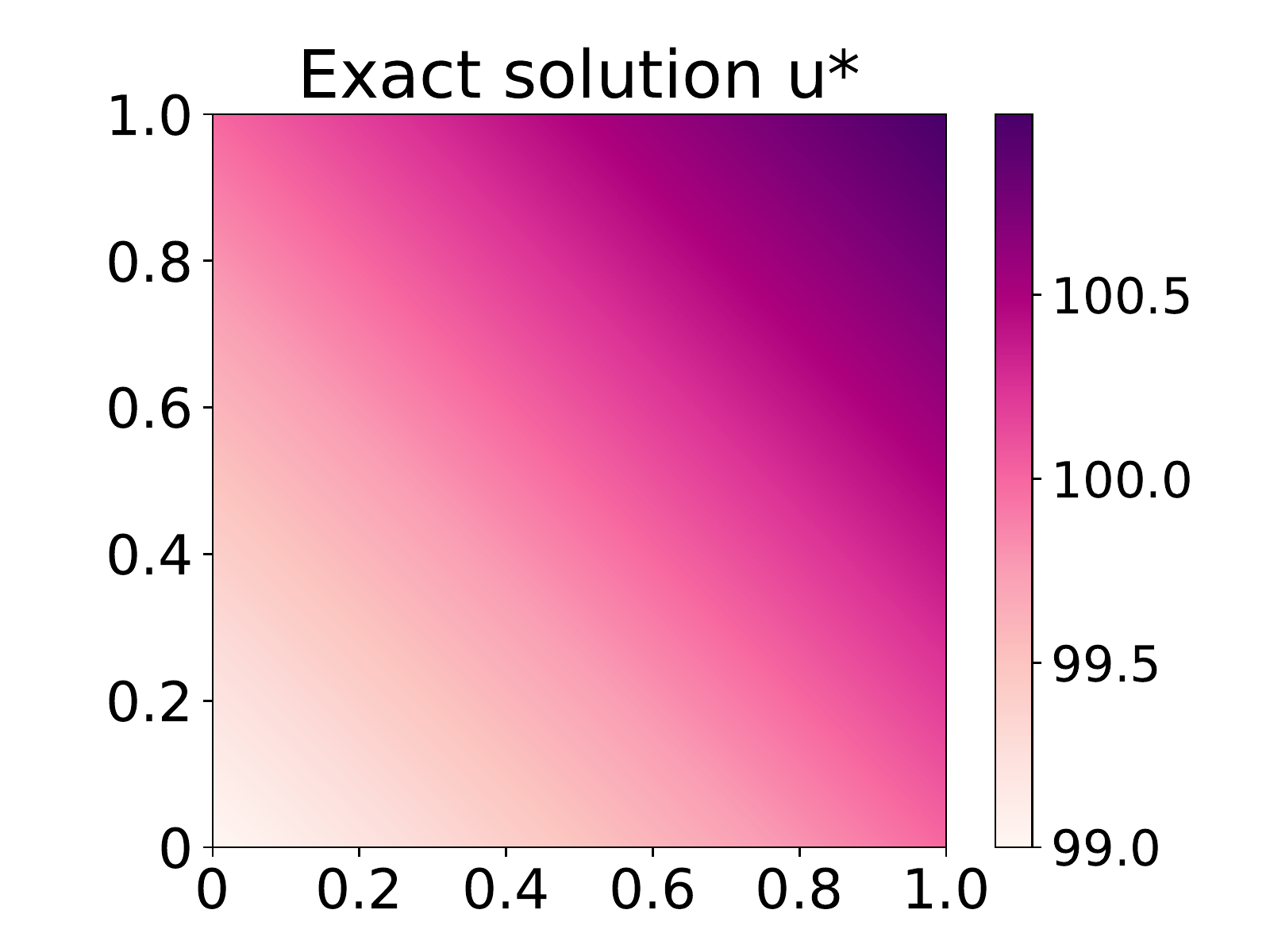}
    % \caption{}
\end{minipage}\hfill
\begin{minipage}{0.3\linewidth}
    \centering
    \includegraphics[width=1\linewidth]{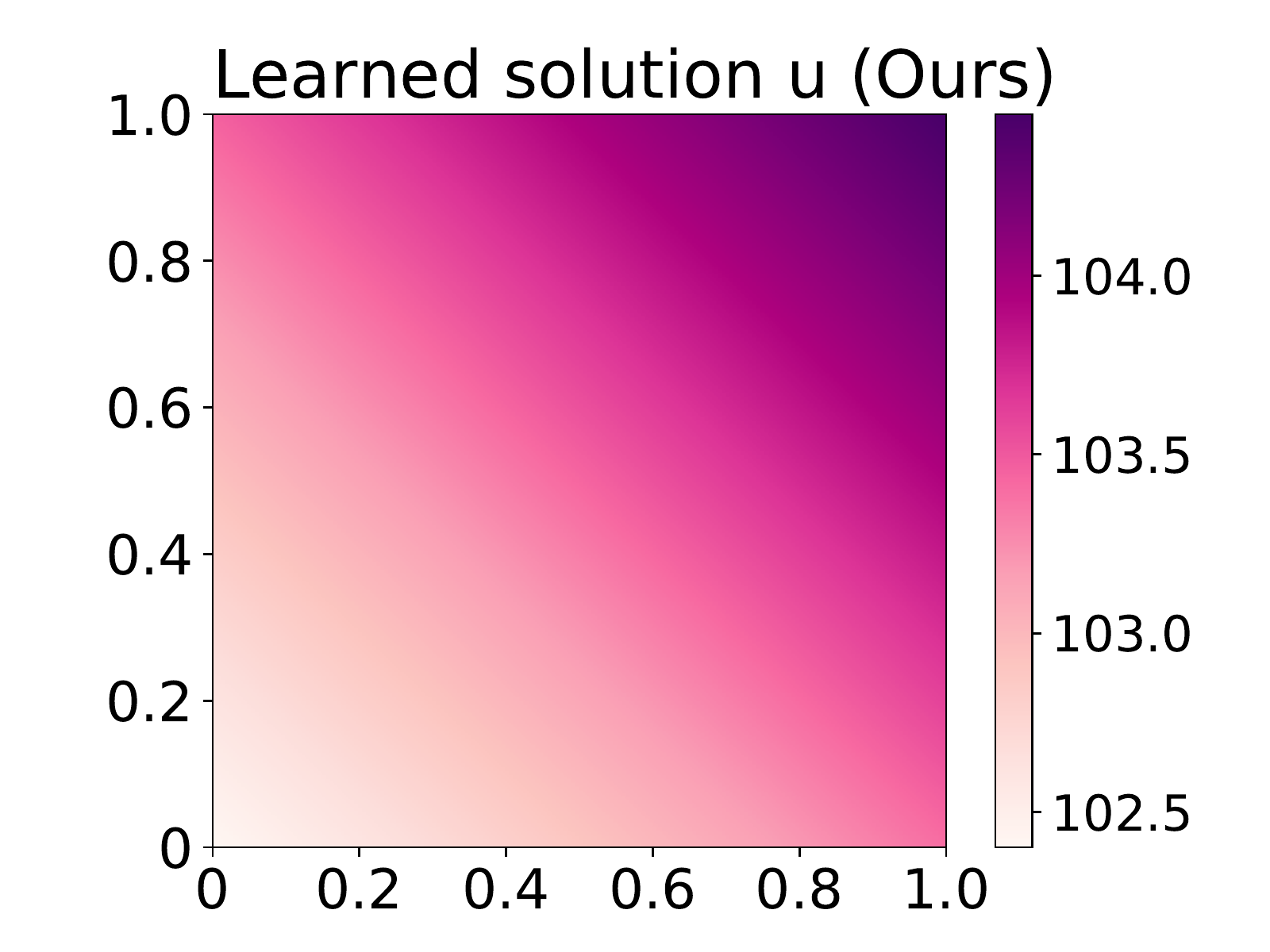}
    % \caption{}
\end{minipage}\hfill
\begin{minipage}{0.3\linewidth}
    \centering
    \includegraphics[width=1\linewidth]{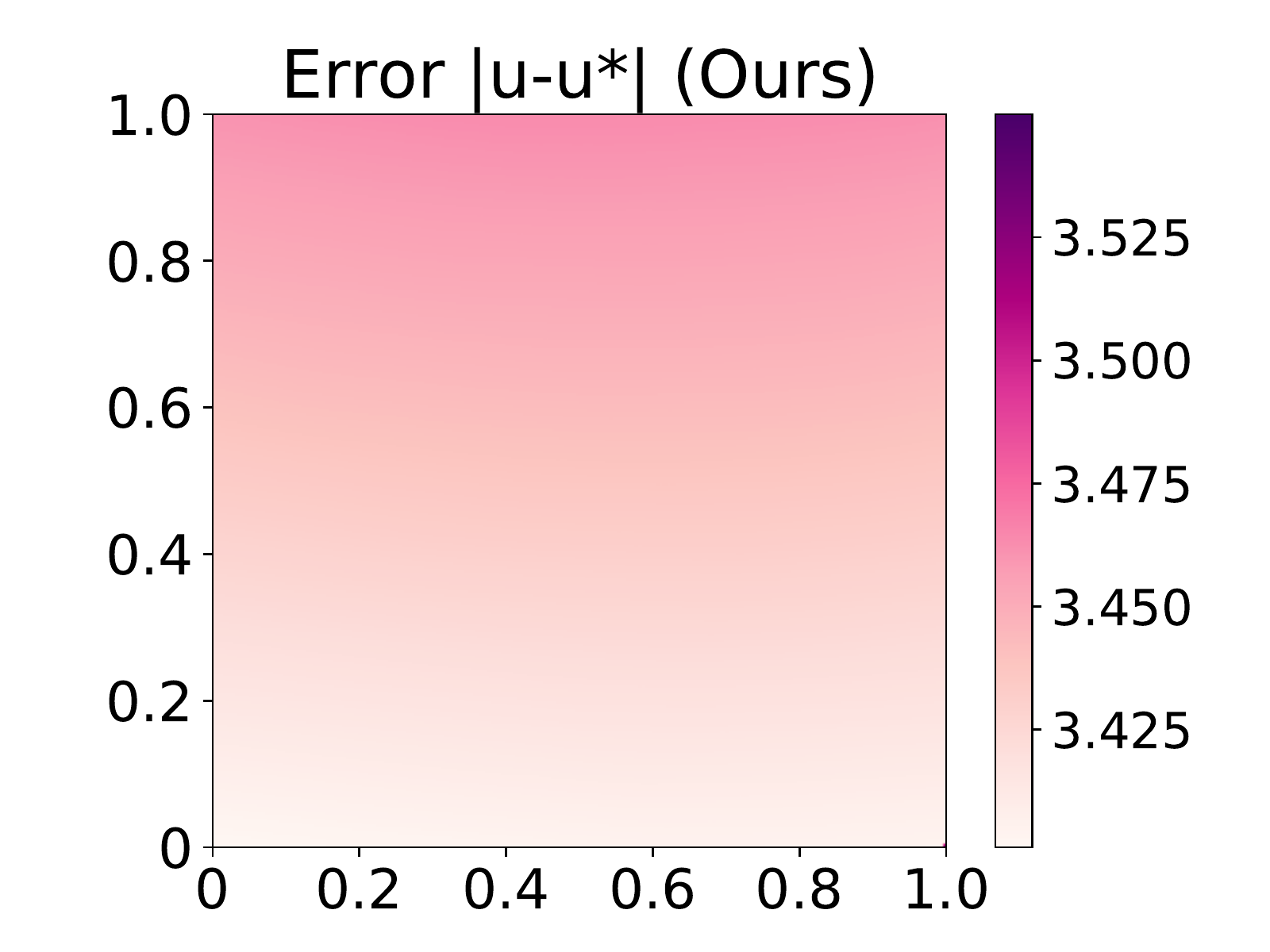}
    % \caption{}
\end{minipage}
\caption{\textbf{Visualization for the solution snapshot of Eq. (\ref{eq:hjb-exp2})}. $c$ is set to 1.75.}
\label{fig:1.75hjb}
\end{figure*}
To demonstrate the power of our method in solving general HJB Equations beyond classical LQG problems, we consider a more complicated HJB Equation as below:
\begin{equation}
\label{eq:hjb-exp2}
\begin{cases}
\displaystyle{
\partial_t u(x,t)+\Delta u(x,t)-\frac{1}{n}\sum_{i=1}^n |\partial_{x_i}u|^{c}=-2
} & (x,t)\in\sR^n\times[0,T]\\
\displaystyle{u(x,T)=\sum_{i=1}^n x_i} &x\in\sR^n
\end{cases},
\end{equation}

Eq. (\ref{eq:hjb-exp2}) has a unique solution $u(x,t)=x_1+\cdots+x_n+T-t$. We consider to solve Eq. (\ref{eq:hjb-exp2}) for different valued of $c$ using our method. We choose $c=1.25, 1.5$ and 1.75 in the experiment. 
The neural network used for training is a 5-layer MLP with 4096 neurons and $\mathrm{ReLU}$ activation in each hidden layer. 
The training recipe, including the optimizer, learning rate, batch size, and the total iterations are the same as those in Appendix \ref{app:exp-settings}. The number of inner-loop iterations $K$ is set to 5, and the inner-loop step size $\eta$ is searched from $\{2\mathrm{e}-1, 2\mathrm{e}-2, 2\mathrm{e}-3\}$.

Again, we examine the quality of the learned solution $u(x,t)$ by visualizing its snapshot on a two-dimensional space. Specifically, we consider the bivariate function $u(x_1, x_2,1, 1,\cdots,1; 0)$ and use a heatmap to show its function value given different $x_1$ and $x_2$. Figure \ref{fig:1.25hjb}-\ref{fig:1.75hjb} shows the ground truth $u^*$, the learned solutions $u$ of our method, and the point-wise absolute error $|u-u^*|$ given different values of $c$.

From the above visualization, we can see that our method can solve Eq. (\ref{eq:hjb-exp2}) for different values of $c$ effectively. Specifically, when $c=1.25$ or 1.5, the point-wise absolute error is less than 0.5 for most of the area shown in the figures. When $c=1.75$, the point-wise absolute error seems slightly larger, but it's still negligible compared with the scale of the learned solution. Thus, PINNs trained with our method fit the solution of Eq. (\ref{eq:hjb-exp2}) well, given different values of $c$.

We also compare our models with other baselines on these equations. The evaluation metric is $L^1$ relative error in the domain $[0,1]^n\times[0,1]$. The results are shown in Table \ref{tab:exp-hjb-c-apdx}. It's clear that our models outperform all the baselines on all these equations, showing the efficacy of our approach.

\begin{table}[tb]
\caption{\textbf{Experimental results of solving the high dimensional HJB equations.} $c$ is the parameter in Eq. (\ref{eq:hjb-exp2}). The dimensionality $n$ is 100. Performances are measured by the $L^1$ relative error in the domain $[0,1]^n\times[0,T]$. Best performances are indicated in \textbf{bold}.}

\label{tab:exp-hjb-c-apdx}
\centering
\begin{tabular}{cccc}\toprule
{Method} & {$c=1.25$}  &{$c=1.5$} & {$c=1.75$} \\
                        \midrule
Original PINN \cite{raissi2019physics}     & 1.11\%    & 3.82\%  &2.73\%   \\ 
Adaptive time sampling \cite{wight2020solving}  & 1.18\%	& 2.34\% & 7.94\% \\
Learning rate annealing  \cite{wang2021understanding}   & 0.98\%  &  1.13\%   &  1.06\%   \\
Curriculum regularization \cite{krishnapriyan2021characterizing}   &   6.27\% & 0.37\% &   3.51\%  \\ 
\midrule
Adversarial training (ours)  & \textbf{0.61\%}	& \textbf{0.15\%}  & \textbf{0.29\%}	\\ \bottomrule     
\end{tabular}
\end{table}

\subsection{Tracing loss and error during the training}

To give a more comprehensive comparison between original PINN and our method, we trace the loss and error during the training. % Since $L^2$ loss and $L^{\infty}$ loss have completely different scaling, plotting \textit{loss-iteration} curve for the two methods would be meaningless. We will only present the results in Table \ref{err-time-pinn} and \ref{err-time-ours}, which is also more accurate than curves.

\begin{table}[h]
\caption{\textbf{Error/loss-vs-time result of original PINN for Eq. (\ref{eq:lqg})}.}
\label{err-time-pinn}
\centering
\begin{tabular}{cccccc}\toprule
{\textbf{Iteration}}& {\textbf{1000}} &{\textbf{2000}}&{\textbf{3000}}&{\textbf{4000}}&{\textbf{5000}}\\\midrule
$L^2$ Loss      & 0.098 & 0.088 & 0.070 & 0.584 & 0.041 \\
$L^1$ Relative Error      & 6.18\% & 5.36\% & 3.86\% & 3.94\% & 3.47\% \\
$W^{1,1}$ Relative Error   & 17.53\% & 17.67\% & 14.83\% & 14.40\% & 11.31\% \\  \bottomrule      
\end{tabular}
\end{table}

\begin{table}[h]
\caption{\textbf{{Error/loss-vs-time result of our method for Eq. (\ref{eq:lqg})}.} }
\label{err-time-ours}
\centering
\begin{tabular}{cccccc}\toprule
{\textbf{Iteration}}& {\textbf{1000}} &{\textbf{2000}}&{\textbf{3000}}&{\textbf{4000}}&{\textbf{5000}}\\\midrule
$L^{\infty}$ Loss       & 11.841 & 9.352 & 2.404 & 1.605 & 0.711 \\
$L^1$ Relative Error    & 15.22\% & 4.26\% & 0.97\% & 1.10\% & 0.27\% \\
$W^{1,1}$ Relative Error   & 21.91\% & 18.62\% & 5.14\% & 4.96\% & 2.22\% \\  \bottomrule      
\end{tabular}
\end{table}

It is clear that for the original PINN approach, the $L^2$ loss drops very quickly during training, while its $W^{1,1}$ relative error remains high. This result indicates the optimization is successful in this experiment, and that the stability property of the PDE leads to the high test error. By contrast, our proposed training approach enables the test error goes down steadily during training, which aligns with the theoretical claims.

\section{Discussions on training with $L^p$ loss}
\label{app:supp-table2}
As is shown in the left panel of Table \ref{tab:ablate}, directly optimizing $L^p$ loss with large ${p}$ fails to achieve a good approximator. This might seem to contradict our theoretical analysis in section \ref{sec:theory}. However, there is actually \textbf{no contradiction between our theorems and empirical results}. Theorem \ref{thm:stb0} focuses on the approximation ability, which indicates that if we have a model whose $L^p$ loss is small, it will approximate the true solution well. The empirical results in Table \ref{tab:ablate} demonstrate the optimization difficulty of learning such a model. 

Intuitively, we randomly sample points in each training iteration in the domain/boundary to calculate the loss. When $p$ is large, most sampled points will hardly contribute to the loss, which leads to inefficiency and makes the training hard to converge. In Algorithm \ref{alg:main}, we adversarially learn the points with large loss values, making all of them contribute to the model update (Step 8), significantly improving the model training. 

Technically, directly applying Monte Carlo to compute $L^p$ loss in experiments will lead to large variance estimations. For a function $f$,
\begin{equation*}
    \int |f|^p \mathrm{d}x=\frac 1 N \sum_{i=1}^N |f(X_i)|^p+\mathcal O\left(\sqrt{\frac{\mathrm{Var} |f(X)|^p}{N}}\right),
\end{equation*}
where $\{X_i\}_{i=1}^N$ are i.i.d. samplings in the domain.

Thus, $||f||_p$ suffers from an $\mathcal O((\mathrm{Var} |f(X)|^p/N)^{1/2p})$ error.

As $p\to\infty, \mathrm{Var} |f(X)|^p\sim ||f||_{\infty}^{2p}$. Therefore, the errors for estimating both Eq.(\ref{eq:pinn-loss},\ref{eq:pinn-loss2}) and the $L^p$ norm of the residual are very large when $p$ is large.

\end{document}